\providecommand{\algorithmname}{Algorithm}
\theoremstyle{plain}
\newtheorem{thm}{\protect\theoremname}
\theoremstyle{plain}
\newtheorem{lem}[thm]{\protect\lemmaname}
\providecommand{\lemmaname}{Lemma}
\providecommand{\theoremname}{Theorem}
\begin{document}
\global\long\def\EST{\mathrm{PRD}}%
\global\long\def\ALG{\mathrm{ALG}}%
\global\long\def\OPT{\mathrm{OPT}}%
\global\long\def\eb{e_{B}}%
\global\long\def\eba{e_{B_{a}}}%
\global\long\def\norm{\frac{\eba^{\alpha/B_{a}}-1}{\eba^{\alpha}-1}}%
\global\long\def\apred{a_{(\EST)}}%
\global\long\def\aalgo{a_{(\mathrm{EXP})}}%
\global\long\def\wi#1{w_{at_{#1}}}%
\global\long\def\betai#1{\beta_{a}^{(t_{#1})}}%
\global\long\def\X{\mathbf{X}}%
\global\long\def\P{\mathbf{P}}%
\global\long\def\S{\mathbf{S}}%
\global\long\def\t{t}%
\global\long\def\i{i}%
\global\long\def\j{j}%

\title{Online Ad Allocation with Predictions\thanks{This work was supported in part by NSF CAREER grant CCF-1750333, NSF
grant III-1908510, and a Sloan Foundation fellowship.}}
\author{Fabian Spaeh\thanks{Department of Computer Science, Boston University. ${\tt fspaeh@bu.edu}$}
\and Alina Ene\thanks{Department of Computer Science, Boston University. ${\tt aene@bu.edu}$}}
\maketitle
\begin{abstract}
Display Ads and the generalized assignment problem are two well-studied
online packing problems with important applications in ad allocation
and other areas. In both problems, ad impressions arrive online and
have to be allocated immediately to budget-constrained advertisers.
Worst-case algorithms that achieve the ideal competitive ratio are
known, but might act overly conservative given the predictable and
usually tame nature of real-world input. Given this discrepancy, we
develop an algorithm for both problems that incorporate machine-learned
predictions and can thus improve the performance beyond the worst-case.
Our algorithm is based on the work of \citet{feldman09} and similar
in nature to \citet{mahdian07} who were the first to develop a learning-augmented
algorithm for the related, but more structured Ad Words problem. We
use a novel analysis to show that our algorithm is able to capitalize
on a good prediction, while being robust against poor predictions.
We experimentally evaluate our algorithm on synthetic and real-world
data on a wide range of predictions. Our algorithm is consistently
outperforming the worst-case algorithm without predictions.
\end{abstract}

\section{Introduction}

Advertising on the internet is a multi-billion dollar industry with
ever growing revenue,  especially as online retail gains more and
more popularity. Typically, a user arrives on a website which fills
an empty advertising spot (called an impression) by allocating it
instantly to one of many advertisers. Advertisers value users differently
based on search queries or demographic data and reveal their valuations
in auctions or through contracts with the website.  Formulations
of increasing complexity have been studied to capture this problem,
creating a hierarchy of difficulty~\citep{mehta13}. The most basic
is online bipartite matching, where each vertex on one side  arrives
online with all its adjacent edges and has to be matched immediately
to one of the vertices on the other side, which were supplied offline.
The problem and all generalizations admit a hard lower bound of $1-\frac{1}{e}$
due to the uncertainty about future vertices. Motivated by online
ad exchanges, where advertisers place bids on impressions, \citet{mehta07}
introduced the Ad Words problem, which is a generalization of online
bipartite matching where we charge each advertiser for the amount
they bid. Going beyond the Ad Words setting, \citet{feldman09} considered
the more expressive problems Display Ads and the generalized assignment
problem (GAP), and proposed algorithms for these settings with worst-case
guarantees. 

 Classic algorithms that defend against the worst case of $1-\frac{1}{e}$
are often overly conservative given that the real world does not behave
like a contrived worst-case instance.  Recently, researchers have
thus been trying to  leverage a prediction about some problem parameter
to go beyond the worst-case \citep{mitzenmacher22}. In the context
of ad allocation, a prediction can be the keyword distribution of
users on a certain day, or simply the advertiser allocation itself.
Such a prediction is readily obtainable in practice, for example through
learning on historic data. Two opposing properties are important:
The algorithm has to be consistent, meaning that its performance should
improve with the prediction quality. Simultaneously, we want the algorithm
to be robust against a poor prediction, i.e. not to decay completely
but retain some form of worst-case guarantee. This is particularly
important in the advertising business as much revenue is extracted
from fat tails containing special events that are difficult or impossible
to predict, but extremely valuable for the advertising business (e.g.
advertising fan merchandise after a team's victory). To this end,
\citet{mahdian07} developed a learning-augmented algorithm for the
Ad Words problem and \citet{medina17} show how to use bid predictions
to set reserve prices for ad auctions.  Inspired by their work, we
develop a learning-augmented algorithm for Display Ads and GAP. 

\paragraph{Our Contributions}

We design the first algorithms that incorporate machine-learned predictions
for the well-studied problems Display Ads and GAP. The two problems
are general online packing problems, that capture a wide range of
applications. There has been work on covering problems with predictions
by \citet{bamas20} who posed the existence of learning-augmented
algorithms for packing problems as an open question, which we partially
answer in this work.  Our algorithm follows a primal-dual approach,
which yields a combinatorial algorithm that is very efficient and
easy to implement. It is able to leverage predictions which can be
learned from historical data. Using a novel analysis, we show that
the algorithm is robust against bad predictions and able to improve
its performance with good predictions. In particular, we are able
to bypass the strong lower bound on the worst-case competitive ratio
for these problems. We experimentally verify the practical applicability
of our algorithm under various kinds of predictions on synthetic and
real-world data sets. Here, we observe that our algorithm is able
to outperform the baseline worst-case algorithm due to \citet{feldman09}
that does not use predictions, by leveraging predictions that are
obtained from historic data, as well as predictions that are corrupted
versions of the optimum allocation.

\subsection{Preliminaries}

\paragraph{Problem Definition}

In this work, we study the Display Ads problem and its generalization,
the generalized assignment problem (GAP) \citep{feldman09}. In Display
Ads, there are advertisers $a\in\left\{ 1,\dots,k\right\} $ that
are known ahead of time, each of which is willing to pay for at most
$B_{a}$ ad impressions. A sequence of ad impressions arrive online,
one at a time, possibly in adversarial order. When impression $t$
arrives, the values $w_{at}\ge0$ for each advertiser $a$ become
known. These values might be a prediction of click-through probability
or any valuation from the advertiser, but we treat them as abstractly
given to the algorithm. We have to allocate $t$ immediately to an
advertiser, or decide not to allocate it at all. The goal is to maximize
the total value $\sum_{a,t}x_{at}w_{at}$ subject to $\sum_{t}x_{at}\le B_{a}$
for all $a$, where $x_{at}=1$ if $t$ is allocated to $a$ and $x_{at}=0$,
otherwise. GAP is a generalization of Display Ads where the size that
each impression takes up in the budget constraint of an advertisers
is non-uniform. That is, each impression $t$ has a size $u_{at}\ge0$
for each advertisers $a$, and advertiser $a$ is only willing to
pay for a set of impressions whose total size is at most $B_{a}$.
More precisely, we require that $\sum_{a,t}x_{at}u_{at}\le B_{a}$.

\paragraph{Free Disposal Model}

In general, it is not possible to achieve any competitive ratio for
the online problems described above. Motivated by online advertising,
\citet{feldman09} introduced the free disposal model which makes
the problem tractable: when a new impression arrives, the algorithm
allocates it to an advertiser $a$; if $a$ is out of budget, we can
decide to dispose of an impression previously allocated to $a$. The
motivation for this model is that advertisers are happy to receive
more ads, as long as they are only charged for the $B_{a}$ most valuable
impressions. We refer the reader to the paper of \citet{feldman09}
for additional motivation of this model. In this work, we consider
both Display Ads and GAP in the free disposal model. 

\paragraph{Related Problems}

Display Ads and GAP are significant generalizations of well-studied
problems such as online bipartite matching and Ad Words. In online
bipartite matching, all values are $1$. In Ad Words, values and sizes
are identical. The latter setting allows for more specialized algorithms
that exploit the special properties of this problem, as we discuss
in more detail later. 

\paragraph{Algorithms with Predictions}

The algorithms we study follow under the umbrella of learning-augmented
algorithms that leverage machine learning predictions to obtain improved
performance. These were studied in an extensive line of work, see
e.g. the survey of \citet{mitzenmacher22}. Following this established
research, we use two important measures for the performance of the
algorithm: The \emph{robustness} $\ALG/\OPT$ indicates how well the
algorithm's objective value $\ALG$ performs against the optimum solution
$\OPT$; the \emph{consistency} $\ALG/\EST$ measures how close the
algorithm gets to the prediction's objective value $\EST$. Most learning-augmented
algorithms, including the one presented in this work, allow to control
the trade-off between robustness and consistency with a parameter
$\alpha$.

\subsection{Related Work }

\paragraph{Online Ad-Allocation with Predictions}

To the best to our knowledge, we are the first to study Display Ads
and GAP with predictions. Related problems were considered in the
work by \citet{mahdian07} and \citet{medina17} for Ad Words, and
by \citet{lattanzi20} for online capacitated bipartite matching.
 \citet{medina17} use bid predictions to set reserve prices for
ad auctions. \citet{lavastida21} incorporate predictions of the dual
variables into the proportional-weights algorithm \citep{agrawal18,karp90,lattanzi20}
for online capacitated bipartite matching. \citet{chen2021} analyze
an algorithm that uses degree predictions by matching arriving vertices
to vertices with minimum predicted degree. As noted above, Ad Words
and bipartite matching have additional structure, which is exploited
in these prior works. In particular, the algorithms proposed in these
works are not applicable to the more general problems Display Ads
and GAP. Our algorithm builds on the approach of \citet{mahdian07}
for the Ad Words problem, but substantial new ideas are needed in
the algorithm design and analysis, as discussed in more detail in
Section \ref{sec:algo}.

There has been further extensive work in the design of worst-case
algorithms and under random input models without predictions, which
we now summarize.

\paragraph{Worst-Case Algorithms}

The design of worst-case algorithms has been the focus of a long line
of work which can, for instance, be found in the survey of \citet{mehta13}.
A large focus has been on Ad Words. Several combinatorial algorithms
have been proposed, based on the work of \citet{karp90}. The combinatorial
approach is tailored to the structure of to these special cases. The
primal-dual approach is a more general approach that can handle more
complex problems such as Display Ads and GAP \citep{buchbinder07,feldman09}.
In this work, we build on the primal-dual algorithm of \citet{feldman09}
and show how to incorporate predictions into their framework. The
worst-case guarantee for online bipartite-matching, and therefore
for all generalizations, is $1-\frac{1}{e}$ \citep{karp90}. 

\paragraph{Stochastic Algorithms}

The lower bound of $1-\frac{1}{e}$ can be circumvented under distributional
assumptions. This has been extensively studied for online bipartite
matching \citep{karande11,feldman09-2,jin22}. Further work has been
done for the Ad Words problem \citep{devanur09,devanur12} with generalizations
due to \citet{feldman10} for a more general stochastic packing problem.

\section{\label{sec:our-algorithm} Our Algorithm}

\allowdisplaybreaks

\label{sec:algo}

In order to illustrate the algorithmic ideas and analysis, we consider
the simpler setting of Display Ads in this section. Our algorithm
for GAP is a generalization of this algorithm and we include it in
Appendix \ref{sec:gap}. For simplicity, we assume that our prediction
is a solution to the problem, which is as in prior work \citep{mahdian07}.
However, due to our general analysis framework, we also consider our
algorithm a starting point towards incorporating weaker predictors,
such as partial solutions or predictions of the supply, which we leave
for future work.

\paragraph{Prediction}

We assume that we are given access to a prediction, which is a fixed
solution to the problem, given as an allocation of impressions to
advertisers. With each impression $t$, we also receive the advertiser
$\EST(t)$ to which the prediction allocates $t$. In particular,
this means that the prediction does not have to be created up front,
but can be adjusted on the fly based on the observed impressions.

Given a solution to the problem, we could also consider the following
random-mixture algorithm: For some parameter $q\in[0,1]$, run the
worst-case algorithm; with probability $1-q$, follow the prediction
exactly. This algorithm achieves a robustness of $q\cdot(1-\frac{1}{e})$
and consistency of $q\cdot(1-\frac{1}{e})+1-q$. However, this is
only in expectation and against a weak adversary that is oblivious
to the algorithm's random choices. In contrast, Algorithm~\ref{alg:exp-avg}
is designed to obtain its guarantees deterministically against the
strongest possible adversary that can adapt to the algorithm's choices,
which is identical to the setup in \citet{mahdian07}. We observe
that Algorithm \ref{alg:exp-avg} clearly outperforms this random-mixture
algorithm in our experiments (cf. Section \ref{sec:experiments})
which shows that our stronger setting is indeed valuable in practice.
Furthermore, the random-mixture algorithm cannot be adapted to different
predictors that are not solutions, such as the ones mentioned above.

\begin{figure}
\begin{centering}
\noindent\fbox{\begin{minipage}[t]{1\columnwidth - 2\fboxsep - 2\fboxrule}%
\begin{center}
\vspace{-8pt}
\begin{minipage}[t]{0.3\columnwidth}%
\begin{align*}
\text{} & \text{Display Ads Primal}\\
\max & \sum_{a,\t}w_{a\t}x_{a\t}\\
\forall a\colon & \sum_{\t}x_{a\t}\leq B_{a}\\
\forall\t\colon & \sum_{a}x_{a\t}\leq1\\
\forall a,\t\colon & x_{a\t}\ge0
\end{align*}
\end{minipage}\hspace{2cm}%
\begin{minipage}[t]{0.3\columnwidth}%
\begin{align*}
 & \text{Display Ads Dual}\\
\min & \sum_{a}B_{a}\beta_{a}+\sum_{\t}z_{\t}\\
\forall a,\t\colon & z_{\t}\ge w_{a\t}-\beta_{a}\\
\forall a\colon & \beta_{a}\ge0\\
\forall\t\colon & z_{\t}\ge0
\end{align*}
\end{minipage}
\par\end{center}%
\end{minipage}}
\par\end{centering}
\caption{\label{fig:lp} Primal and dual of the Display Ads LP}
\end{figure}

\paragraph{Algorithm Overview}

\begin{algorithm}[t]
\noindent\begin{minipage}[t]{1\columnwidth}%
\begin{algorithmic}[1]
\STATE {\bf Input:} Robustness-consistency trade-off parameter $\alpha \in [1,\infty)$, advertiser budgets $B_a \in \mathbb N$
\STATE Define the constants $B \coloneqq \min_a B_a$,
	$e_{B} \coloneqq \left(1+\frac{1}{B}\right)^{B}$, and
	$\alpha_{B} \coloneqq B\left(\eb^{\alpha/B}-1\right)$
\STATE For each advertiser $a$, initialize $\beta_a \gets 0$ and allocate $B_a$ zero-value  impressions
\FORALL{arriving impressions $\t$}
\STATE $\apred \gets \EST(\t)$
\STATE $\aalgo \gets \arg\max_{a}\{w_{a\t}-\beta_{a}\}$ 
\IF{$\alpha_{B}(w_{\apred \t}-\beta_{\apred})\!\ge\!w_{\aalgo \t}-\beta_{\aalgo}$} \label{alg:selection-rule}
\STATE $a \gets \apred$
\ELSE
\STATE $a \gets \aalgo$
\ENDIF
\STATE Allocate $\t$ to $a$ and remove the least valuable impression currently assigned to $a$
\STATE Let $w_1 \le w_2 \le \cdots \le w_{B_a}$ be the values of impressions currently assigned to $a$ in non-decreasing order
\STATE Update $\displaystyle \beta_{a}\gets\frac{\eba^{\alpha/B_{a}}-1}{\eba^{\alpha}-1}\sum_{i=1}^{B_{a}}w_{i}\eba^{\alpha\left(B_{a}-i\right)/B_{a}}$  \label{alg:update}
\ENDFOR
\end{algorithmic}%
\end{minipage}

\caption{\label{alg:exp-avg} Exponential Averaging with Predictions}
\end{algorithm}
Our Algorithm, shown in Algorithm \ref{alg:exp-avg}, incorporates
predictions in the primal-dual algorithm of \citet{feldman09}. The
algorithm is based on the primal and dual LP formulations in Figure
\ref{fig:lp}. The algorithm constructs both a primal integral solution
which is an allocation of impressions to advertisers as well as a
dual solution, given explicitly by the dual variables $\beta_{a}$.
Analogously to other algorithms with predictions, the algorithm takes
as parameter the value $\alpha$; a larger value of $\alpha$ means
that we trust the prediction more. Similarly to the worst-case algorithm
of \citet{feldman09}, the dual variables $\beta_{a}$ play an important
role in the allocation of impressions. When an impression $t$ arrives,
we evaluate the discounted gain $w_{at}-\beta_{a}$ for each advertiser.
The worst-case algorithm allocates the impression to the advertiser
$\aalgo$ maximizing the discounted gain and only allocates if the
discounted gain is positive, i.e. the value exceeds the threshold
$\beta_{a}$. Our algorithm modifies this base algorithm to incorporate
predictions as shown in Line \ref{alg:selection-rule} and it follows
the prediction $\apred$ if its discounted gain is a sufficiently
high fraction of the discounted gain of $\aalgo$. We refer the reader
to the discussion below for more intuition on the choice of update.
After selecting the advertiser to which to allocate the impression
$\t$, we remove the least valuable impression currently assigned
to $a$ to make space for $\t$, and then allocate $\t$ to $a$.
Another crucial part of the algorithm is the update rule for $\beta_{a}$
in Line \ref{alg:update}, which is updated in a novel way based on
the parameter $\alpha$. More precisely, we set $\beta_{a}$ as an
exponential averaging of the values of impressions currently allocated
to $a$. Compared to the worst-case algorithm, we assign higher weight
to impressions with less value which is essential for leveraging predictions. 

To simplify the algorithm description and analysis, we initially allocate
$B_{a}$ impressions of zero value to each advertiser $a$. Furthermore,
we assume that there exists a ``dummy'' advertiser with large budget
that only receives zero value impressions. Instead of not allocating
an impression explicitly (either in the algorithm or the prediction),
we allocate to the dummy advertiser, instead.

\paragraph{Intuition for our Algorithm}

As noted above, we make two crucial modifications to the worst-case
algorithm to incorporate predictions: The advertiser selection (Line
\ref{alg:selection-rule}) and the update of $\beta_{a}$ (Line \ref{alg:update}).
We now provide intuition for both choices.

First, let us illustrate the difficulties in incorporating predictions
in the advertiser selection. Based on the worst-case algorithm, a
natural way to incorporate predictions is to allocate to the prediction
if the distorted gain $w_{\apred\t}-\frac{1}{\alpha}\cdot\beta_{\apred}$
exceeds the maximum discounted gain. However, this approach does not
work as shown by the following example: Consider a scenario where
the prediction suggests a constant advertiser $\apred$. Impressions
are split into two phases: Phase 1 contains $B_{\apred}$ impressions
$\t$ where only $\apred$ can derive a value of $w_{\apred\t}=1$
and $w_{a\t}=0$ for $a\not=\apred$. The algorithm allocates all
these impressions to $\apred$ and at the end of phase 1 has $\beta_{\apred}=1$.
In phase 2, a large amount of impressions with $w_{\apred\t}=1$ and
$w_{a\t}=\frac{\alpha-1}{2\alpha}$ for $a\not=\apred$ arrive. Since
the distorted gain $w_{\apred\t}-\frac{1}{\alpha}\beta_{\apred}=\frac{\alpha-1}{\alpha}$
exceeds the discounted gain $w_{a\t}-\beta_{a}=\frac{\alpha-1}{2\alpha}$
for $a\not=\apred$, the algorithm allocates to $\apred$ which yields
$0$ gain. However, we forfeit an unbounded amount of potential value
derived from allocating to advertisers $a\not=\apred$. An important
takeaway of this example is the crucial observation that the algorithm
should never allocate to the predicted advertiser if its discounted
gain is $0$. The selection rule in our algorithm is designed to meet
this important consideration.

Second, we need to change the update rule for $\beta_{a}$. We update
$\beta_{a}$ using a carefully selected exponential average of the
values of impressions currently assigned to $a$, that incorporates
our confidence in the prediction parameterized by $\alpha$. In contrast
to the worst-case algorithm, we weigh less valuable impressions more.
This lowers the threshold for the addition of new impressions, which
allows us to exploit more potential gain from the predicted advertiser. 

\begin{figure}
\begin{centering}
\includegraphics[width=0.85\linewidth]{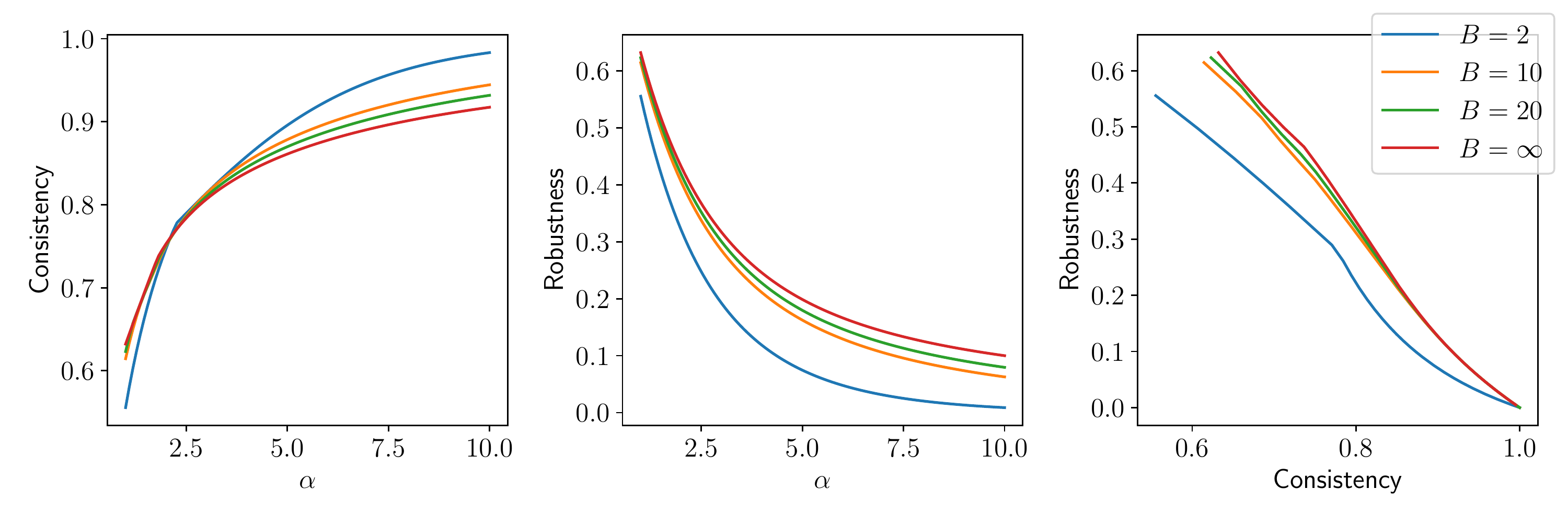}
\par\end{centering}
\vspace{-10pt}
\caption{\label{fig:cr-trade} We illustrate the consistency-robustness trade-off
of Algorithm \ref{alg:exp-avg} for various values of $\alpha$ and
budgets $B$.}
\end{figure}

\begin{thm}
\label{thm:main} Let $B\coloneqq\min_{a}B_{a}$ and $e_{B}\coloneqq\left(1+\frac{1}{B}\right)^{B}$.
Let $\OPT$ and $\EST$ be the values of the optimal and predicted
solutions, respectively. For any $\alpha\ge1$, Algorithm \ref{alg:exp-avg}
obtains a value of at least
\[
\ALG\ge\max\left\{ R(\alpha)\cdot\OPT,C(\alpha)\cdot\EST\right\} 
\]
where the robustness is
\[
R(\alpha)\coloneqq\frac{\eb^{\alpha}-1}{B\eb^{\alpha}\left(\eb^{\alpha/B}-1\right)}\to\frac{e^{\alpha}-1}{\alpha e^{\alpha}}\quad(B\to\infty)
\]
and the consistency is
\begin{align*}
C(\alpha)\coloneqq & \left(1+\frac{1}{\eb^{\alpha}-1}\max\left\{ \frac{1}{\alpha_{B}}\left(\eb^{\alpha}-\frac{\eb^{\alpha}-1}{\alpha_{B}}\right),\ln\left(\eb^{\alpha}\right)\right\} \right)^{-1}\\
 & \quad\to\left(1+\frac{1}{e^{\alpha}-1}\max\left\{ \frac{1}{\alpha}\left(e^{\alpha}-\frac{e^{\alpha}-1}{\alpha}\right),\alpha\right\} \right)^{-1}\quad(B\to\infty).
\end{align*}
\end{thm}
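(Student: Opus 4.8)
The plan is to run the standard online primal--dual method on the LP pair of Figure~\ref{fig:lp}, using the algorithm's variables $\beta_a$ together with the auxiliary duals $z_t:=w_{\aalgo t}-\beta_{\aalgo}=\max_a\{w_{at}-\beta_a\}$ evaluated at the arrival of $t$. First I would record two structural facts. \emph{Monotonicity:} each update in Line~\ref{alg:update} only raises $\beta_a$, because it replaces the least valuable held impression by a weakly larger one inside the exponential average; hence the final $\beta_a$ dominates the value used at time $t$, so $z_t\ge w_{at}-\beta_a$ holds for the final duals and the dual is feasible (the dummy advertiser with $\beta=0$ forces $z_t\ge0$). \emph{Selection bound:} for the advertiser $a$ actually chosen we always have $z_t\le\alpha_B(w_{at}-\beta_a)$ -- if $a=\apred$ this is exactly the test in Line~\ref{alg:selection-rule}, and if $a=\aalgo$ then $z_t=w_{at}-\beta_a\le\alpha_B(w_{at}-\beta_a)$ since $\alpha_B\ge1$ and $z_t\ge0$.

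For \textbf{robustness} I would prove a per-step inequality $\Delta\mathrm{primal}\ge R(\alpha)\,\Delta\mathrm{dual}$, where $\Delta\mathrm{primal}=w_{at}-w_1$ (new value minus displaced minimum) and $\Delta\mathrm{dual}=z_t+B_a\Delta\beta_a$; summing telescopes to $\ALG\ge R(\alpha)\,D\ge R(\alpha)\,\OPT$ by weak duality, since the primal starts at $0$ and the dual objective is $D=\sum_a B_a\beta_a+\sum_t z_t$. The engine is an exact evaluation of the update recurrence. In the extremal configuration where the displaced value is $0$, the arrival becomes the new maximum, and $B_a=B$, the update simplifies to $\beta_a^{\mathrm{new}}=\eb^{\alpha/B}\beta_a^{\mathrm{old}}+\tfrac{\eb^{\alpha/B}-1}{\eb^\alpha-1}\,w_{at}$, so that $B_a\Delta\beta_a=\alpha_B\beta_a^{\mathrm{old}}+\tfrac{\alpha_B}{\eb^\alpha-1}\,w_{at}$. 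Feeding the selection bound into $\Delta\mathrm{dual}$, the $\beta_a^{\mathrm{old}}$ terms cancel and $\Delta\mathrm{dual}\le\alpha_B\,\tfrac{\eb^\alpha}{\eb^\alpha-1}\,w_{at}$ against $\Delta\mathrm{primal}=w_{at}$, giving exactly $R(\alpha)=\tfrac{\eb^\alpha-1}{\alpha_B\eb^\alpha}$. It then remains to argue this configuration is worst case: a convexity/rearrangement argument shows that inserting below the maximum or displacing a positive value only improves the ratio, and a monotonicity check in $B_a$ reduces every advertiser to the normalization $B_a=B$ defining $\alpha_B$ and $\eb$.

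For \textbf{consistency} I would charge the predicted value advertiser by advertiser. Fix $a$ and its kept predicted impressions; each arrives with $\apred=a$. If the algorithm \emph{follows} the prediction, the value enters $a$'s holdings and feeds $\beta_a$; the exponential average -- which weights small values most -- caps how much can be absorbed before the following threshold $\alpha_B(w_{at}-\beta_a)$ collapses, and solving this saturation recurrence (e.g. for a stream of equal values one gets $\beta_a=w\,\tfrac{\eb^{\alpha m/B_a}-1}{\eb^\alpha-1}$ after $m$ steps) produces the logarithmic factor $\ln(\eb^\alpha)$. If the algorithm \emph{deviates}, then Line~\ref{alg:selection-rule} fails, so $w_{at}<\beta_a+z_t/\alpha_B$, bounding the forfeited predicted value by the current threshold plus a $1/\alpha_B$ fraction of the competing dual $z_t$. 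Summing both cases over all advertisers yields $\EST\le\Phi+\tfrac{1}{\alpha_B}\sum_t z_t$, where $\Phi$ collects the $\beta$-accumulation terms; feeding back the robustness relation $\sum_t z_t\le D\le\ALG/R(\alpha)$ together with $\ALG\ge\sum_a B_a\beta_a$ (exponential average below the arithmetic mean) converts the right-hand side into a multiple of $\ALG$. The two branches of the $\max$ in $C(\alpha)$ are precisely the deviation estimate $\tfrac{1}{\alpha_B}(\eb^\alpha-\tfrac{\eb^\alpha-1}{\alpha_B})$ and the saturation estimate $\ln(\eb^\alpha)$, and taking the worse of the two is what keeps the bound valid on every instance.

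The \textbf{main obstacle} is the consistency analysis: unlike robustness, it does not reduce to a clean single-step inequality, because an advertiser's stream interleaves followed and deviated predicted impressions with impressions routed to it as $\aalgo$, all while $\beta_a$ evolves nonlinearly under the exponential average. Controlling this requires treating $\beta_a$ as a state variable across the mixed sequence and extracting tight constants from the saturation recurrence in Line~\ref{alg:update}, then balancing the two resulting estimates to obtain the $\max$ in $C(\alpha)$. By comparison, the remaining ingredients -- dual feasibility, the extremality of the single-step worst case for robustness, and the $B_a$-versus-$B$ normalization -- are routine once the recurrence for $\beta_a$ is understood.
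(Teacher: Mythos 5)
Your robustness argument is essentially the paper's: the same dual assignment (up to replacing $z_{\t}=\alpha_{B}(w_{a\t}-\beta_{a}^{(\t-1)})$ by the max discounted gain, which is equivalent for the bound), the same monotonicity-of-$\beta_{a}$ justification for feasibility, the same recurrence $\beta_{a}^{(\t)}=\eba^{\alpha/B_{a}}\beta_{a}^{(\t-1)}+\frac{\eba^{\alpha/B_{a}}-1}{\eba^{\alpha}-1}w_{a\t}$ in the extremal configuration, and the same reduction to $B_{a}=B$; the steps you defer (insertion below the maximum via a lower bound $\hat{\beta}_{a}^{(\t-1)}$ on the old threshold, and the $w_{0}>0$ case) are exactly what the paper's Appendix~\ref{subsec:omitted-robustness} carries out. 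That half is fine.

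The consistency half has a genuine gap. Your concrete plan is: bound each forfeited predicted value by $\beta_{\apred}^{(\t-1)}+z_{\t}/\alpha_{B}$, then control $\sum_{\t}z_{\t}$ globally via $\sum_{\t}z_{\t}\le D\le\ALG/R(\alpha)$ and the threshold terms via $B_{a}\beta_{a}\le\ALG_{a}$. But $\frac{1}{\alpha_{B}}\cdot\frac{1}{R(\alpha)}=\frac{\eb^{\alpha}}{\eb^{\alpha}-1}$, so the $z$-term alone already costs $\approx\ALG$, and adding the kept predicted values ($\le\ALG$) and the accumulated thresholds ($\le\ALG$) yields only $\EST\lesssim3\,\ALG$ --- e.g.\ at $\alpha=1$ this gives consistency $\approx0.28$ versus the claimed $C(1)=1-1/e$, and it cannot approach the $C(\alpha)\to(1+1/\alpha)^{-1}$ behavior for large $\alpha$. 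The missing idea is that the deviation terms $\frac{1}{\alpha_{B}}(w_{\aalgo\t}-\beta_{\aalgo}^{(\t-1)})$ must \emph{not} be aggregated through the dual objective: the paper regroups them by the \emph{receiving} advertiser (using that both $\{\P_{a}\}_{a}$ and $\{\X_{a}\}_{a}$ partition the impressions, Lemma~\ref{lem:prd}), then exploits the explicit exponential-average form of $\beta_{a}^{(\t-1)}$ so that, after a rearrangement argument putting $\X_{a}$ in non-decreasing order (Lemma~\ref{lem:wlog}), the sum $\sum_{\t\in\X_{a}\setminus\P_{a}}(w_{a\t}-\beta_{a}^{(\t-1)})$ telescopes and all but the last $B_{a}$ values cancel (Lemma~\ref{lem:1}). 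The two branches of the $\max$ then come not from a ``deviation vs.\ saturation'' dichotomy per impression but from a mass-redistribution over the coefficients $\phi_{\i},\psi_{\i},\Omega_{a}$ followed by an optimization over the overlap fraction $\ell_{a}/B_{a}\in[0,1]$, whose boundary limit produces $\ln(\eb^{\alpha})$ and whose average produces $\frac{1}{\alpha_{B}}(\eb^{\alpha}-\frac{\eb^{\alpha}-1}{\alpha_{B}})$ (Lemmas~\ref{lem:com}--\ref{lem:bound}). Without this per-advertiser cancellation and the optimization over $\ell_{a}$, your outline proves some constant consistency but not $C(\alpha)$.
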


Figure \ref{fig:cr-trade} shows the above trade-off between consistency
and robustness. We can observe that the guarantee rapidly improves
as the minimum advertiser budget $B$ increases, which is very beneficial
both in theory and in practice. The trade-off is comparable to the
one obtained by \citet{mahdian07} for the more structured Ad Words
problem.

\paragraph{Comparison to Prior Work}

The work most closely related to ours is the work of \citet{mahdian07}
which incorporates predictions into the algorithm of \citet{mehta07}.
These works are for the related but different Ad Words problem, and
do not apply to Display Ads and GAP. In the Ad Words problem, the
value of an impression is equal to the price that the advertiser pays
for it, i.e., the size of the impression in the budget constraint.
In contrast, in Display Ads and GAP, the values and the sizes are
independent of each other (e.g., in Display Ads an impression takes
up $1$ unit of space in the advertiser's budget but it can accrue
an arbitrary value). Thus there is no longer any relationship between
the total value/profit of the impressions and the amount of the advertiser's
budget that has been exhausted. The Ad Words algorithms of \citet{mehta07,mahdian07}
crucially rely on this relationship both in the algorithm and in the
analysis. Due to the special structure of the problem, these algorithms
do not dispose of impressions and consider only the fraction of the
advertiser's budget that has been filled up in order to decide the
allocation and to incorporate the prediction. Moreover, the algorithm
and the analysis do not need to account for the loss incurred by disposing
impressions. These crucial differences require a new algorithmic approach
and analysis, which was given by \citet{feldman09} using a primal-dual
approach. Since we build on their framework as opposed to the work
of \citet{mehta07}, we also need a new approach for incorporating
predictions, as described above in the intuition for our algorithm.
Further, the primal-dual framework only helps in establishing the
robustness but not the consistency of our algorithm, and we develop
a novel combinatorial analysis for proving the consistency.

\section{Analysis}

In the following, we outline the analysis of Algorithm \ref{alg:exp-avg}
to prove Theorem \ref{thm:main}. Specifically, we show separately
that $\ALG/\OPT\le R(\alpha)$ (robustness) and $\ALG/\EST\le C(\alpha)$
(consistency).

\paragraph{Notation}

We denote with superscript $(\t)$ the value of variables after allocating
impression $\t$. E.g. $a^{(\t)}$ is the algorithm's choice of advertiser
for impression $\t$ and $\beta_{a}^{(\t)}$ is the dual variable
after allocating $\t$. Let
\[
\X_{a}\coloneqq\big\{\t:a^{(\t)}=a\big\}\quad\textrm{and}\hspace{1em}\P_{a}\coloneqq\big\{\t:\apred^{(\t)}=a\big\}
\]
be the impressions that were assigned to $a$ and potentially disposed
of, and the impressions that the prediction recommended for assignment
to $a$, respectively. We set $I_{a}\coloneqq\left|\X_{a}\right|$
and $\ell_{a}\coloneqq\left|\P_{a}\cap\X_{a}\right|$ as the size
of the overlap. Let also $T$ be the last impression and $\S_{a}$
is the final allocation, i.e., the $B_{a}$ impressions allocated
to $a$ at the end of the algorithm. Finally, let $\ALG$ and $\EST$
be the total value of the solution created by the algorithm and the
prediction, respectively. 

\paragraph{Robustness}

Our proof for robustness closely follows the analysis in \citet{feldman09}
by using the primal-dual formulation of the problem, with some additional
care that is needed not to violate dual feasibility whenever we follow
the prediction. We defer the full proof to Appendix \ref{subsec:omitted-robustness}. 

\paragraph{Consistency}

We now show the consistency, i.e. that $\EST$ is bounded by a multiple
of $\ALG$. The complete analysis can be found in Appendix \ref{subsec:omitted-consistency},
while we only give a high-level overview here.

A common approach in the analysis of online primal-dual algorithms
is to employ a local analysis where, in each iteration, we relate
the change in the value of the primal solution to the change in the
dual solution \citep{buchbinder09}. However, it is not clear how
to employ such a strategy in our setting due to the complications
arising from our algorithm following a mixture of the worst-case and
predicted solutions.  We overcome this challenge using a novel global
analysis that relates the final primal value to the prediction's value. 

We now provide a high level overview of our global analysis. We start
by noting that the objective value of our algorithm and the prediction
is the sum of the impression values allocated to each advertiser,
i.e.
\[
\ALG=\sum_{a}\sum_{t\in\S_{a}}w_{at}\qquad\text{and}\qquad\EST=\sum_{a}\sum_{t\in\P_{a}}w_{at}
\]
However, note that $\EST$ contains values of impression that do not
appear in $\ALG$ since we ignore the prediction in some iterations,
or already disposed of the impression.  It is further unclear which
advertiser to ``charge'' for an impression that does not agree with
the prediction.

Consider an impression for which we followed the worst-case choice
$\aalgo$ that maximizes the discounted gain instead of the prediction.
Due to our selection rule, the reason for this departure is due to
the discounted gains satisfying the following key inequality: 
\begin{equation}
w_{\apred}\le\frac{1}{\alpha_{B}}\left(w_{\aalgo}-\beta_{\aalgo}\right)+\beta_{\apred}.\label{eq:22}
\end{equation}
By using this important relationship, we upper bound the value $\EST$
of the prediction using a linear combination of the values of impressions
allocated by the algorithm (but possibly disposed of) and the thresholds.
By grouping the impression values and dual variables by advertiser
in the resulting upper bound, we are able to correctly charge each
impression for which we deviated from the prediction to a suitable
advertiser, thus overcoming one of the challenges mentioned above.
To summarize, using (\ref{eq:22}) we obtain a bound $\EST\le\sum_{a}\EST_{a}$
where each $\EST_{a}$ is a linear combination of impression values
and dual variables for advertiser $a$, and we want to compare this
quantity to $\ALG_{a}\coloneqq\sum_{t\in S_{a}}w_{at}$.

Let us now consider a fixed advertiser $a$, and relate $\EST_{a}$
to $\ALG_{a}$. At this point, a key difficulty is that the amount
$\EST_{a}$ that we charged to advertiser $a$ involves the threshold
$\beta_{a}$. By definition, the threshold is a convex combination
of the values of the impressions in the algorithm's allocation. This
gives us that $\EST_{a}$ is a linear combination of only the weights,
but this cannot be readily compared to $\ALG_{a}$ due to the complicated
structure of the coefficients in the former. To bridge this gap, we
show a useful structural property (Lemma \ref{lem:wlog}) that gives
us the following upper bound on $\EST_{a}$: If we define $\t_{\i}$
as the $\i$-th impression allocated to $a$, we have
\begin{equation}
\EST_{a}\le\sum_{\i=I_{a}-B_{a}+1}^{I_{a}-\ell_{a}}\phi_{\i}\wi{\i}+\sum_{\i=I_{a}-\ell_{a}+1}^{I_{a}}\psi_{\i}\wi{\i}+\wi{I_{a}-B_{a}}\Omega_{a}\label{eq:23}
\end{equation}
for appropriate coefficients $\phi_{i}$, $\psi_{i}$, and $\Omega_{a}$.
The RHS of this inequality accounts for the value as follows: the
first sum is for the impressions that agree with the prediction; the
second sum is for the impressions that disagree with the prediction;
the final term accounts for the values of all impressions that were
disposed. As this is (almost) a linear combination over impression
values that all appear in $\ALG$ (except $w_{at_{I_{a}-B_{a}}}$),
we could bound the ratio $\EST_{a}/\ALG_{a}$ by the maximum coefficient
in (\ref{eq:23}). However, this does not lead to a constant competitive
ratio. We therefore need a delicate analysis (Lemma \ref{lem:avgfac})
to balance the coefficients as uniformly as possible among all values,
where we use properties of the coefficients $\phi_{i}$, $\psi_{i}$,
and $\Omega_{a}$ and the structural property we derived in Lemma
\ref{lem:wlog}.

\section{Experimental Evaluation}

\begin{figure}[t]
\begin{centering}
\vspace{-4pt}
\includegraphics[width=0.95\linewidth]{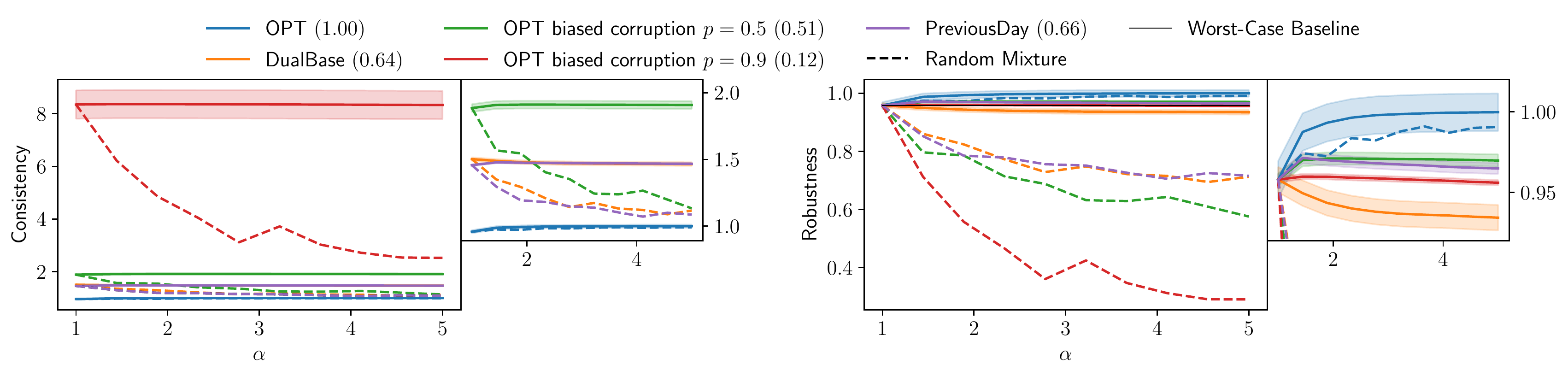}\vspace{-4pt}
\par\end{centering}
\begin{centering}
\includegraphics[width=0.95\linewidth]{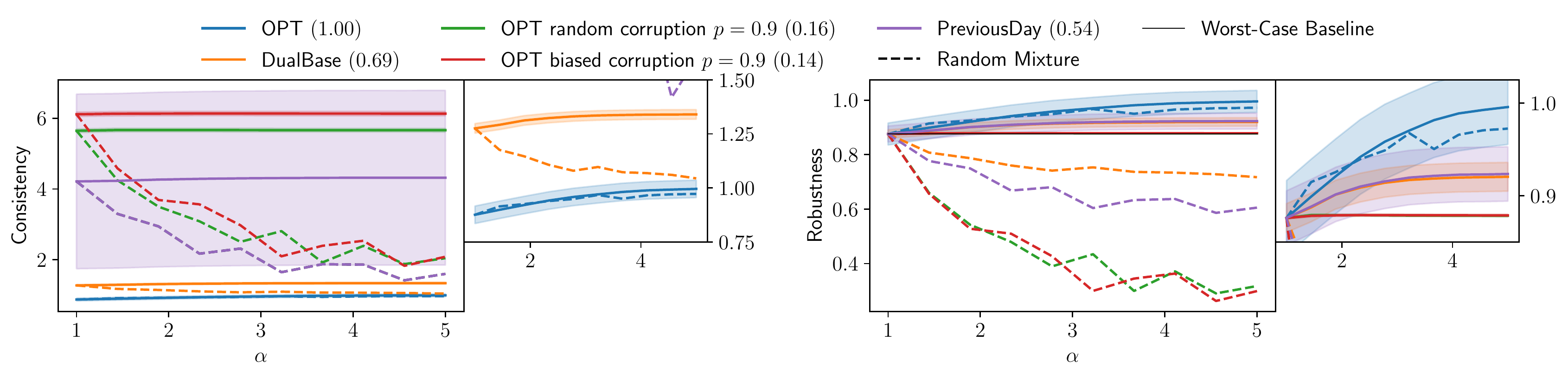}\vspace{-8pt}
\par\end{centering}
\caption{\label{fig:real-world-1} Experimental results on iPinYou (top) and
Yahoo datasets (bottom) using different predictions for varying $\alpha$.
The solid lines show our algorithm and the dashed lines the random-mixture
algorithm. We run the algorithms 5 times and report average for both
algorithms and the standard deviation only for our algorithm, to avoid
clutter. For the robustness, the black line shows the performance
of the worst-case algorithm without predictions due to \citet{feldman09}.
For each predictor, we also include in parentheses the average competitive
ratio over 5 runs (e.g. PreviousDay (0.66) indicates that the average
competitive ratio for the solution of the Previous Day prediction
was $0.66$). We run the random-mixture algorithm for each prediction
and $q\protect\coloneqq1/\alpha$.}
\end{figure}
\begin{figure}[t]
\begin{centering}
\vspace{-4pt}
\includegraphics[width=0.8\linewidth]{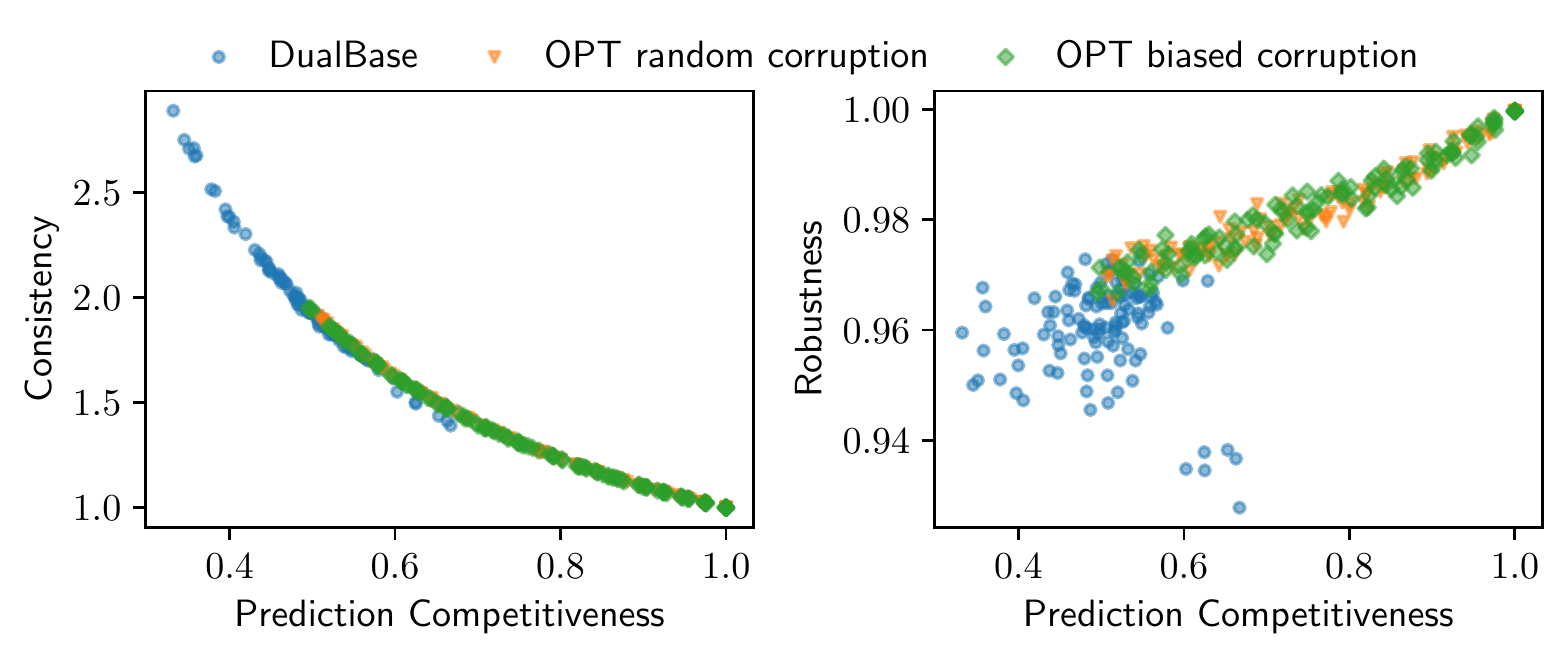}
\par\end{centering}
\begin{centering}
\includegraphics[width=0.8\linewidth]{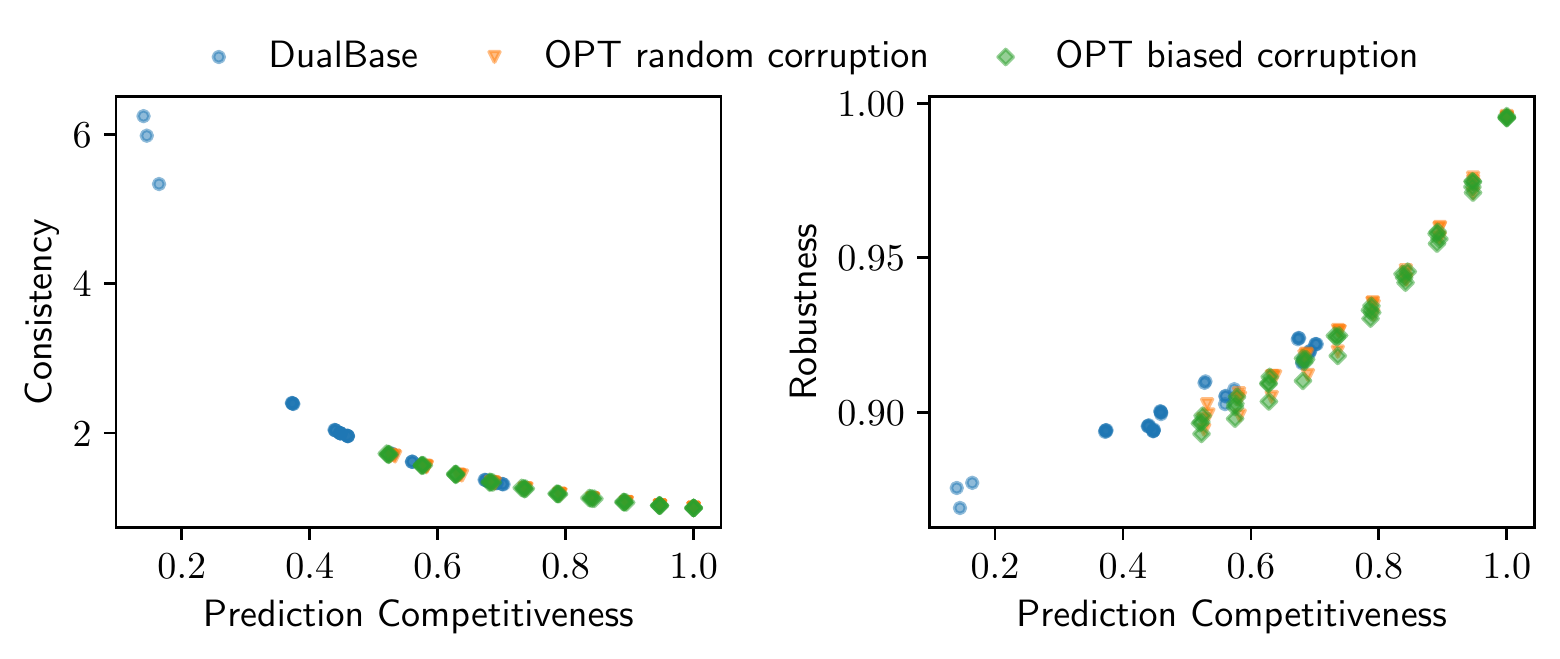}\vspace{-8pt}
\par\end{centering}
\caption{\label{fig:real-world-2} Performance of our algorithm on the iPinYou
(top) and Yahoo (bottom) datasets for $\alpha=5$ with predictions
of varying quality obtained as follows: We vary the sample fraction
$\epsilon\in\left[0,1\right]$ for the dual base algorithm and $p\in\left[0,1\right]$
for random and biased corruptions.}
\end{figure}
\begin{figure}[t]
\begin{centering}
\vspace{-4pt}
\includegraphics[width=0.85\linewidth]{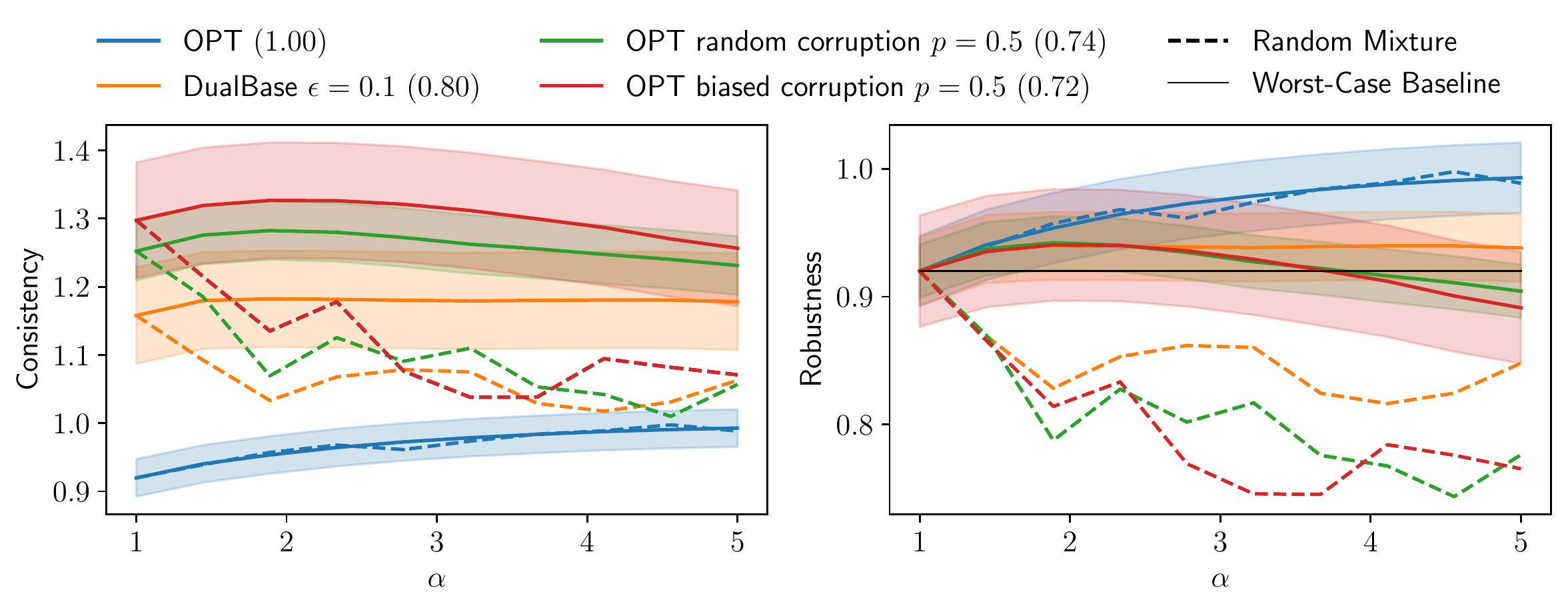}
\vspace{-8pt}
\par\end{centering}
\caption{\label{fig:synthetic-1-1-1} Experimental results for varying values
of $\alpha$ on synthetic data with $12$ advertisers and $2000$
impressions of $10$ types, where we report the same quantities as
in Figure \ref{fig:real-world-1}. We use different predictors with
$\sigma=1.5$. }
\end{figure}

\label{sec:experiments}
\global\long\def\numimps{T}%

We now evaluate the practical applicability of Algorithm~\ref{alg:exp-avg}.
We compare Algorithm~\ref{alg:exp-avg} to the worst-case algorithm
without predictions due to \citet{feldman09} and the random-mixture
algorithm described in Section \ref{sec:our-algorithm}. We use multiple
forms of predictions, which we describe below. We showcase results
on real-world and synthetic data, with further experimental results
in Appendix \ref{sec:apx-exp}.

\paragraph{Predictors}

We consider variations of the following predictors. Recall that each
predictor is a fixed allocation of impressions to advertisers that
is revealed online. 
\begin{enumerate}
\item \emph{Optimum Solution ($OPT$):} The optimum solution is obtained
by solving the problem optimally offline using an LP solver. To evaluate
our algorithm's robustness, we also consider a version of the optimum
solution where a random $p$-fraction of the allocations has been
corrupted. Under a \emph{random corruption}, we corrupt by reallocating
to randomly chosen advertisers. For a \emph{biased corruption}, we
sample a random permutation offline and corrupt by reallocating according
to this permutation, generating a more adversarial corruption.
\item \emph{Dual Base: } We generate a solution using the algorithm of
\citet{devanur09}\emph{. }Here, we sample the initial $\epsilon$-fraction
of all impressions and optimally solve a scaled version of the dual
LP to obtain the dual variables $\left\{ \beta_{a}\right\} _{a}$.
We get a primal allocation for all future impressions $t$ by allocating
to the advertiser $a$ that maximizes the discounted gain $w_{at}-\beta_{a}$,
but do not update $\beta_{a}$. 
\item \emph{Previous Day:} We look at all impressions from the previous
day and optimally solve the dual LP offline to obtain dual variables
$\left\{ \beta_{a}\right\} _{a}$. To get a prediction for today's
impressions, we use the same algorithm as above and allocate to the
advertiser maximizing the discounted gain. 
\end{enumerate}

\paragraph{Real-World Instances}

We generate two instances for Display Ads based on the real-word datasets
iPinYou \citep{zhang14} and Yahoo \citep{yahoo}.
\begin{enumerate}
\item \emph{iPinYou:} The iPinYou dataset contains real-time bidding data
from the iPinYou online advertisement exchange. This dataset contains
40372 impressions over 7 days and 301 advertisers. Each advertiser
places multiple bids for an impression. We use this bid data to construct
the dataset. Specifically, we set the maximum of those bids as the
advertiser's valuation. We assume a constant budget for each advertiser
of 10 impressions as it makes for an interesting instance.
\item \emph{Yahoo:} We replicate the experimental setup of \citet{lavastida21}
who generated an instance of online capacitated bipartite matching
based on a Yahoo dataset \citep{yahoo}. Capacitated online bipartite
matching is a special case of Display Ads where all impression values
are $1$. Based on this dataset, we create an instance of capacitated
online bipartite matching with around 2 million impressions and 16268
advertisers for 123 days. We defer the details to Section~\ref{subsec:yahoo-description}.
\end{enumerate}

\paragraph{Synthetic Instances}

We obtain random synthetic data for a fixed set of $\numimps$ impressions
and $k$ advertisers as follows. We first generate a set of impression
types, whereas each type is meant to model a group of homogenous users
(e.g. similar demographic or using similar keywords) and advertisers
value users from the same group identically. We sample an advertiser's
valuation for each impression type from an exponential distribution.
To represent a full day of impressions, we assume that display times
of impressions from a certain type are distributed according to a
Gaussian with some uniformly random mean in $\left[0,1\right]$ and
a fixed standard deviation $\sigma$. We then sample the same number
of impressions from each type along with display times, and order
them in increasing display time. Finally, we equip each advertiser
with some fixed budget that makes for a difficult instance.

\paragraph{Results}

Figure \ref{fig:real-world-1} and Figure \ref{fig:synthetic-1-1-1}
show results for real-world and synthetic instances, respectively.
For each predictor, we show the consistency (left) and robustness
(right) for varying $\alpha$. Figure \ref{fig:real-world-2} shows
results for $\alpha=5$ with predictions of different quality,  as
described in the figure caption. 

\paragraph{Discussion}

We make several observations. On real-world instances, there is only
a single prediction for which the performance of our algorithm drops
below the worst-case algorithm, even for heavily corrupted predictions.
E.g., on the iPinYou dataset, our algorithm is still able to leverage
a prediction with a corruption rate as high as $p=50\%$, and improve
upon the worst-case algorithm (see the green and black lines in the
top right plot of Figure \ref{fig:real-world-1}). Moreover, for higher
performing predictors, the improvement over the worst-case algorithm
is significantly higher in both datasets. See for example, the Previous
Day predictions on the iPinYou dataset (the purple and black lines
in the top right plot of Figure \ref{fig:real-world-1}) or Dual Base
predictions on the Yahoo dataset (the orange and black lines in the
bottom right plot of Figure \ref{fig:real-world-1}). Second, as we
can see in Figure \ref{fig:real-world-1}, the consistency of our
algorithm for predictors except the optimum is always above 1, and
is significantly high for artificially corrupted predictions. The
robustness of our algorithm remains high in almost all cases, even
for the most heavily corrupted predictions (cf. the right side of
Figure \ref{fig:real-world-2}).   On synthetic instances, we observe
that our algorithm is robust against both random and biased corruption,
as the robustness does not drop to the prediction's low competitiveness
of $\approx0.7$. Furthermore, our algorithm performs well in combination
with the dual base prediction for $\epsilon=0.1$ (the orange line
in Figure \ref{fig:synthetic-1-1-1}), even though the first $200$
impressions are not representative of all impressions. On all instances,
we clearly outperform the random-mixture algorithm which merely interpolates
between the objective values of the worst-case algorithm and the prediction.

\section*{Conclusion}

We introduce a novel learning-augmented algorithm for Display Ads
and GAP with free disposal. Our algorithm is based on the primal-dual
approach and can be efficiently implemented in practice. We show its
robustness using the primal-dual method similar to \citet{feldman09}
and use a novel combinatorial proof to show its consistency. Finally,
our experiments show the applicability of our algorithm, which is
able to improve beyond the worst-case performance using readily available
predictions. \textbf{Limitations:} Our algorithm requires a strong
prediction that is a solution to the problem. We leave weaker predictions,
such as partial solutions or predictions of the supply, for future
work.

\bibliographystyle{plainnat}
\bibliography{exp-avg-arxiv}

\newpage

\appendix

\section{Omitted Proofs}

\label{sec:omitted}

We will need the following helper Lemma in the proofs of consistency
and robustness.
\begin{lem}
\label{lem:param-props} Recall that $e_{B}\coloneqq\left(1+\frac{1}{B}\right)^{B}$
and $\alpha_{B}\coloneqq B\left(\eb^{\alpha/B}-1\right)$. We have
\begin{enumerate}
\item $\eb\le\eba$ and
\item $\alpha_{B}\ge\alpha_{B_{a}}.$
\end{enumerate}
\end{lem}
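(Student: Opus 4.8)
The plan is to prove both inequalities by treating $B$ and $B_a$ as two values of a continuous parameter $x>0$ with $B\le B_a$ (legitimate since $B=\min_a B_a$, so $B\le B_a$ for every advertiser $a$), and then establishing the appropriate monotonicity of the underlying real functions; monotonicity on $(0,\infty)$ immediately yields the desired integer comparison.

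For part (1), I would show that $f(x)\coloneqq(1+1/x)^{x}$ is increasing in $x$, so that $B\le B_a$ gives $e_B=f(B)\le f(B_a)=e_{B_a}$. Passing to logarithms, set $g(x)\coloneqq x\ln(1+1/x)$; differentiating gives $g'(x)=\ln(1+1/x)-\frac{1}{x+1}$. The elementary inequality $\ln(1+s)>\frac{s}{1+s}$ for $s>0$, applied with $s=1/x$, yields exactly $\ln(1+1/x)>\frac{1}{x+1}$, so $g'(x)>0$ and $f=e^{g}$ is increasing, which is (1).

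For part (2), the key simplification is $e_x^{\alpha/x}=\big((1+1/x)^{x}\big)^{\alpha/x}=(1+1/x)^{\alpha}$, so that $\alpha_x=x\big((1+1/x)^{\alpha}-1\big)$. Substituting $t=1/x$ converts this into $\phi(t)\coloneqq\frac{(1+t)^{\alpha}-1}{t}$, which is precisely the slope of the secant of $s\mapsto(1+s)^{\alpha}$ over $[0,t]$. Since $\alpha\ge1$, this map is convex (its second derivative $\alpha(\alpha-1)(1+s)^{\alpha-2}$ is nonnegative), and for a convex function the secant slope through a fixed endpoint is nondecreasing, so $\phi$ is increasing in $t$. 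Because $t=1/x$ decreases as $x$ increases, $\alpha_x$ is decreasing in $x$; hence $B\le B_a$ gives $\alpha_B=\alpha_x|_{x=B}\ge\alpha_x|_{x=B_a}=\alpha_{B_a}$, which is (2).

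The computations are routine once these reformulations are in place, so I expect the only real care to lie in tracking the direction of monotonicity across the substitution $t=1/x$ (which flips increasing and decreasing) and in invoking the secant-slope monotonicity correctly. An alternative to the convexity argument would be to differentiate $\phi$ directly and reduce to showing $(1+t)^{\alpha}\big(1-\tfrac{\alpha t}{1+t}\big)\le1$, but the secant/convexity route avoids this calculation entirely and makes the role of the hypothesis $\alpha\ge1$ transparent.
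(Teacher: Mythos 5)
Your proposal is correct. The overall strategy matches the paper's: both treat $B$ and $B_a$ as values of a continuous parameter and reduce the lemma to monotonicity of $(1+1/x)^x$ and of $\alpha_x = x\big((1+1/x)^{\alpha}-1\big)$ (the paper, like you, implicitly uses $e_x^{\alpha/x}=(1+1/x)^{\alpha}$). The execution differs in two places. For part (1), the paper simply cites as well known that $e_B$ increases to $e$, whereas you supply the short proof via $g'(x)=\ln(1+1/x)-\tfrac{1}{x+1}>0$; this is a harmless strengthening. For part (2), the paper differentiates $\alpha_B$ in $B$ and bounds the derivative above by $0$ using Bernoulli's inequality $1+rx\le(1+x)^r$ for $r\notin(0,1)$, while you substitute $t=1/x$, recognize $\phi(t)=\frac{(1+t)^{\alpha}-1}{t}$ as the secant slope of the convex map $s\mapsto(1+s)^{\alpha}$ through $s=0$, and invoke the monotonicity of secant slopes. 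These are two faces of the same underlying fact (Bernoulli's inequality is exactly the tangent-line/convexity property of $(1+s)^{\alpha}$, and both arguments need $\alpha\ge1$ for the same reason), but your route avoids the derivative computation and the slightly delicate algebraic regrouping the paper performs before applying Bernoulli, at the cost of invoking the three-slope lemma. Either version is complete and rigorous.
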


\begin{proof}
It is well known that $\eb$ converges to $e$ from below for $B\to\infty$.
Furthermore, we can show that $\alpha_{B}$ is decreasing in $B$
for all $\alpha\ge1$ by taking the derivative
\begin{align*}
\frac{\partial}{\partial B}\alpha_{B}=\left(1+\frac{1}{B}\right)^{\alpha}-1-\frac{\alpha}{B}\left(1+\frac{1}{B}\right)^{\alpha-1} & =\left(1+\frac{1}{B}\right)^{\alpha-1}\left(1+\frac{1}{B}\left(1-\alpha\right)\right)-1\\
 & \le\left(1+\frac{1}{B}\right)^{\alpha-1}\left(1+\frac{1}{B}\right)^{1-\alpha}-1=0
\end{align*}
where the bound follows from Bernoulli's inequality, which states
that $1+rx\le\left(1+x\right)^{r}$ for $x\ge-1$ and $r\in\mathbb{R}\setminus(0,1)$.
\end{proof}

\subsection{\label{subsec:omitted-robustness} Proof of Theorem \ref{thm:main}
(Robustness)}

We write $P$ and $D$ to denote the objective value of the primal
and dual solutions, i.e. $P=\sum_{a}\sum_{t\in\S_{a}}w_{at}$ and
$D=\sum_{a}B_{a}\beta_{a}+\sum_{\t}z_{\t}$ where $z_{t}$ is specified
in the following proof to ensure feasibility. We can show that after
the allocation of each impression $t$,
\[
\Delta P\ge\frac{\eb^{\alpha}-1}{B\eb^{\alpha}\left(\eb^{\alpha/B}-1\right)}\Delta D
\]
where $\Delta P$ and $\Delta D$ are the increase in the primal and
dual solution values, respectively. Since we create feasible primal
and dual solutions, this is sufficient to bound the robustness due
to weak duality. There is one main difference to \citet{feldman09}:
In their algorithm, setting the dual variable $z_{\t}$ to $w_{\aalgo t}-\beta_{a}$
ensures dual feasibility as $\aalgo$ is the advertiser with maximum
discounted gain. However, in order not to violate dual feasibility
when following the prediction, we need to increase the dual variables
$z_{\t}$ by a factor of $\alpha_{B}$. Note that for $\alpha=1$,
this recovers the competitiveness obtained by \citet{feldman09}.

\begin{proof}
Consider an iteration where we assign an impression $t$ to advertiser
$a$ and let $w_{1}\le w_{2}\le\cdots\le w_{B_{a}}$ be the values
of impressions currently allocated to $a$ in non-decreasing order.
Let $w_{0}$ be the least valuable of the impressions allocated to
$a$ at the end of iteration $t-1$, i.e. the impression that is removed
to make space for $t$. Assume that after allocating impression $t$
to $a$, it becomes the $k$-th least valuable impression allocated
to $a$ with value $w_{at}=w_{k}$. Thus, using that $w_{i}\ge w_{i-1}$,
we can bound
\begin{align*}
\beta_{a}^{(t-1)} & =\frac{\eba^{\alpha/B_{a}}-1}{\eba^{\alpha}-1}\left(\sum_{i=0}^{k-1}w_{i}\eba^{\alpha\left(B_{a}-i-1\right)/B_{a}}+\sum_{i=k+1}^{B_{a}}w_{i}\eba^{\alpha\left(B_{a}-i\right)/B_{a}}\right)\\
 & =\frac{\eba^{\alpha/B_{a}}-1}{\eba^{\alpha}-1}\left(\sum_{i=0}^{B_{a}-1}w_{i}\eba^{\alpha\left(B_{a}-i-1\right)/B_{a}}+\sum_{i=k+1}^{B_{a}}\left(w_{i}-w_{i-1}\right)\eba^{\alpha\left(B_{a}-i\right)/B_{a}}\right)\\
 & \ge\frac{\eba^{\alpha/B_{a}}-1}{\eba^{\alpha}-1}\left(\sum_{i=0}^{B_{a}-1}w_{i}\eba^{\alpha\left(B_{a}-i-1\right)/B_{a}}\right)\eqqcolon\hat{\beta}_{a}^{(t-1)}
\end{align*}
which is tight when impression $t$ becomes the most valuable impression
assigned to $a$. We can now write $\beta_{a}^{(t)}$ as a function
of the bound $\hat{\beta}_{a}^{(t-1)}$:
\begin{align*}
\beta_{a}^{(t)} & =\frac{\eba^{\alpha/B_{a}}-1}{\eba^{\alpha}-1}\sum_{i=1}^{B_{a}}w_{i}\eba^{\alpha\left(B_{a}-i\right)/B_{a}}\\
 & =\frac{\eba^{\alpha/B_{a}}-1}{\eba^{\alpha}-1}\left(\sum_{i=0}^{B_{a}-1}w_{i}\eba^{\alpha\left(B_{a}-i\right)/B_{a}}+w_{B_{a}}-w_{0}\eba^{\alpha}\right)\\
 & =\frac{\eba^{\alpha/B_{a}}-1}{\eba^{\alpha}-1}\eba^{\alpha/B_{a}}\sum_{i=1}^{B_{a}}w_{i-1}\eba^{\alpha\left(B_{a}-i\right)/B_{a}}+\frac{\eba^{\alpha/B_{a}}-1}{\eba^{\alpha}-1}\left(w_{B_{a}}-w_{0}\eba^{\alpha}\right)\\
 & =\eba^{\alpha/B_{a}}\hat{\beta}_{a}^{(t-1)}+\frac{\eba^{\alpha/B_{a}}-1}{\eba^{\alpha}-1}\left(w_{B_{a}}-w_{0}\eba^{\alpha}\right).
\end{align*}
We set $z_{t}\coloneqq\alpha_{B}\left(w_{B_{a}}-\beta_{a}^{(t-1)}\right)$
which is feasible as the discounted value $w_{at}-\beta_{a}^{(t-1)}$
of the chosen advertiser $a$ may only be $\alpha_{B}$-times less
the maximum discounted value $w_{\aalgo t}-\beta_{\aalgo}^{(t-1)}$
due to the advantage of the predicted advertiser. This yields a dual
increase of
\begin{align*}
\Delta D & =B_{a}\left(\beta_{a}^{(t)}-\beta_{a}^{(t-1)}\right)+z_{t}\\
 & =B_{a}\left(\beta_{a}^{(t)}-\beta_{a}^{(t-1)}\right)+\alpha_{B}\left(w_{B_{a}}-\beta_{a}^{(t-1)}\right)\\
 & \le B_{a}\left(\beta_{a}^{(t)}-\hat{\beta}_{a}^{(t-1)}\right)+\alpha_{B}\left(w_{B_{a}}-\hat{\beta}_{a}^{(t-1)}\right)\\
 & =B_{a}\left(\left(\eba^{\alpha/B_{a}}-1\right)\hat{\beta}_{a}^{(t-1)}+\frac{\eba^{\alpha/B_{a}}-1}{\eba^{\alpha}-1}\left(w_{B_{a}}-w_{0}\eba^{\alpha}\right)\right)+\alpha_{B}\left(w_{B_{a}}-\hat{\beta}_{a}^{(t-1)}\right)\\
 & =\alpha_{B_{a}}\hat{\beta}_{a}^{(t-1)}+\frac{\alpha_{B_{a}}}{\eba^{\alpha}-1}\left(w_{B_{a}}-w_{0}\eba^{\alpha}\right)+\alpha_{B}\left(w_{B_{a}}-\hat{\beta}_{a}^{(t-1)}\right)\\
 & =\alpha_{B_{a}}\frac{\eba^{\alpha}}{\eba^{\alpha}-1}\underbrace{\left(\hat{\beta}_{a}^{(t-1)}-w_{0}\right)}_{\ge0}+\frac{\alpha_{B_{a}}}{\eba^{\alpha}-1}\underbrace{\left(w_{B_{a}}-\hat{\beta}_{a}^{(t-1)}\right)}_{\ge0}+\alpha_{B}\left(w_{B_{a}}-\hat{\beta}_{a}^{(t-1)}\right)\\
 & \le\alpha_{B}\frac{\eb^{\alpha}}{\eb^{\alpha}-1}\left(\hat{\beta}_{a}^{(t-1)}-w_{0}\right)+\frac{\alpha_{B}}{\eb^{\alpha}-1}\left(w_{B_{a}}-\hat{\beta}_{a}^{(t-1)}\right)+\alpha_{B}\left(w_{B_{a}}-\hat{\beta}_{a}^{(t-1)}\right)\\
 & =\alpha_{B}\frac{\eb^{\alpha}}{\eb^{\alpha}-1}\left(\hat{\beta}_{a}^{(t-1)}-w_{0}\right)+\alpha_{B}\frac{\eb^{\alpha}}{\eb^{\alpha}-1}\left(w_{B_{a}}-\hat{\beta}_{a}^{(t-1)}\right)\\
 & =\alpha_{B}\frac{\eb^{\alpha}}{\eb^{\alpha}-1}\left(w_{B_{a}}-w_{0}\right)=B\frac{\eb^{\alpha/B}-1}{\eb^{\alpha}-1}\eb^{\alpha}\left(w_{B_{a}}-w_{0}\right)
\end{align*}
where the second inequality is due to $\alpha_{B}\ge\alpha_{B_{a}}$
and $\eb\le\eba$, as shown in Lemma \ref{lem:param-props}. 
\end{proof}

\subsection{\label{subsec:omitted-consistency} Proof of Theorem \ref{thm:main}
(Consistency)}

In the following, we upper bound $\EST$ using the comparison in Line
\ref{alg:selection-rule} of Algorithm \ref{alg:exp-avg}. 
\begin{lem}
\label{lem:prd} We have
\begin{align*}
\EST & \le\sum_{a}\Bigg(\left(B_{a}-\ell_{a}\right)\beta_{a}^{(T)}+\frac{1}{\alpha_{B}}\sum_{\t\in\X_{a}\setminus\P_{a}}\left(w_{a\t}-\beta_{a}^{(\t-1)}\right)+\sum_{\t\in\P_{a}\cap\X_{a}}w_{at}\Bigg)
\end{align*}
\end{lem}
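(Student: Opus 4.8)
The plan is to expand $\EST=\sum_{a}\sum_{t\in\P_{a}}w_{at}$ one impression at a time and charge each predicted value to an appropriate advertiser, according to which branch of the selection rule in Line \ref{alg:selection-rule} the algorithm took. Fix an impression $t$ and write $p\coloneqq\apred^{(t)}=\EST(t)$, so that $t$ contributes $w_{pt}$ to $\EST$. If the algorithm follows the prediction, then $a^{(t)}=\apred^{(t)}=p$, so $t\in\P_{p}\cap\X_{p}$, and I keep the contribution $w_{pt}$ unchanged. If instead the algorithm follows $\aalgo$, then necessarily $\aalgo\neq p$ (when $p=\aalgo$ the test $\alpha_{B}(w_{pt}-\beta_{p})\ge w_{\aalgo t}-\beta_{\aalgo}$ holds because $\alpha_{B}\ge1$ and the maximum discounted gain is nonnegative), and failing the test is exactly the key inequality (\ref{eq:22}), giving $w_{pt}\le\frac{1}{\alpha_{B}}\bigl(w_{\aalgo t}-\beta_{\aalgo}^{(t-1)}\bigr)+\beta_{p}^{(t-1)}$, where all thresholds are the pre-allocation values $\beta^{(t-1)}$.

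The heart of the argument is then to regroup these contributions by advertiser. The follow-prediction impressions partition over all $a$ into the sets $\P_{a}\cap\X_{a}$, yielding the term $\sum_{a}\sum_{t\in\P_{a}\cap\X_{a}}w_{at}$. For each follow-$\aalgo$ impression I split the right-hand side of (\ref{eq:22}) into its two summands and charge them to \emph{different} advertisers. The discounted-gain summand $\frac{1}{\alpha_{B}}\bigl(w_{\aalgo t}-\beta_{\aalgo}^{(t-1)}\bigr)$ is charged to $a^{(t)}=\aalgo$; since here $a^{(t)}=\aalgo\neq p$ we have $t\in\X_{a^{(t)}}\setminus\P_{a^{(t)}}$, and grouping by $a^{(t)}$ gives $\frac{1}{\alpha_{B}}\sum_{a}\sum_{t\in\X_{a}\setminus\P_{a}}\bigl(w_{at}-\beta_{a}^{(t-1)}\bigr)$. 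The threshold summand $\beta_{p}^{(t-1)}$ is charged to $p=\apred^{(t)}$; since $t\in\P_{p}$ but $a^{(t)}\neq p$ implies $t\notin\X_{p}$, we have $t\in\P_{p}\setminus\X_{p}$, and grouping by $\apred^{(t)}$ gives $\sum_{a}\sum_{t\in\P_{a}\setminus\X_{a}}\beta_{a}^{(t-1)}$. These three index families match the three running tallies exactly, because $\X_{a}$ splits as $(\P_{a}\cap\X_{a})\sqcup(\X_{a}\setminus\P_{a})$ into followed-prediction and followed-$\aalgo$ impressions.

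It remains to bound the last tally, $\sum_{a}\sum_{t\in\P_{a}\setminus\X_{a}}\beta_{a}^{(t-1)}$, by $\sum_{a}(B_{a}-\ell_{a})\beta_{a}^{(T)}$, which rests on two structural facts. First, $\beta_{a}$ is nondecreasing over the run: allocating to $a$ either leaves its bundle unchanged or replaces the least valuable impression by one of value at least as large, so every order statistic of the bundle weakly increases, and hence so does the exponential average with fixed positive coefficients that defines $\beta_{a}$; thus $\beta_{a}^{(t-1)}\le\beta_{a}^{(T)}$ for all $t$. Second, since the prediction is a feasible solution of the primal LP (at most $B_{a}$ impressions per advertiser), $|\P_{a}\setminus\X_{a}|=|\P_{a}|-\ell_{a}\le B_{a}-\ell_{a}$; as $\beta_{a}^{(T)}\ge0$, this bounds the tally by $\sum_{a}(B_{a}-\ell_{a})\beta_{a}^{(T)}$. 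Summing the three regrouped terms gives the claimed inequality.

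I expect the main obstacle to be the bookkeeping in the middle step: one must check that the two summands arising from (\ref{eq:22}) land on the disjoint, correctly indexed families $\X_{a}\setminus\P_{a}$ and $\P_{a}\setminus\X_{a}$, and that nothing is double-counted or dropped when passing from a sum over impressions to a sum over advertisers. The two supporting facts---monotonicity of $\beta_{a}$ and the feasibility bound $|\P_{a}|\le B_{a}$---are what make the final term collapse cleanly, and the monotonicity (replacing the minimum of the bundle by a larger value weakly increases all order statistics) is the one point deserving a short separate verification.
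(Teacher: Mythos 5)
Your proof is correct and follows essentially the same route as the paper's: split impressions by which branch of the selection rule fired, apply the key inequality from Line \ref{alg:selection-rule} to the impressions where the prediction was ignored, regroup the two resulting summands by $a^{(\t)}$ and $\apred^{(\t)}$ respectively (using that $\{\X_{a}\}_a$ and $\{\P_{a}\}_a$ partition the impressions), and bound $\sum_{\t\in\P_{a}\setminus\X_{a}}\beta_{a}^{(\t-1)}$ by $(B_{a}-\ell_{a})\beta_{a}^{(T)}$ via monotonicity of $\beta_{a}$ and $|\P_{a}|\le B_{a}$. Your added justifications---that a failed test forces $\aalgo\neq\apred$ because $\alpha_{B}\ge1$ and the maximum discounted gain is nonnegative, and that free disposal makes every order statistic of the bundle (hence $\beta_{a}$) weakly increasing---are correct points the paper leaves implicit.
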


\begin{proof}
We first split impressions $\t$ into two categories: Either the algorithm
followed the prediction and assigned $\t$ to $a^{(\t)}=\apred^{(\t)}$,
or the algorithm ignored the prediction and assigned $\t$ to $a^{(\t)}=\aalgo^{(\t)}\not=\apred^{(\t)}$.
In the latter case, due to the selection rule in Line \ref{alg:selection-rule}
of Algorithm \ref{alg:exp-avg},
\[
\alpha_{B}\Big(w_{\apred^{(\t)}\t}-\beta_{\apred^{(\t)}}^{(\t-1)}\Big)\le w_{\aalgo^{(\t)}\t}-\beta_{\aalgo^{(\t)}}^{(\t-1)}.
\]
In symbols,
\begin{align*}
\EST & =\sum_{a}\Bigg(\sum_{\t\in\P_{a}\setminus\X_{a}}w_{a\t}+\sum_{\t\in\P_{a}\cap\X_{a}}w_{a\t}\Bigg)\\
 & \le\sum_{a}\Bigg(\sum_{\t\in\P_{a}\setminus\X_{a}}\left(\beta_{a}^{(\t-1)}+\frac{1}{\alpha_{B}}\left(w_{\aalgo^{(\t)},\t}-\beta_{\aalgo^{(\t)}}^{(\t-1)}\right)\right)+\sum_{\t\in\P_{a}\cap\X_{a}}w_{a\t}\Bigg)\\
 & =\sum_{a}\Bigg(\underbrace{\sum_{\t\in\P_{a}\setminus\X_{a}}\beta_{a}^{(\t-1)}}_{(\dagger)}+\frac{1}{\alpha_{B}}\sum_{\t\in\X_{a}\setminus\P_{a}}\left(w_{a\t}-\beta_{a}^{(\t-1)}\right)+\sum_{\t\in\P_{a}\cap\X_{a}}w_{at}\Bigg)
\end{align*}
where the last equality holds because $\left\{ \P_{a}\right\} _{a}$
and $\left\{ \X_{a}\right\} _{a}$ are both partitioning the set of
all impressions due to the introduction of the dummy advertiser.
For $(\dagger)$, we use that $\beta_{a}$ can only increase in each
round and bound
\[
\sum_{t\in\P_{a}\setminus\X_{a}}\beta_{a}^{(t-1)}\le\left(B_{a}-\ell_{a}\right)\beta_{a}^{(T)}.
\]
\end{proof}

For the remainder of this section, we consider a fixed advertiser
$a$. Let us denote with $\t_{\i}$ the $\i$-th impression allocated
to $a$. Let
\[
(\star)=\frac{1}{\alpha_{B}}\sum_{\t\in\X_{a}\setminus\P_{a}}\left(w_{a\t}-\beta_{a}^{(\t-1)}\right)+\sum_{\t\in\P_{a}\cap\X_{a}}w_{at}
\]
as part of the the bound on $\EST$ in Lemma \ref{lem:prd}.

In order to understand this bound, we make some useful observations
in the following lemma to simplify the analysis. The key idea is that
we may assume that impressions in $\X_{a}$ are ordered to be non-decreasing.
In particular, we need to argue that the sum $\sum_{t\in\X_{a}\setminus\P_{a}}\beta_{a}^{(t-1)}$
can only decrease (as this term is negated in $(\star)$) when impressions
in $\X_{a}$ are ordered to be non-decreasing: Intuitively, each $\beta_{a}^{(t-1)}$
depends only on the $B_{a}$ most valuable impressions assigned before
impression $t$, no matter the order in which $\X_{a}^{(t-1)}$ arrived.
We can thus minimize each $\beta_{a}^{(t-1)}$ if the impressions
allocated prior to $t$ are the impressions of smallest value. To
simultaneously minimize each $\beta_{a}^{(t)}$ in the sum, we order
the impressions in $\X_{a}$ to have non-decreasing value. We prove
this simplification formally in the following lemma.
\begin{lem}
\label{lem:wlog} Without loss of generality, we may assume that $\P_{a}\cap\X_{a}$
are the most valuable impressions in $\X_{a}$ and that impressions
in $\X_{a}$ arrive such that their values are non-decreasing.
\end{lem}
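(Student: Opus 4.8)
**

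The plan is to establish Lemma~\ref{lem:wlog} via an exchange argument that shows two simplifications can be made without decreasing the quantity $(\star)$ that we are upper-bounding. Recall that
\[
(\star)=\frac{1}{\alpha_{B}}\sum_{\t\in\X_{a}\setminus\P_{a}}\left(w_{a\t}-\beta_{a}^{(\t-1)}\right)+\sum_{\t\in\P_{a}\cap\X_{a}}w_{at},
\]
and that each $\beta_a^{(\t-1)}$ is, by Line~\ref{alg:update}, a fixed weighted average of the $B_a$ most valuable impressions assigned to $a$ strictly before $\t$, with larger weight on less valuable impressions. The two claims to prove are: (i) we may assume the impressions in $\P_a\cap\X_a$ (those that agree with the prediction) are the most valuable in $\X_a$; and (ii) we may assume the impressions in $\X_a$ arrive in non-decreasing order of value. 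Since $(\star)$ is exactly what we upper-bound in the consistency argument, it suffices to show that each simplification can only increase $(\star)$, so that a bound proved under these assumptions is valid in general.

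First I would handle the ordering claim (ii). The key structural fact is that $\beta_a^{(\t-1)}$ depends only on the \emph{set} of the $B_a$ most valuable impressions allocated to $a$ before $\t$, not on their arrival order. Fix the multiset of values in $\X_a$ and consider reordering its elements. I would argue that sorting $\X_a$ into non-decreasing arrival order minimizes $\sum_{\t\in\X_a\setminus\P_a}\beta_a^{(\t-1)}$: when the $i$-th arriving impression is small, the preceding $B_a$-window contains the smallest available values, so each prefix achieves its minimal weighted average. Since $\sum_{\t\in\X_a\setminus\P_a}\beta_a^{(\t-1)}$ enters $(\star)$ with a negative sign (via $-\tfrac{1}{\alpha_B}\beta_a^{(\t-1)}$), minimizing it maximizes $(\star)$. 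Formally I would use an adjacent-transposition exchange: swapping an out-of-order adjacent pair does not decrease $(\star)$, and repeated swaps sort the sequence. The monotone-weighting structure of the update in Line~\ref{alg:update} is what makes each swap go the right way.

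Next I would handle claim (i), that $\P_a\cap\X_a$ may be taken to be the top $\ell_a$ values of $\X_a$. Here the relevant tension is that values in $\P_a\cap\X_a$ contribute to $(\star)$ with full weight $w_{at}$, whereas values in $\X_a\setminus\P_a$ contribute only the discounted $\tfrac{1}{\alpha_B}(w_{at}-\beta_a^{(\t-1)})$; and since $\alpha_B\ge 1$, a given large value is (weakly) better placed in $\P_a\cap\X_a$. I would show by an exchange argument that relabeling which impressions are ``predicted'' so that the largest values sit in $\P_a\cap\X_a$ can only increase $(\star)$, keeping $\ell_a=|\P_a\cap\X_a|$ and $I_a=|\X_a|$ fixed. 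Combined with the ordering from (ii), this places $\P_a\cap\X_a$ as the final $\ell_a$ arrivals, which is exactly the configuration needed for the coefficient bookkeeping that yields inequality~(\ref{eq:23}) in the subsequent lemma.

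The main obstacle I expect is making the exchange step in (ii) fully rigorous, because swapping two impressions changes not one but potentially \emph{several} of the terms $\beta_a^{(\t-1)}$: every prefix window that contained exactly one of the swapped pair is affected, and a value moving later can also enter or leave the ``top-$B_a$'' window of intermediate impressions. The careful part is verifying that, summed over all affected terms and weighted by the monotone coefficients from Line~\ref{alg:update}, the net effect of a swap that corrects an inversion is in the favorable direction; one has to track the free-disposal rule (the least valuable impression is evicted) to confirm which values occupy each window. I would isolate this in a single monotonicity claim—that for fixed value-multiset, each partial sum $\sum_{i=1}^{j}$ of window-averages is minimized under sorted arrival—and prove it by induction on the number of inversions, which reduces the global reordering to the clean adjacent-swap case.
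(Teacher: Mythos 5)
Your proposal matches the paper's proof in all essentials: both reduce the claim to showing that the negated sum $\sum_{t\in\X_{a}\setminus\P_{a}}\beta_{a}^{(t-1)}$ is the only order-dependent part of $(\star)$ and is minimized under non-decreasing arrival order, using the key fact that each $\beta_{a}^{(t-1)}$ depends only on the multiset of the $B_{a}$ most valuable prior impressions and is monotone in that multiset, while the placement of $\P_{a}\cap\X_{a}$ is handled by noting that those values carry the larger coefficient. The only cosmetic difference is that you sort via adjacent transpositions with induction on inversions, whereas the paper moves the first out-of-order impression directly to its ranked position; both resolve the ``windows shifting'' concern you raise in the same way.
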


\begin{proof}
We may assume that the impressions in $\P_{a}\cap\X_{a}$ are the
most valuable impressions in $\X_{a}$: this can only increase the
value of $\P_{a}$ but leaves $\S_{a}$ unaffected, as $\S_{a}$ are
by design the $B_{a}$ most valuable impressions in $\X_{a}$. All
impressions in $(\star)$ are from $\X_{a}$, so reordering impressions
only affects $(\star)$. Specifically, we can show that the sum $\sum_{\t\in\X_{a}\setminus\P_{a}}\beta_{a}^{(\t-1)}$
in $(\star)$ is minimized if the values in $\X_{a}$ are ordered
to be non-decreasing.  Assume to the contrary that the $\i$-th impression
added to $a$ is the last that is in order. That is $\wi 1\le\wi 2\le\cdots\le\wi{\i}$
and there exists a $\j\le\i$ such that $\wi{\j-1}\le\wi{\i+1}<\wi{\j}$.
Moving $\t_{\i+1}$ ahead to its ranked position within the first
$\i$ impressions allocated to $a$ changes the ordering as follows
(the first and second row show the impression values before and after
changing the position of $\t_{\i+1}$, respectively):
\[
\arraycolsep=1pt\begin{array}{ccccccccccccc}
\wi 1 & \le & \cdots & \le & \wi{\j-1} & \le & \wi{\j} & \le & \wi{\j+1} & \le & \cdots & \le & \wi{\i}\\
\wi 1 & \le & \cdots & \le & \wi{\j-1} & \le & \wi{\i+1} & < & \wi{\j} & \le & \cdots & \le & \wi{\i-1}
\end{array}
\]
Note that each position decreases in value, even strictly at the $\j$-th
position. As such, the exponential average $\beta_{a}^{(\t-1)}$ decreases
as well for $\t<\t_{\i+1}$; it remains constant for $\t\ge\t_{\i+1}$
as it only depends on the $B_{a}$ most valuable impressions assigned
up to $t$ which remain the same.  We can thus simultaneously minimize
$\beta_{a}^{(\t)}$ for each $\t$ by putting $\X_{a}$ in non-decreasing
order. This reordering does not affect $\beta_{a}^{(T)}$ or the other
terms in $(\star)$, so we may indeed assume that values are non-decreasing. 
\end{proof}

In light of Lemma \ref{lem:wlog}, we can write $(\star)$ as follows.
\begin{lem}
\label{lem:star} We have
\[
(\star)=\frac{1}{\alpha_{B}}\sum_{\i=1}^{I_{a}-\ell_{a}}\left(\wi{\i}-\betai{\i-1}\right)+\sum_{\i=I_{a}-\ell_{a}+1}^{I_{a}}\wi{\i}.
\]
\end{lem}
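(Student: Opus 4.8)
The goal of Lemma \ref{lem:star} is purely bookkeeping: it rewrites the quantity $(\star)$ from a sum indexed by the impression sets $\X_a \setminus \P_a$ and $\P_a \cap \X_a$ into a sum indexed by the rank $\i$ of impressions allocated to $a$, using the structural simplification from Lemma \ref{lem:wlog}. The plan is therefore to invoke Lemma \ref{lem:wlog} to fix a convenient ordering and then carefully translate each of the two sums in the definition of $(\star)$ into the corresponding range of ranks.

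First I would recall that by Lemma \ref{lem:wlog} we may assume the impressions in $\X_a$ arrive in non-decreasing order of value, and that $\P_a \cap \X_a$ consists of the most valuable impressions in $\X_a$. Since $|\X_a| = I_a$ and $|\P_a \cap \X_a| = \ell_a$, this means that under the rank indexing $\t_1, \t_2, \ldots, \t_{I_a}$ the overlap $\P_a \cap \X_a$ is precisely the top $\ell_a$ impressions, i.e. those with ranks $\i \in \{I_a - \ell_a + 1, \ldots, I_a\}$. Consequently the complement $\X_a \setminus \P_a$ is exactly the bottom block of ranks $\i \in \{1, \ldots, I_a - \ell_a\}$.

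With this identification in hand, the two sums defining $(\star)$ translate directly. The term $\sum_{\t \in \P_a \cap \X_a} w_{a\t}$ becomes $\sum_{\i = I_a - \ell_a + 1}^{I_a} \wi{\i}$, matching the second sum in the claim. For the first term, each impression $\t \in \X_a \setminus \P_a$ with rank $\i \in \{1, \ldots, I_a - \ell_a\}$ contributes $w_{a\t} - \beta_a^{(\t-1)}$; writing $w_{a\t} = \wi{\i}$ and observing that $\beta_a^{(\t-1)}$ is exactly the dual value just before the $\i$-th impression is added, namely $\betai{\i-1}$, this term becomes $\frac{1}{\alpha_B}\sum_{\i=1}^{I_a - \ell_a}\big(\wi{\i} - \betai{\i-1}\big)$, matching the first sum. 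Summing the two pieces yields the stated expression.

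The main subtlety — the only thing beyond pure relabeling — is justifying that $\beta_a^{(\t-1)}$ equals $\betai{\i-1}$ under the reordered arrival sequence, i.e. that the dual value seen just before allocating the rank-$\i$ impression coincides with the value after allocating the first $\i-1$ impressions in the non-decreasing order. This is precisely the consequence of the reordering argument in Lemma \ref{lem:wlog}, where the superscript notation $(\t-1)$ is reinterpreted through the rank index after the impressions of $\X_a$ have been sorted; I would state this identification explicitly so that the index shift from the impression-indexed sum to the rank-indexed sum is unambiguous. Everything else is a direct substitution, so I expect no genuine obstacle beyond making this indexing correspondence precise.
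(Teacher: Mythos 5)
Your proposal is correct and follows essentially the same route as the paper: invoke Lemma \ref{lem:wlog} to place $\P_a\cap\X_a$ at the top $\ell_a$ ranks, reindex both sums by rank, and identify $\beta_a^{(t_i-1)}$ with $\betai{i-1}$. The one small refinement is that this last identification is not really a consequence of the reordering itself but of the simpler fact that $\beta_a$ is updated only when an impression is allocated to $a$, so it is unchanged over the impressions $\{t_{i-1}+1,\dots,t_i-1\}$, none of which go to $a$.
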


\begin{proof}
Impression values are non-decreasing due to Lemma \ref{lem:wlog},
so $\wi{\i}$ is the $\i$-th least valuable impression in $\X_{a}$.
The impressions $\left\{ I_{a}-\ell_{a}+1,\dots,I_{a}\right\} =\X_{a}\cap\P_{a}$
are thus the most valuable. We can now write $(\star)$ as
\[
\frac{1}{\alpha_{B}}\sum_{\t\in\X_{a}\setminus\P_{a}}\left(w_{a\t}-\beta_{a}^{(\t-1)}\right)+\sum_{\t\in\P_{a}\cap\X_{a}}w_{a\t}=\frac{1}{\alpha_{B}}\sum_{\i=1}^{I_{a}-\ell_{a}}\left(\wi{\i}-\betai{\i-1}\right)+\sum_{\i=I_{a}-\ell_{a}+1}^{I_{a}}\wi{\i}
\]
where $\beta_{a}^{(t_{i}-1)}=\betai{i-1}$ as there was no change
to the dual variable of advertiser $a$ since no impression in $\left\{ \t_{\i-1}+1,\dots,\t_{\i}-1\right\} $
was allocated to $a$.
\end{proof}

Combining Lemmas \ref{lem:prd} and \ref{lem:star}, we obtain:
\begin{lem}
\label{lem:prd-all} $\EST\le\sum_{a}\EST_{a}$ where
\[
\EST_{a}\coloneqq\frac{1}{\alpha_{B}}\sum_{\i=1}^{I_{a}-\ell_{a}}\left(\wi{\i}-\betai{\i-1}\right)+\sum_{\i=I_{a}-\ell_{a}+1}^{I_{a}}\wi{\i}+\left(I_{a}-\ell_{a}\right)\beta_{a}^{(T)}.
\]
\end{lem}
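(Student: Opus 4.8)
The plan is to derive the bound by substituting Lemma~\ref{lem:star} into Lemma~\ref{lem:prd}. Lemma~\ref{lem:prd} already writes $\EST$ as a sum over advertisers of the term $(B_a-\ell_a)\beta_a^{(T)}$ together with the quantity $(\star)$, and Lemma~\ref{lem:star} re-expresses $(\star)$, for each fixed advertiser $a$, as $\frac{1}{\alpha_B}\sum_{\i=1}^{I_a-\ell_a}(\wi{\i}-\betai{\i-1})+\sum_{\i=I_a-\ell_a+1}^{I_a}\wi{\i}$, indexed by the ordered impressions $t_\i$ of $\X_a$ furnished by Lemma~\ref{lem:wlog}. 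Inserting this expression into the bound of Lemma~\ref{lem:prd} and keeping everything grouped by advertiser reproduces the first two summands of the claimed $\EST_a$ verbatim, so the bulk of the statement is immediate and the whole argument is essentially a recombination of two already-established identities.

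The single point requiring care is the coefficient on the threshold $\beta_a^{(T)}$: the substitution yields $(B_a-\ell_a)\beta_a^{(T)}$, whereas $\EST_a$ is stated with $(I_a-\ell_a)\beta_a^{(T)}$. Since $\beta_a^{(T)}\ge0$, it suffices to argue $B_a\le I_a$, for then $(B_a-\ell_a)\beta_a^{(T)}\le(I_a-\ell_a)\beta_a^{(T)}$ and the upper bound is only loosened, hence preserved. I would obtain $B_a\le I_a$ by taking the $B_a$ zero-value impressions used to initialize $a$ to lie in $\X_a$, so that $I_a=|\X_a|\ge B_a$. This leaves $\ell_a$, the final allocation $\S_a$, and the quantity $(\star)$ unchanged: these dummies are never recommended by the prediction (so $\ell_a$ is unaffected) and have value zero while being the least valuable impressions assigned to $a$ (so $\S_a$ is unaffected), and the terms they contribute to $(\star)$ each equal $\frac{1}{\alpha_B}(0-\betai{\i-1})=0$ since $\beta_a=0$ at those initial steps.

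I expect this bookkeeping around the threshold coefficient, together with the justification of $B_a\le I_a$, to be essentially the only obstacle; everything else is mechanical. I would deliberately retain the looser coefficient $(I_a-\ell_a)$ rather than the tighter $(B_a-\ell_a)$, because it is the normalized form that aligns the index ranges used downstream: with $I_a\ge B_a$ the sums in~\eqref{eq:23} run down to $I_a-B_a+1$ and the disposed mass is captured by the single term $\wi{I_a-B_a}$, so writing $\EST_a$ in this shape streamlines the later balancing of its coefficients against $\ALG_a$.
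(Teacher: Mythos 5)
Your proposal is correct and takes the same route as the paper, whose entire proof is the one-line observation that Lemma \ref{lem:prd-all} follows by substituting the expression for $(\star)$ from Lemma \ref{lem:star} into the bound of Lemma \ref{lem:prd}. The only substantive point you add concerns the coefficient of $\beta_{a}^{(T)}$: the substitution genuinely yields $\left(B_{a}-\ell_{a}\right)\beta_{a}^{(T)}$, and the $\left(I_{a}-\ell_{a}\right)$ in the statement appears to be a typo rather than a deliberate loosening, since the subsequent proof of Lemma \ref{lem:1} (see (\ref{eq:15})) continues to work with $\left(B_{a}-\ell_{a}\right)\beta_{a}^{(T)}$. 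Your patch via $B_{a}\le I_{a}$ (counting the initial zero-value impressions in $\X_{a}$) and $\beta_{a}^{(T)}\ge0$ is sound and carefully checked against the other quantities, but note that actually retaining the looser $\left(I_{a}-\ell_{a}\right)$ coefficient would prevent the downstream balancing argument from going through, so in the context of the full analysis the intended reading is $\left(B_{a}-\ell_{a}\right)$.
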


In the following, we use the non-decreasing ordering of impressions
in $\X_{a}$ to compute $\betai{i-1}$ and bound $\EST_{a}$ with
a linear combination of values $\wi{\i}$. Consider the $\j$-th impression
$\t_{\j}$ allocated to $a$. Since we assume that impression values
are non-decreasing, we know that $\t_{\j}$ becomes the most valuable
impression right after it is allocated. After the allocation of the
$(\j+1)$-th impression to $a$, it becomes the second most valuable
impression, and so forth, until it is disposed after the allocation
of the $(\j+B_{a})$-th impression. The value $\wi{\j}$ therefore
appears alongside each coefficient in the convex combination that
defines $\beta_{a}^{(t_{i-1})}$ for $\i\in\left\{ j+1,\dots,j+B_{a}\right\} $.
Expanding each $\betai{i-1}$ in the sum $\sum_{\i=1}^{I_{a}-\ell_{a}}\betai{\i-1}$
in $\EST_{a}$, we thus observe that the coefficients of values $\wi j$
for $\j\le I_{a}-\ell_{a}-B_{a}$ sum up to $1$. We use this fact
to cancel out most of the values in $\sum_{\i=1}^{I_{a}-\ell_{a}}\wi{\i}$.
What remains are only the values $\wi{\i}$ for $\i\in\left\{ I_{a}-\ell_{a}-B_{a}+1,\dots,I_{a}-\ell_{a}\right\} $.
For $\i\in\left\{ I_{a}-\ell_{a}-B_{a}+1,\dots,I_{a}-B_{a}\right\} $,
we bound $\wi{\i}$ by $\wi{I_{a}-B_{a}}$ which is really the best
we can hope for. Formally, we show:
\begin{lem}
\label{lem:1} We have
\[
\EST_{a}\le\hspace{-10pt}\sum_{\i=I_{a}-B_{a}+1}^{I_{a}-\ell_{a}}\phi_{\i}\wi{\i}+\hspace{-10pt}\sum_{\i=I_{a}-\ell_{a}+1}^{I_{a}}\psi_{\i}\wi{\i}+\wi{I_{a}-B_{a}}\Omega_{a}
\]
with coefficients
\begin{align*}
\phi_{\i} & \coloneqq\left(B_{a}-\ell_{a}\right)\norm\eba^{\alpha\left(I_{a}-\i\right)/B_{a}}+\frac{1}{\alpha_{B}}\frac{\eba^{\alpha}-\eba^{\alpha\left(I_{a}-\ell_{a}-\i\right)/B_{a}}}{\eba^{\alpha}-1}\\
\psi_{\i} & \coloneqq1+\left(B_{a}-\ell_{a}\right)\norm\eba^{\alpha\left(I_{a}-\i\right)/B_{a}}\\
\Omega_{a} & \coloneqq\frac{1}{\alpha_{B}}\frac{1}{\eba^{\alpha}-1}\left(\ell_{a}\eba^{\alpha}-\frac{\eba^{\alpha}-\eba^{\alpha\left(B_{a}-\ell_{a}\right)/B_{a}}}{\eba^{\alpha/B_{a}}-1}\right).
\end{align*}
\end{lem}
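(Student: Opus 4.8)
The plan is to substitute the closed form of the thresholds $\betai{i-1}$ into the expression for $\EST_a$ from Lemma \ref{lem:prd-all}, expand every term into a single linear combination of the values $\wi\i$, and then read off the coefficients by grouping according to index ranges. Throughout I use the non-decreasing ordering guaranteed by Lemma \ref{lem:wlog}, and abbreviate $q\coloneqq\eba^{\alpha/B_{a}}$, so that $q^{B_{a}}=\eba^{\alpha}$ and the normalizing factor in Line \ref{alg:update} is $\norm=\frac{q-1}{q^{B_{a}}-1}$. The first step is to establish the closed form $\betai{i-1}=\norm\sum_{j=i-B_{a}}^{i-1}\wi j\,q^{\,i-1-j}$. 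This rests on two facts: that no impression is allocated to $a$ strictly between $\t_{i-1}$ and $\t_i$, so $\beta_a^{(t_i-1)}=\betai{i-1}$ (as already observed in Lemma \ref{lem:star}); and that under the non-decreasing ordering the $B_a$ most valuable impressions present right after $\t_{i-1}$ are exactly $\t_{i-B_a},\dots,\t_{i-1}$, so the exponential average in Line \ref{alg:update} is precisely this geometric-weighted sum.

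Next I would expand $\frac{1}{\alpha_{B}}\sum_{\i=1}^{I_{a}-\ell_{a}}\betai{\i-1}$ by exchanging the order of summation to compute the total coefficient $c_j$ of each value $\wi j$. A geometric sum gives $c_j=1$ whenever $j\le I_a-\ell_a-B_a$, since the full block of $B_a$ exponents appears and $\norm\cdot\frac{q^{B_a}-1}{q-1}=1$, and $c_j=\frac{q^{\,I_a-\ell_a-j}-1}{q^{B_a}-1}$ for $I_a-\ell_a-B_a<j\le I_a-\ell_a$. Subtracting this sum from $\frac{1}{\alpha_{B}}\sum_{\i=1}^{I_a-\ell_a}\wi\i$, every value with index $j\le I_a-\ell_a-B_a$ cancels because $1-c_j=0$; this is the cancellation flagged in the text. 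For the surviving indices the leftover coefficient $\frac{1}{\alpha_{B}}(1-c_i)$ simplifies to $\frac{1}{\alpha_{B}}\,\frac{\eba^{\alpha}-\eba^{\alpha(I_a-\ell_a-i)/B_a}}{\eba^{\alpha}-1}$, which is exactly the second summand of $\phi_i$.

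Finally I would expand the threshold term, which I take in the form $(B_a-\ell_a)\beta_a^{(T)}$ coming from the $(\dagger)$ bound $\sum_{\t\in\P_a\setminus\X_a}\betai{\t-1}\le(B_a-\ell_a)\beta_a^{(T)}$ in Lemma \ref{lem:prd}, using the same closed form for $\beta_a^{(T)}=\betai{I_a}$. Its contribution to the coefficient of $\wi\i$ is $(B_a-\ell_a)\norm\,\eba^{\alpha(I_a-\i)/B_a}$ for every $\i\in\{I_a-B_a+1,\dots,I_a\}$, while the plain sum $\sum_{\i=I_a-\ell_a+1}^{I_a}\wi\i$ contributes an additive $1$ on its own range. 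Collecting by index range then yields the claim: for $\i\in\{I_a-B_a+1,\dots,I_a-\ell_a\}$ the two contributions combine into $\phi_i$; for $\i\in\{I_a-\ell_a+1,\dots,I_a\}$ the threshold contribution plus the $+1$ combine into $\psi_i$. The only surviving indices not reached by the threshold contribution are the $\ell_a$ lowest ones, $\i\in\{I_a-\ell_a-B_a+1,\dots,I_a-B_a\}$; for these I bound $\wi\i\le\wi{I_a-B_a}$ using the ordering, and a geometric evaluation of the sum of their coefficients, driven by $\sum_{k=B_a-\ell_a}^{B_a-1}q^{k}=\frac{\eba^{\alpha}-\eba^{\alpha(B_a-\ell_a)/B_a}}{\eba^{\alpha/B_a}-1}$, produces exactly $\Omega_a$.

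The main obstacle is not any single computation but the bookkeeping: correctly tracking which values $\wi\i$ receive which of the three contributions as the index ranges of the three sums overlap, and in particular isolating the exactly $\ell_a$ low-index terms that must be absorbed into the slack term $\wi{I_a-B_a}\Omega_a$ rather than into $\phi_i$. Care is also needed in the two geometric evaluations, namely the telescoping to $1$ that drives the cancellation and the partial geometric sum that closes the form of $\Omega_a$; getting the off-by-one boundaries of these sums right is where the delicacy lies.
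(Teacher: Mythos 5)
Your proposal is correct and follows essentially the same route as the paper's proof: express each $\betai{\i-1}$ as the geometric-weighted sum of the preceding $B_{a}$ values, swap the order of summation so the coefficients of values with index at most $I_{a}-\ell_{a}-B_{a}$ telescope to $1$ and cancel, bound the remaining $\ell_{a}$ low-index leftovers by $\wi{I_{a}-B_{a}}$ to form $\Omega_{a}$, and add the $\left(B_{a}-\ell_{a}\right)\beta_{a}^{(T)}$ expansion to assemble $\phi_{\i}$ and $\psi_{\i}$. All the geometric evaluations and index boundaries you state match the paper's computation, so there is nothing to add.
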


\begin{proof}
We start by rewriting the terms in $\EST_{a}$ individually. Since
we assume that the values are non-decreasing, we can express $\betai{\i-1}$
as the exponential average of values $\wi{\i-B_{a}},\wi{\i-B_{a}+1},\dots,\wi{\i-1}$
of the last $B_{a}$ impressions (for simplicity, we set $\wi{\j}=0$
for $\j\le0$). Summing over multiple iterations, we thus obtain for
the sum over the dual variables that
\begin{align*}
\sum_{\i=1}^{I_{a}-\ell_{a}}\betai{\i-1} & =\norm\sum_{\i=1}^{I_{a}-\ell_{a}}\sum_{\j=\i-B_{a}}^{\i-1}\wi j\eba^{\alpha\left(\i-\j-1\right)/B_{a}}\\
 & =\norm\sum_{\j=1}^{I_{a}-\ell_{a}}\wi{\j}\sum_{\i=\j+1}^{\min\left\{ \j+B_{a},I_{a}-\ell_{a}\right\} }\eba^{\alpha\left(\i-\j-1\right)/B_{a}}\\
 & =\norm\sum_{\j=1}^{I_{a}-\ell_{a}}\wi{\j}\sum_{\i=1}^{\min\left\{ B_{a},I_{a}-\ell_{a}-\j\right\} }\eba^{\alpha\left(\i-1\right)/B_{a}}\\
 & =\norm\sum_{\j=1}^{I_{a}-B_{a}-\ell_{a}}\wi{\j}\sum_{\i=1}^{I_{a}}\eba^{\alpha\left(\i-1\right)/B_{a}}\\
 & \quad+\norm\sum_{\j=I_{a}-B_{a}-\ell_{a}+1}^{I_{a}-\ell_{a}}w_{a}\sum_{\i=1}^{I_{a}-\ell_{a}-\j}\eba^{\alpha\left(\i-1\right)/B_{a}}\\
 & =\sum_{\i=1}^{I_{a}-B_{a}-\ell_{a}}\wi{\i}+\frac{1}{\eba^{\alpha}-1}\sum_{\i=I_{a}-B_{a}-\ell_{a}+1}^{I_{a}-\ell_{a}}\wi{\i}\left(\eba^{\alpha\left(I_{a}-\ell_{a}-\i\right)/B_{a}}-1\right).
\end{align*}
where for the last equality, we use that the two inner sums are geometric.
We can use this expression to cancel out most of the terms of the
first sum in $\EST_{a}$:
\begin{align}
 & \sum_{\i=1}^{I_{a}-\ell_{a}}\left(\wi{\i}-\betai{\i-1}\right)\nonumber \\
 & =\sum_{\i=1}^{I_{a}-\ell_{a}}\wi{\i}-\sum_{\i=1}^{I_{a}-B_{a}-\ell_{a}}\wi{\i}-\frac{1}{\alpha_{B}\left(\eba^{\alpha}-1\right)}\sum_{\i=I_{a}-B_{a}-\ell_{a}+1}^{I_{a}-\ell_{a}}\wi{\i}\left(\eba^{\alpha\left(I_{a}-\ell_{a}-\i\right)/B_{a}}-1\right)\nonumber \\
 & =\sum_{\i=I_{a}-B_{a}-\ell_{a}+1}^{I_{a}-\ell_{a}}\wi{\i}\left(1-\frac{\eba^{\alpha\left(I_{a}-\ell_{a}-\i\right)/B_{a}}-1}{\eba^{\alpha}-1}\right)\nonumber \\
 & =\sum_{\i=I_{a}-B_{a}-\ell_{a}+1}^{I_{a}-\ell_{a}}\wi{\i}\frac{\eba^{\alpha}-\eba^{\alpha\left(I_{a}-\ell_{a}-\i\right)/B_{a}}}{\eba^{\alpha}-1}\nonumber \\
 & =\sum_{\i=I_{a}-B_{a}+1}^{I_{a}-\ell_{a}}\wi{\i}\frac{\eba^{\alpha}-\eba^{\alpha\left(I_{a}-\ell_{a}-\i\right)/B_{a}}}{\eba^{\alpha}-1}+\sum_{\i=I_{a}-B_{a}-\ell_{a}+1}^{I_{a}-B_{a}}\wi{\i}\frac{\eba^{\alpha}-\eba^{\alpha\left(I_{a}-\ell_{a}-\i\right)/B_{a}}}{\eba^{\alpha}-1}.\label{eq:6}
\end{align}
 We use that $\wi{\i}\le\wi{I_{a}-B_{a}}$ for all $i\le I_{a}-B_{a}$
to upper bound the second sum, divided by $\alpha_{B}$, in (\ref{eq:6})
to
\begin{align}
 & \frac{1}{\alpha_{B}}\sum_{\i=I_{a}-B_{a}-\ell_{a}+1}^{I_{a}-B_{a}}\wi{\i}\frac{\eba^{\alpha}-\eba^{\alpha\left(I_{a}-\ell_{a}-\i\right)/B_{a}}}{\eba^{\alpha}-1}\nonumber \\
 & \le\wi{I_{a}-B_{a}}\frac{1}{\alpha_{B}}\sum_{\i=I_{a}-B_{a}-\ell_{a}+1}^{I_{a}-B_{a}}\frac{\eba^{\alpha}-\eba^{\alpha\left(I_{a}-\ell_{a}-\i\right)/B_{a}}}{\eba^{\alpha}-1}\\
 & =\wi{I_{a}-B_{a}}\frac{1}{\alpha_{B}}\frac{1}{\eba^{\alpha}-1}\left(\ell_{a}\eba^{\alpha}-\sum_{\i=B_{a}-\ell_{a}}^{B_{a}-1}\eba^{\alpha\i/B_{a}}\right)\nonumber \\
 & =\wi{I_{a}-B_{a}}\underbrace{\frac{1}{\alpha_{B}}\frac{1}{\eba^{\alpha}-1}\left(\ell_{a}\eba^{\alpha}-\frac{\eba^{\alpha}-\eba^{\alpha\left(B_{a}-\ell_{a}\right)/B_{a}}}{\eba^{\alpha/B_{a}}-1}\right)}_{=\Omega_{a}}.\label{eq:14}
\end{align}
Furthermore, by definition of $\beta_{a}^{(T)}=\betai{I_{a}}$,
\begin{equation}
\left(B_{a}-\ell_{a}\right)\beta_{a}^{(T)}=\left(B_{a}-\ell_{a}\right)\norm\sum_{\i=I_{a}-B_{a}+1}^{I_{a}}\wi{\i}\eba^{\alpha\left(I_{a}-i\right)/B_{a}}.\label{eq:15}
\end{equation}
We combine (\ref{eq:6}), (\ref{eq:14}), and (\ref{eq:15}) and group
terms to obtain the desired bound
\begin{align*}
\EST_{a} & \le\sum_{i=I_{a}-\ell_{a}+1}^{I_{a}}\wi{\i}+\frac{1}{\alpha_{B}}\sum_{i=I_{a}-B_{a}+1}^{I_{a}-\ell_{a}}\wi{\i}\frac{\eba^{\alpha}-\eba^{\alpha\left(I_{a}-\ell_{a}-\i\right)/B_{a}}}{\eba^{\alpha}-1}+\wi{I_{a}-B_{a}}\Omega_{a}\\
 & \qquad+\left(B_{a}-\ell_{a}\right)\norm\sum_{\i=I_{a}-B_{a}+1}^{I_{a}}\wi{\i}\eba^{\alpha\left(I_{a}-\i\right)/B_{a}}\\
 & =\sum_{i=I_{a}-B_{a}+1}^{I_{a}-\ell_{a}}\wi{\i}\underbrace{\left(\left(B_{a}-\ell_{a}\right)\norm\eba^{\alpha\left(I_{a}-i\right)/B_{a}}+\frac{1}{\alpha_{B}}\frac{\eba^{\alpha}-\eba^{\alpha\left(I_{a}-\ell_{a}-\i\right)/B_{a}}}{\eba^{\alpha}-1}\right)}_{=\phi_{i}}\\
 & \qquad+\sum_{i=I_{a}-\ell_{a}+1}^{I_{a}}\wi{\i}\underbrace{\left(1+\left(B_{a}-\ell_{a}\right)\norm\eba^{\alpha\left(I_{a}-i\right)/B_{a}}\right)}_{=\psi_{i}}+w_{I_{a}-B_{a}}\Omega_{a}
\end{align*}
\end{proof}

We can express $\ALG$ analogously:
\begin{lem}
\label{lem:algfac} We have $\ALG=\sum_{a}\ALG_{a}$ where
\[
\ALG_{a}\coloneqq\sum_{\i=I_{a}-B_{a}+1}^{I_{a}}w_{at_{\i}}.
\]
\end{lem}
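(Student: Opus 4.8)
The plan is to unfold the definition of $\ALG$ and then read off the final allocation $\S_a$ using the structural assumption supplied by Lemma \ref{lem:wlog}. First I would recall the expression for the objective stated in the consistency overview, namely $\ALG = \sum_a \sum_{t \in \S_a} w_{at}$, where $\S_a$ is the set of $B_a$ impressions held by advertiser $a$ when the algorithm terminates. Since this is already a sum over advertisers, it suffices to show that for each fixed advertiser $a$ the contribution $\sum_{t \in \S_a} w_{at}$ equals $\ALG_a = \sum_{\i = I_a - B_a + 1}^{I_a} \wi{\i}$; summing this identity over all $a$ then yields $\ALG = \sum_a \ALG_a$.

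Next I would argue which impressions constitute $\S_a$. Each advertiser starts with $B_a$ zero-value placeholder impressions, and whenever the algorithm assigns a new impression to $a$ it simultaneously removes the least valuable impression currently held. Hence the number of impressions retained by $a$ is invariantly $B_a$, and the retained set is always exactly the $B_a$ most valuable impressions among those ever assigned to $a$ (including the initial placeholders). Recalling that $\X_a = \{\t : a^{(\t)} = a\}$ is the set of all (real) impressions assigned to $a$ with $I_a = |\X_a|$, this means $\S_a$ consists of the $B_a$ most valuable impressions of $\X_a$, padded with zero-value placeholders when $I_a < B_a$.

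Finally I would invoke Lemma \ref{lem:wlog}, under which we may assume the impressions of $\X_a$ arrive in non-decreasing value, so that $\wi 1 \le \wi 2 \le \cdots \le \wi{I_a}$. The $B_a$ most valuable impressions are then precisely the last $B_a$ allocated to $a$, namely those indexed $I_a - B_a + 1, \dots, I_a$; using the convention $\wi{\j} = 0$ for $\j \le 0$ (introduced in the proof of Lemma \ref{lem:1}) correctly accounts for the surviving zero-value placeholders when $I_a < B_a$ and keeps the index range well defined. Therefore $\sum_{t \in \S_a} w_{at} = \sum_{\i = I_a - B_a + 1}^{I_a} \wi{\i} = \ALG_a$, and summing over $a$ completes the proof.

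This lemma is essentially bookkeeping, so I do not expect a genuine obstacle; the only points requiring care are the reliance on the non-decreasing reordering of Lemma \ref{lem:wlog}, which is exactly what lets me equate the $B_a$ most valuable retained impressions with the last $B_a$ allocated ones, and the zero-value convention for nonpositive indices that makes the displayed range meaningful in the degenerate case $I_a < B_a$.
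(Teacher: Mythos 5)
Your proof is correct and follows the same route as the paper's: unfold $\ALG=\sum_{a}\sum_{t\in\S_{a}}w_{at}$, observe that disposal of the least valuable impression makes $\S_{a}$ the $B_{a}$ most valuable impressions of $\X_{a}$, and use the non-decreasing ordering from Lemma \ref{lem:wlog} to identify these with indices $I_{a}-B_{a}+1,\dots,I_{a}$. Your additional handling of the degenerate case $I_{a}<B_{a}$ via the zero-value placeholder convention is a harmless refinement the paper leaves implicit.
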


\begin{proof}
We have $\ALG=\sum_{a}\sum_{t\in\S_{a}}w_{at}$. As we always dispose
of the least valuable impression in Algorithm \ref{alg:exp-avg},
$\S_{a}$ are the $B_{a}$ most valuable impressions in $\X_{a}$.
Due to Lemma \ref{lem:wlog}, these are $\S_{a}=\left\{ I_{a}-B_{a}+1,\dots,I_{a}\right\} $
and hence $\sum_{t\in\S_{a}}w_{at}=\sum_{i=I_{a}-B_{a}+1}^{I_{a}}\wi{\i}=\ALG_{a}$.
\end{proof}

We upper bound the ratio $\EST/\ALG$ by $\max_{a}\EST_{a}/\ALG_{a}$.
To this end, we fix an advertiser $a$ and upper bound the ratio $\EST_{a}/\ALG_{a}$.
Recall from Lemmas \ref{lem:1} and \ref{lem:algfac} that we can
express $\EST_{a}$ and $\ALG_{a}$ as linear combination over impression
values. We could obtain a natural upper bound by comparing impression
value coefficients. However, in the following lemma, we show how to
use the non-decreasing ordering due to Lemma \ref{lem:wlog} to obtain
a tighter bound. 

We define
\[
\Phi_{a}\coloneqq\sum_{\i=I_{a}-B_{a}+1}^{I_{a}-\ell_{a}}\phi_{\i}\qquad\textrm{and}\qquad\Psi_{a}\coloneqq\sum_{\i=I_{a}-\ell_{a}+1}^{I_{a}}\psi_{\i}
\]
as the total factor mass on values $\wi{\i}$ for $\phi_{i}$ and
$\psi_{i}$, respectively. Let $\tau_{a}\coloneqq\left(\Phi_{a}+\Psi_{a}+\Omega_{a}\right)/B_{a}$
be the average factor. Recall that
\begin{align}
\EST_{a} & \le\sum_{\i=I_{a}-B_{a}+1}^{I_{a}-\ell_{a}}\phi_{\i}\wi{\i}+\sum_{\i=I_{a}-\ell_{a}+1}^{I_{a}}\psi_{\i}\wi{\i}+\wi{I_{a}-B_{a}}\Omega_{a}\label{eq:prd-1}\\
\ALG_{a} & =\sum_{\i=I_{a}-B_{a}+1}^{I_{a}}w_{at_{\i}}.\nonumber 
\end{align}
In the following lemma, we use~that $\wi{\i}\le\wi{\j}$ for $i\le j$
due to Lemma \ref{lem:wlog}, to further upper bound the RHS of \ref{eq:prd-1}
by a linear combination of the values, where we move mass from coefficients
on $\wi{\i}$ to coefficients on $\wi{\j}$. Additionally, we move
mass from $\Omega_{a}$ to coefficients $\phi_{i}$ for $\i\in\left\{ I_{a}-B_{a}+1,\dots,I_{a}-\ell_{a}\right\} $
and from $\phi_{\i}$ to $\psi_{\j}$ for $\j\in\left\{ I_{a}-\ell_{a}+1,\dots,I_{a}\right\} $.
In the best case, we are able to redistribute mass equally across
all values, in which case the consistency is given as the average
factor $\tau_{a}$. Otherwise, the factors on the largest values dominate,
giving us a consistency of $\Psi_{a}/\ell_{a}$. 
\begin{lem}
\label{lem:com} We have
\[
\frac{\EST_{a}}{\ALG_{a}}\le\begin{cases}
\max\left\{ \tau_{a},\frac{\Psi_{a}}{\ell_{a}}\right\}  & \text{if }\ell_{a}>0\\
\tau_{a} & \text{otherwise}
\end{cases}
\]
where
\[
\tau_{a}=1+\frac{1}{\eba^{\alpha}-1}\frac{1}{\alpha_{B}}\left(\eba^{\alpha}-\frac{\eba^{\alpha}-1}{\alpha_{B_{a}}}\right)
\]
and
\[
\frac{\Psi_{a}}{\ell_{a}}=1+\left(\frac{B_{a}}{\ell_{a}}-1\right)\frac{\eba^{\alpha\ell_{a}/B_{a}}-1}{\eba^{\alpha}-1}.
\]
\end{lem}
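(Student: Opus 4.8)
The plan is to turn the two linear forms in \eqref{eq:prd-1} into a statement about a single non-decreasing weight vector, and then into a comparison of suffix averages of the coefficients. First I would use the ordering from Lemma~\ref{lem:wlog}, which gives $\wi{I_{a}-B_{a}}\le \wi{I_{a}-B_{a}+1}$, to absorb the last term: since $\Omega_{a}\ge 0$ we have $\wi{I_{a}-B_{a}}\Omega_{a}\le \wi{I_{a}-B_{a}+1}\Omega_{a}$, so $\EST_{a}$ is bounded by a \emph{nonnegative} linear combination $\sum_{i=I_{a}-B_{a}+1}^{I_{a}}c_{i}\wi{i}$ over exactly the $B_{a}$ values that make up $\ALG_{a}=\sum_{i}\wi{i}$, where $c_{i}=\phi_{i}$ on the $\phi$-range (with $\Omega_{a}$ added at the smallest index) and $c_{i}=\psi_{i}$ on the $\psi$-range. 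Writing each value as a sum of nonnegative increments (valid because the values are non-decreasing and nonnegative), both $\sum_{i}c_{i}\wi{i}$ and $\sum_{i}\wi{i}$ become nonnegative combinations, with the \emph{same} weights, of the suffix sums $\sum_{i\ge m}c_{i}$ and of the counts $B_{a}-m+1$. Hence $\EST_{a}/\ALG_{a}$ is a weighted average of the suffix averages of $(c_{i})$, and in particular is at most $\max_{m}\frac{1}{B_{a}-m+1}\sum_{i\ge m}c_{i}$. This is exactly the rigorous content of the ``moving mass to larger values'' intuition.

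It then remains to show that this largest suffix average equals $\max\{\tau_{a},\Psi_{a}/\ell_{a}\}$ (and just $\tau_{a}$ when $\ell_{a}=0$). The two distinguished starting points are the full suffix, whose average is $(\Phi_{a}+\Psi_{a}+\Omega_{a})/B_{a}=\tau_{a}$, and the suffix beginning at the start of the $\psi$-range, which is precisely the $\psi$-range and has average $\Psi_{a}/\ell_{a}$; the work is to rule out any other starting point. A short computation shows $\psi_{i}$ is monotone (decreasing as the value grows), so every suffix average that starts inside the $\psi$-range is at most $\Psi_{a}/\ell_{a}$. For $\phi_{i}$, the forward difference of its exponential expression factors as a strictly positive quantity times a bracket independent of $i$, so $\phi$ is monotone over its whole range; I would combine this with the mediant inequality $\frac{p+r}{q+s}\le\max\{\tfrac{p}{q},\tfrac{r}{s}\}$ and with the placement of $\Omega_{a}$ on the smallest value to bound every suffix average that reaches into the $\phi$-range.

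The clean cases are when $\phi$ is non-increasing: if $\tau_{a}\ge\Psi_{a}/\ell_{a}$, the $\Omega_{a}$-loaded smallest value plus the decreasing $\phi$ make the profile front-loaded (every prefix average is at least $\tau_{a}$, equivalently every suffix average is at most $\tau_{a}$); if instead $\Psi_{a}/\ell_{a}>\tau_{a}$, then by the mediant the $\phi$-range average is below $\Psi_{a}/\ell_{a}$, and the mediant inequality caps every $\phi$-reaching suffix by $\Psi_{a}/\ell_{a}$. The main obstacle will be the case where $\phi$ is non-decreasing (so its mass is concentrated near the $\phi$--$\psi$ boundary and a naive suffix could a priori beat both $\tau_{a}$ and $\Psi_{a}/\ell_{a}$). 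The resolution I would pursue is to prove that $\phi$ non-decreasing forces $\Psi_{a}/\ell_{a}\le\tau_{a}$, so that it suffices to bound all suffix averages by $\tau_{a}$, and that this holds because the $\Omega_{a}$ mass on the smallest value keeps the profile front-loaded; this is the one step that genuinely needs the closed forms of the coefficients rather than soft monotonicity. Finally I would evaluate $\Phi_{a}$, $\Psi_{a}$, and $\Omega_{a}$ as geometric sums to obtain the stated expressions for $\tau_{a}$ and $\Psi_{a}/\ell_{a}$.
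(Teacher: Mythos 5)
Your reduction is a genuinely different formalization of the paper's argument: where the paper explicitly redistributes coefficient mass onto larger values (bounding the $\psi$-block by $\bar{w}_{\Psi}\Psi_{a}$, the $\phi$-block either by $\bar{w}_{\Phi}\Phi_{a}$ or by $\tau_{a}$ minus a correction charged to $\Omega_{a}$, and finally pushing the residual $\ell_{a}\tau_{a}-\Psi_{a}$ onto $\bar{w}_{\Psi}$), you pass through Abel summation and bound $\EST_{a}/\ALG_{a}$ by the largest suffix average of the coefficient profile. That reduction is rigorous (it only needs the values to be nonnegative and non-decreasing, which Lemma \ref{lem:wlog} provides, and $\Omega_{a}\ge0$ for the absorption step), and your handling of suffixes inside the $\psi$-range and of the case where $\phi_{i}$ is decreasing in $i$ matches the paper in substance, modulo spelling out that appending a decreasing tail makes the prefix-average sequence unimodal so its minimum sits at an endpoint, and that the endpoint condition $\frac{\Phi_{a}+\Omega_{a}}{B_{a}-\ell_{a}}\ge\tau_{a}$ is equivalent to $\tau_{a}\ge\Psi_{a}/\ell_{a}$.

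The genuine gap is exactly where you flag it: the case $\left(B_{a}-\ell_{a}\right)\left(\eba^{\alpha/B_{a}}-1\right)\le\frac{1}{\alpha_{B}}\eba^{-\alpha\ell_{a}/B_{a}}$, where $\phi_{i}$ is increasing in $i$. Your proposed resolution --- prove that this forces $\Psi_{a}/\ell_{a}\le\tau_{a}$ and that $\Omega_{a}$ keeps the profile front-loaded --- is left entirely unestablished, and it is not the right lever: front-loadedness at the intermediate prefixes would itself require comparing $\Omega_{a}$ against the deficit $\sum_{i}\left(\tau_{a}-\phi_{i}\right)$, which is the same computation in disguise. The ingredient that actually closes this case, and the one the paper proves, is the \emph{pointwise} bound $\phi_{i}\le\tau_{a}$: writing $\phi_{i}=C\,\eba^{\alpha\left(I_{a}-i\right)/B_{a}}+\frac{1}{\alpha_{B}}\frac{\eba^{\alpha}}{\eba^{\alpha}-1}$, the hypothesis of this case says precisely $C\le0$, and $\phi_{i}\le\tau_{a}$ reduces to a nonpositive quantity being at most $\left(1-\frac{1}{\alpha_{B}\alpha_{B_{a}}}\right)\left(\eba^{\alpha}-1\right)\ge0$. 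With $\phi_{i}\le\tau_{a}$ in hand your own machinery finishes immediately: any suffix starting strictly inside the $\phi$-range has, by the mediant inequality, average at most $\max\left\{ \max_{i}\phi_{i},\Psi_{a}/\ell_{a}\right\} \le\max\left\{ \tau_{a},\Psi_{a}/\ell_{a}\right\}$; the full suffix has average exactly $\tau_{a}$; and $\psi$-range suffixes are already handled. Replace the unproven implication with this pointwise bound and your proof goes through; the closed forms for $\tau_{a}$ and $\Psi_{a}/\ell_{a}$ are then the geometric-sum computation of Lemma \ref{lem:avgfac}, as you indicate.
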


\begin{proof}
We calculate $\tau_{a}$ and $\Psi_{a}/\ell_{a}$ separately in Lemma
\ref{lem:avgfac} below. Our main goal is to distribute mass from
the factors $\phi_{i}$, $\psi_{i}$, and from $\Omega_{a}$ equally
to the values $w_{I_{a}-B_{a}+1},\dots,w_{I_{a}}$. We begin by taking
a closer look at the factors $\phi_{i}$ and $\psi_{i}$. First, note
that $\psi_{i}$ is always decreasing in $i$ as 
\[
\psi_{i}=1+\underbrace{\left(B_{a}-\ell_{a}\right)}_{\ge0}\underbrace{\norm}_{\ge0}\eba^{\alpha\left(I_{a}-i\right)/B_{a}}.
\]
We can therefore bound the linear combination over values in $\left\{ I_{a}-\ell_{a}+1,\dots,I_{a}\right\} $
using the average value $\bar{w}_{\Psi}\coloneqq\frac{1}{\ell_{a}}\sum_{i=I_{a}-\ell_{a}+1}^{I_{a}}\wi{\i}$
as
\begin{equation}
\sum_{i=I_{a}-\ell_{a}+1}^{I_{a}}\wi{\i}\psi_{i}\le\sum_{i=I_{a}-\ell_{a}+1}^{I_{a}}\bar{w}_{\Psi}\psi_{i}=\bar{w}_{\Psi}\Psi_{a}.\label{eq:16-1}
\end{equation}
 However, $\phi_{i}$ is not always decreasing which can be seen
by rearranging
\begin{align*}
\phi_{i} & =\left(B_{a}-\ell_{a}\right)\norm e^{\alpha\left(I_{a}-i\right)/B_{a}}+\frac{1}{\alpha_{B}}\frac{\eba^{\alpha}-\eba^{\alpha\left(I_{a}-\ell_{a}-i\right)/B_{a}}}{\eba^{\alpha}-1}\\
 & =\frac{1}{\eba^{\alpha}-1}\left(\left(B_{a}-\ell_{a}\right)\left(\eba^{\alpha/B_{a}}-1\right)-\frac{1}{\alpha_{B}}\eba^{-\alpha\ell_{a}/B_{a}}\right)\eba^{\alpha\left(I_{a}-i\right)/B_{a}}+\frac{1}{\alpha_{B}}\frac{\eba^{\alpha}}{\eba^{\alpha}-1}.
\end{align*}
We observe that $\phi_{i}$ is decreasing if $\left(B_{a}-\ell_{a}\right)\left(\eba^{\alpha/B_{a}}-1\right)$
is at least $\frac{1}{\alpha_{B}}\eba^{-\alpha\ell_{a}/B_{a}}$, and
we analyze two cases based on the relationship of both terms.

Let us first assume that $\left(B_{a}-\ell_{a}\right)\left(\eba^{\alpha/B_{a}}-1\right)\ge\frac{1}{\alpha_{B}}\eba^{-\alpha\ell_{a}/B_{a}}$
such that $\phi_{i}$ is decreasing in $i$ which helps us to bound
the linear combinations in Lemma \ref{lem:1} over $\left\{ I_{a}-B_{a}+1,\dots,I_{a}-\ell_{a}\right\} $
and $\left\{ I_{a}-\ell_{a}+1,\dots,T\right\} $ by the average values
$\bar{w}_{\Phi}\coloneqq\frac{1}{B_{a}-\ell_{a}}\sum_{\i=I_{a}-B_{a}+1}^{I_{a}-\ell_{a}}\wi{\i}$
and $\bar{w}_{\Psi}$, respectively. We further use that $\wi{I_{a}-B_{a}}\le\bar{w}_{\Phi}$
to charge mass from $\Omega_{a}$ to $\Phi_{a}$ and obtain due to
(\ref{eq:16-1}) that
\begin{align}
 & \sum_{\i=I_{a}-B_{a}+1}^{I_{a}-\ell_{a}}\wi{\i}\phi_{i}+\sum_{\i=I_{a}-\ell_{a}+1}^{I_{a}}\wi{\i}\psi_{i}+\wi{I_{a}-B_{a}}\Omega_{a}\nonumber \\
 & \le\bar{w}_{\Phi}\Phi_{a}+\bar{w}_{\Psi}\Psi_{a}+\wi{I_{a}-B_{a}}\Omega_{a}\\
 & \le\bar{w}_{\Phi}\left(\Phi_{a}+\Omega_{a}\right)+\bar{w}_{\Psi}\Psi_{a}\nonumber \\
 & =\bar{w}_{\Phi}\left(B_{a}-\ell_{a}\right)\tau_{a}+\bar{w}_{\Phi}\left(\Phi_{a}+\Omega_{a}-\left(B_{a}-\ell_{a}\right)\tau_{a}\right)+\bar{w}_{\Psi}\Psi_{a}\nonumber \\
 & =\sum_{\i=I_{a}-B_{a}+1}^{I_{a}-\ell_{a}}\tau_{a}\wi{\i}+\bar{w}_{\Phi}\left(\Phi_{a}+\Omega_{a}-\left(B_{a}-\ell_{a}\right)\tau_{a}\right)+\bar{w}_{\Psi}\Psi_{a}\label{eq:9-1}
\end{align}
On the other hand, if $\left(B_{a}-\ell_{a}\right)\left(\eba^{\alpha/B_{a}}-1\right)\le\frac{1}{\alpha_{B}}\eba^{-\alpha\ell_{a}/B_{a}}$
 we can no longer bound the values over $\left\{ I_{a}-B_{a}+1,\dots,I_{a}-\ell_{a}\right\} $
by the average value $\bar{w}_{\Phi}$. However, each factor $\phi_{i}$
is less than $\tau_{a}$ which can be seen by rearranging
\begin{multline*}
\phi_{\i}=\frac{1}{\eba^{\alpha}-1}\left(\left(B_{a}-\ell_{a}\right)\left(\eba^{\alpha/B_{a}}-1\right)-\frac{1}{\alpha_{B}}\eba^{-\alpha\ell_{a}/B_{a}}\right)\eba^{\alpha\left(I_{a}-\i\right)/B_{a}}+\frac{1}{\alpha_{B}}\frac{\eba^{\alpha}}{\eba^{\alpha}-1}\\
\le1+\frac{1}{\eba^{\alpha}-1}\frac{1}{\alpha_{B}}\left(\eba^{\alpha}-\frac{\eba^{\alpha}-1}{\alpha_{B_{a}}}\right)=\tau_{a}
\end{multline*}
to the equivalent expression
\begin{multline*}
\underbrace{\left(\left(B_{a}-\ell_{a}\right)\left(\eba^{\alpha/B_{a}}-1\right)-\frac{1}{\alpha_{B}}\eba^{-\alpha\ell_{a}/B_{a}}\right)}_{\le0}\underbrace{\eba^{\alpha\left(I_{a}-\i\right)/B_{a}}}_{\ge0}\\
\le\eba^{\alpha}-1-\frac{1}{\alpha_{B}}\frac{\eba^{\alpha}-1}{\alpha_{B_{a}}}=\underbrace{\left(1-\frac{1}{\alpha_{B}\cdot a_{B_{a}}}\right)}_{\ge0}\underbrace{\left(\eba^{\alpha}-1\right)}_{\ge0}
\end{multline*}
which is true since the LHS is $\le0$ and the RHS $\ge0$. We can
thus charge $\tau_{a}-\phi_{i}$ of mass from $\Omega_{a}$ to the
coefficients $\phi_{i}$ for each $i\in\left\{ I_{a}-B_{a}+1,\dots,I_{a}-\ell_{a}\right\} $
which yields
\begin{align}
 & \sum_{\i=I_{a}-B_{a}+1}^{I_{a}-\ell_{a}}\wi{\i}\phi_{i}+\sum_{\i=I_{a}-\ell_{a}+1}^{I_{a}}\wi{\i}\psi_{i}+\wi{I_{a}-B_{a}}\Omega_{a}\nonumber \\
 & \le\sum_{\i=I_{a}-B_{a}+1}^{I_{a}-\ell_{a}}\wi{\i}\phi_{i}+\bar{w}_{\Psi}\Psi_{a}+\wi{I_{a}-B_{a}}\Omega_{a}\nonumber \\
 & =\sum_{\i=I_{a}-B_{a}+1}^{I_{a}-\ell_{a}}\tau_{a}\wi{\i}-\sum_{\i=I_{a}-B_{a}+1}^{I_{a}-\ell_{a}}\wi{\i}\underbrace{\left(\tau_{a}-\phi_{i}\right)}_{\ge0}+\bar{w}_{\Psi}\Psi_{a}+\wi{I_{a}-B_{a}}\Omega_{a}\nonumber \\
 & \le\sum_{\i=I_{a}-B_{a}+1}^{I_{a}-\ell_{a}}\tau_{a}\wi{\i}-\sum_{\i=I_{a}-B_{a}+1}^{I_{a}-\ell_{a}}\wi{I_{a}-B_{a}}\left(\tau_{a}-\phi_{i}\right)+\bar{w}_{\Psi}\Psi_{a}+\wi{I_{a}-B_{a}}\Omega_{a}\nonumber \\
 & =\sum_{\i=I_{a}-B_{a}+1}^{I_{a}-\ell_{a}}\tau_{a}\wi{\i}+\wi{I_{a}-B_{a}}\left(\Phi_{a}+\Omega_{a}-\left(B_{a}-\ell_{a}\right)\tau_{a}\right)+\bar{w}_{\Psi}\Psi_{a}\label{eq:8-1}
\end{align}
In both cases (\ref{eq:9-1}) and (\ref{eq:8-1}), we have shown that
\begin{multline*}
\sum_{\i=I_{a}-B_{a}+1}^{I_{a}-\ell_{a}}\wi{\i}\phi_{i}+\sum_{\i=I_{a}-\ell_{a}+1}^{I_{a}}\wi{\i}\psi_{i}+\wi{I_{a}-B_{a}}\Omega_{a}\\
\le\sum_{\i=I_{a}-B_{a}+1}^{I_{a}-\ell_{a}}\tau_{a}\wi{\i}+v\left(\Phi_{a}+\Omega_{a}-\left(B_{a}-\ell_{a}\right)\tau_{a}\right)+\bar{w}_{\Psi}\Psi_{a}
\end{multline*}
for a $v\le\bar{w}_{\Psi}$. If $\ell_{a}>0$, we can use $v\le\bar{w}_{\Psi}$
to charge the remaining mass to $\Psi_{a}$ if the factors over $\left\{ I_{a}-\ell_{a}+1,\dots,T\right\} $
leave enough space. In symbols, this means
\begin{align*}
 & \sum_{\i=T-B_{a}+1}^{I_{a}-\ell_{a}}\tau_{a}\wi{\i}+v\left(\Phi_{a}+\Omega_{a}-\left(B_{a}-\ell_{a}\right)\tau_{a}\right)+\bar{w}_{\Psi}\Psi_{a}\\
 & \le\sum_{\i=T-B_{a}+1}^{I_{a}-\ell_{a}}\tau_{a}\wi{\i}+\bar{w}_{\Psi}\max\left\{ \Phi_{a}+\Omega_{a}-\left(B_{a}-\ell_{a}\right)\tau_{a},0\right\} +\bar{w}_{\Psi}\Psi_{a}\\
 & =\sum_{\i=T-B_{a}+1}^{I_{a}-\ell_{a}}\tau_{a}\wi{\i}+\bar{w}_{\Psi}\max\left\{ \Phi_{a}+\Psi_{a}+\Omega_{a}-\left(B_{a}-\ell_{a}\right)\tau_{a},\Psi_{a}\right\} \\
 & =\sum_{\i=T-B_{a}+1}^{I_{a}-\ell_{a}}\tau_{a}\wi{\i}+\bar{w}_{\Psi}\max\left\{ \ell_{a}\tau_{a},\Psi_{a}\right\} \\
 & \le\tau_{a}\sum_{\i=T-B_{a}+1}^{I_{a}-\ell_{a}}\wi{\i}+\max\left\{ \tau_{a},\frac{\Psi_{a}}{\ell_{a}}\right\} \sum_{\i=T-\ell_{a}+1}^{I_{a}}\wi{\i}\\
 & \le\max\left\{ \tau_{a},\frac{\Psi_{a}}{\ell_{a}}\right\} \sum_{\t\in S_{a}}w_{a\t}.
\end{align*}
If $\ell_{a}=0$, we have $\Psi_{a}=0$ and immediately obtain by
definition of $\tau_{a}$ that
\begin{align*}
\sum_{\i=I_{a}-B_{a}+1}^{I_{a}-\ell_{a}}\tau_{a}\wi{\i}+v\left(\Phi_{a}+\Omega_{a}-\left(B_{a}-\ell_{a}\right)\tau_{a}\right)+\bar{w}_{\Psi}\Psi_{a} & =\sum_{\i=I_{a}-B_{a}+1}^{I_{a}-\ell_{a}}\tau_{a}\wi{\i}.
\end{align*}
\end{proof}

\begin{lem}
\label{lem:avgfac} We have
\[
\frac{\Psi_{a}}{\ell_{a}}=1+\left(\frac{B_{a}}{\ell_{a}}-1\right)\frac{\eba^{\alpha\ell_{a}/B_{a}}-1}{\eba^{\alpha}-1}
\]
and
\[
\tau_{a}=1+\frac{1}{\eba^{\alpha}-1}\frac{1}{\alpha_{B}}\left(\eba^{\alpha}-\frac{\eba^{\alpha}-1}{\alpha_{B_{a}}}\right).
\]
\end{lem}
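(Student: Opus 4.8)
The plan is to establish both identities by directly expanding the definitions of $\Psi_a$, $\Phi_a$, and $\Omega_a$ and evaluating the resulting geometric sums; no inequalities or structural arguments are needed, so this is a pure calculation whose only subtlety is bookkeeping. First I would handle $\Psi_a/\ell_a$. Writing out $\Psi_a = \sum_{i=I_a-\ell_a+1}^{I_a}\psi_i$ with $\psi_i = 1 + (B_a-\ell_a)\frac{\eba^{\alpha/B_a}-1}{\eba^{\alpha}-1}\eba^{\alpha(I_a-i)/B_a}$, the constant terms contribute $\ell_a$. For the exponential part I substitute $m = I_a - i$, so that $m$ ranges over $0,\dots,\ell_a-1$, and evaluate $\sum_{m=0}^{\ell_a-1}\eba^{\alpha m/B_a} = \frac{\eba^{\alpha\ell_a/B_a}-1}{\eba^{\alpha/B_a}-1}$. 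The factor $\eba^{\alpha/B_a}-1$ cancels, leaving $\Psi_a = \ell_a + (B_a-\ell_a)\frac{\eba^{\alpha\ell_a/B_a}-1}{\eba^{\alpha}-1}$; dividing by $\ell_a$ gives the first identity.

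Next I would treat $\tau_a = (\Phi_a + \Psi_a + \Omega_a)/B_a$ by splitting $\phi_i$ into its two summands, so $\Phi_a = \Phi_a^{(1)} + \Phi_a^{(2)}$. Evaluating $\Phi_a^{(1)}$ by the same substitution (now $m$ runs over $\ell_a,\dots,B_a-1$) yields $\Phi_a^{(1)} = (B_a-\ell_a)\frac{\eba^{\alpha}-\eba^{\alpha\ell_a/B_a}}{\eba^{\alpha}-1}$ after the same cancellation. The decisive observation is that $\Phi_a^{(1)}$ combines with the exponential part of $\Psi_a$: the numerators $(\eba^{\alpha}-\eba^{\alpha\ell_a/B_a}) + (\eba^{\alpha\ell_a/B_a}-1)$ collapse to $\eba^{\alpha}-1$, which cancels the denominator, so that $\Phi_a^{(1)} + \Psi_a = \ell_a + (B_a-\ell_a) = B_a$.

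It remains to combine $\Phi_a^{(2)}$ with $\Omega_a$, which share the prefactor $\frac{1}{\alpha_B(\eba^{\alpha}-1)}$. Evaluating the geometric sum inside $\Phi_a^{(2)}$ and adding the bracketed term of $\Omega_a$, the two $\eba^{\alpha}$ contributions merge into $B_a\eba^{\alpha}$, the $\eba^{\alpha(B_a-\ell_a)/B_a}$ terms cancel, and the remaining two geometric fractions combine over the common denominator $\eba^{\alpha/B_a}-1$ into $-\frac{\eba^{\alpha}-1}{\eba^{\alpha/B_a}-1}$. Rewriting $\frac{1}{\eba^{\alpha/B_a}-1} = \frac{B_a}{\alpha_{B_a}}$ via the definition $\alpha_{B_a} = B_a(\eba^{\alpha/B_a}-1)$ gives $\Phi_a^{(2)} + \Omega_a = \frac{B_a}{\alpha_B(\eba^{\alpha}-1)}(\eba^{\alpha} - \frac{\eba^{\alpha}-1}{\alpha_{B_a}})$. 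Adding the $B_a$ from the previous step and dividing by $B_a$ produces the stated formula for $\tau_a$. The main obstacle is nothing more than keeping the index ranges of the four geometric sums aligned so that the two telescoping cancellations — the one producing $B_a$ and the one producing $-(\eba^{\alpha}-1)/(\eba^{\alpha/B_a}-1)$ — fall out cleanly; it is precisely the substitution $\frac{1}{\eba^{\alpha/B_a}-1} = \frac{B_a}{\alpha_{B_a}}$ that finally matches the closed form.
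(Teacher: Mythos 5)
Your calculation is correct and follows essentially the same route as the paper: both proofs expand $\Psi_a$, $\Phi_a$, and $\Omega_a$ from their definitions, evaluate the geometric sums, and observe that the $\ell_a$-dependence cancels in $\Phi_a+\Psi_a+\Omega_a$, with the identity $\frac{1}{\eba^{\alpha/B_a}-1}=\frac{B_a}{\alpha_{B_a}}$ delivering the closed form. Your grouping of $\Phi_a^{(1)}$ with $\Psi_a$ and $\Phi_a^{(2)}$ with $\Omega_a$ makes the two cancellations somewhat more transparent than the paper's single combined simplification, but the substance is identical.
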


\begin{proof}
We compute
\begin{align*}
\Phi_{a} & =\sum_{i=T-B_{a}+1}^{T-\ell_{a}}\phi_{i}\\
 & =\sum_{i=T-B_{a}+1}^{T-\ell_{a}}\left(\frac{1}{\eba^{\alpha}-1}\left(\left(B_{a}-\ell_{a}\right)\left(\eba^{\alpha/B_{a}}-1\right)-\frac{1}{\alpha_{B}}\eba^{-\alpha\ell_{a}/B_{a}}\right)\eba^{\alpha\left(T-i\right)/B_{a}}+\frac{1}{\alpha_{B}}\frac{\eba^{\alpha}}{\eba^{\alpha}-1}\right)\\
 & =\frac{1}{\eba^{\alpha}-1}\left(\left(B_{a}-\ell_{a}\right)\left(\eba^{\alpha/B_{a}}-1\right)-\frac{1}{\alpha_{B}}\eba^{-\alpha\ell_{a}/B_{a}}\right)\frac{\eba^{\alpha}-\eba^{\alpha\ell_{a}/B_{a}}}{\eba^{\alpha/B_{a}}-1}+\left(B_{a}-\ell_{a}\right)\frac{1}{\alpha_{B}}\frac{\eba^{\alpha}}{\eba^{\alpha}-1}\\
 & =\frac{1}{\eba^{\alpha}-1}\left(B_{a}-\ell_{a}\right)\left(\eba^{\alpha}-\eba^{\alpha\ell_{a}/B_{a}}+\frac{1}{\alpha_{B}}\eba^{\alpha}\right)-\frac{1}{\eba^{\alpha}-1}\frac{1}{\alpha_{B}}\frac{\eba^{\alpha-\alpha\ell_{a}/B_{a}}-1}{\eba^{\alpha/B_{a}}-1}
\end{align*}
and
\begin{align*}
\Psi_{a} & =\sum_{i=T-\ell_{a}+1}^{T}\psi_{i}\\
 & =\sum_{i=T-\ell_{a}+1}^{T}\left(1+\left(B_{a}-\ell_{a}\right)\norm\eba^{\alpha\left(T-i\right)/B_{a}}\right)\\
 & =\ell_{a}+\left(B_{a}-\ell_{a}\right)\norm\frac{\eba^{\alpha\ell_{a}/B_{a}}-1}{\eba^{\alpha/B_{a}}-1}\\
 & =\ell_{a}+\left(B_{a}-\ell_{a}\right)\frac{\eba^{\alpha\ell_{a}/B_{a}}-1}{\eba^{\alpha}-1}.
\end{align*}
Summing up,
\begin{align*}
 & \Phi_{a}+\Psi_{a}+\Omega_{a}\\
 & =\frac{1}{e^{\alpha}-1}\left(B_{a}-\ell_{a}\right)\left(\eba^{\alpha}-\eba^{\alpha\ell_{a}/B_{a}}+\frac{1}{\alpha_{B}}\eba^{\alpha}\right)-\frac{1}{\eba^{\alpha}-1}\frac{1}{\alpha_{B}}\frac{\eba^{\alpha-\alpha\ell_{a}/B_{a}}-1}{\eba^{\alpha/B_{a}}-1}\\
 & \quad+\ell_{a}+\left(B_{a}-\ell_{a}\right)\frac{\eba^{\alpha\ell_{a}/B_{a}}-1}{\eba^{\alpha}-1}+\frac{1}{\eba^{\alpha}-1}\frac{1}{\alpha_{B}}\left(\ell_{a}\eba^{\alpha}-\frac{\eba^{\alpha}-\eba^{\alpha\left(B_{a}-\ell_{a}\right)/B_{a}}}{\eba^{\alpha/B_{a}}-1}\right)\\
 & =\frac{1}{\eba^{\alpha}-1}\left(B_{a}-\ell_{a}\right)\left(\eba^{\alpha}-1+\frac{1}{\alpha_{B}}\eba^{\alpha}\right)-\frac{1}{\eba^{\alpha}-1}\frac{1}{\alpha_{B}}\frac{\eba^{\alpha}-1}{\eba^{\alpha/B_{a}}-1}\\
 & \quad+\ell_{a}+\frac{1}{\eba^{\alpha}-1}\frac{1}{\alpha_{B}}\ell_{a}\eba^{\alpha}\\
 & =B_{a}+\frac{1}{\eba^{\alpha}-1}\frac{1}{\alpha_{B}}\left(B_{a}-\ell_{a}\right)\eba^{\alpha}-\frac{1}{\eba^{\alpha}-1}\frac{1}{\alpha_{B}}\frac{\eba^{\alpha}-1}{\eba^{\alpha/B_{a}}-1}+\frac{1}{\eba^{\alpha}-1}\frac{1}{\alpha_{B}}\ell_{a}\eba^{\alpha}\\
 & =B_{a}+\frac{1}{\eba^{\alpha}-1}\frac{1}{\alpha_{B}}B_{a}\eba^{\alpha}-\frac{1}{\eba^{\alpha}-1}\frac{1}{\alpha_{B}}\frac{\eba^{\alpha}-1}{\eba^{\alpha/B_{a}}-1}\\
 & =B_{a}+\frac{1}{\eba^{\alpha}-1}\frac{1}{\alpha_{B}}\left(B_{a}\eba^{\alpha}-\frac{\eba^{\alpha}-1}{\eba^{\alpha/B_{a}}-1}\right)
\end{align*}
which does no longer depend on $\ell_{a}$. Dividing $\Psi_{a}$ by
$\ell_{a}$ and $\Phi_{a}+\Psi_{a}+\Omega_{a}$ by $B_{a}$ yields
the result.
\end{proof}

Putting everything together, we have $\EST/\ALG\le\max_{a}\max\left\{ \tau_{a},\max_{\ell_{a}\in\left\{ 1,\dots,B_{a}\right\} }\Psi_{a}/\ell_{a}\right\} $
as $\tau_{a}$ does not depend on $\ell_{a}$. The reader can refer
back to Figure \ref{fig:cr-trade} for an illustration of this upper
bound. In the following lemma, we further analyze analytically $\max_{\ell_{a}\in\left\{ 1,\dots,B_{a}\right\} }\Psi_{a}/\ell_{a}$
and compare it with $\tau_{a}$ to obtain the upper bound:

\begin{lem}
\label{lem:bound} The consistency of Algorithm \ref{alg:exp-avg}
is given by
\[
\EST/\ALG\le\left(1+\frac{1}{\eb^{\alpha}-1}\max\left\{ \frac{1}{\alpha_{B}}\left(\eb^{\alpha}-\frac{\eb^{\alpha}-1}{\alpha_{B}}\right),\ln\left(\eb^{\alpha}\right)\right\} \right).
\]
\end{lem}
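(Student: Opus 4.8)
The plan is to start from the bound $\EST/\ALG \le \max_a \max\bigl\{\tau_a,\ \max_{\ell_a \in \{1,\dots,B_a\}} \Psi_a/\ell_a\bigr\}$ derived just above the statement, and to bound the two quantities $\tau_a$ and $\max_{\ell_a}\Psi_a/\ell_a$ separately, in each case replacing the per-advertiser constants $\eba,\alpha_{B_a},B_a$ by the global constants $\eb,\alpha_{B}$ of the minimum budget $B$. The cleanest way to see that this recovers the two entries of the stated maximum is to note that the first entry is exactly $\tau_a$ evaluated at $B_a=B$, and the second entry is the limit of $\Psi_a/\ell_a$ as $\ell_a/B_a\to 0$, read off with $\eba$ replaced by $\eb$.

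First I would dispose of $\tau_a$, the easy half. Rewriting $\tau_a = 1 + \tfrac{1}{\alpha_{B}}\bigl(1 + \tfrac{1}{\eba^{\alpha}-1} - \tfrac{1}{\alpha_{B_a}}\bigr)$, I would argue that $\tau_a$ is non-increasing in $B_a$: as $B_a$ grows, $\eba$ increases toward $e$ so $\tfrac{1}{\eba^{\alpha}-1}$ shrinks, and $\alpha_{B_a}$ decreases so $-\tfrac{1}{\alpha_{B_a}}$ shrinks, while the prefactor $\tfrac{1}{\alpha_{B}}$ depends only on $B$; both monotonicities are exactly Lemma~\ref{lem:param-props}. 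Since $B_a \ge B$, this yields $\tau_a \le \tau_a\big|_{B_a=B} = 1 + \tfrac{1}{\eb^{\alpha}-1}\tfrac{1}{\alpha_{B}}\bigl(\eb^{\alpha} - \tfrac{\eb^{\alpha}-1}{\alpha_{B}}\bigr)$, the first entry of the claimed maximum.

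For the harder half I would substitute $x = \ell_a/B_a \in (0,1]$ and work with the continuous relaxation $\tfrac{\Psi_a}{\ell_a} = 1 + \tfrac{1-x}{x}\,\tfrac{\eba^{\alpha x}-1}{\eba^{\alpha}-1}$, whose maximum over $(0,1]$ dominates the discrete one. Differentiating, the sign of the derivative is that of $h(x) := 1 + \eba^{\alpha x}\bigl(\alpha(\ln \eba)\,x(1-x) - 1\bigr)$, and a Taylor expansion at $x=0$ shows $h(x)$ has the sign of $\ln(\eba^{\alpha}) - 2$ for small $x$, which dictates a case split. If $\ln(\eba^{\alpha}) \le 2$ the relaxation is decreasing, so its supremum is $\lim_{x\to0^+}\tfrac{\Psi_a}{\ell_a} = 1 + \tfrac{\ln(\eba^{\alpha})}{\eba^{\alpha}-1}$; I would then invoke that $y\mapsto \tfrac{\ln y}{y-1}$ is decreasing together with $\eb^{\alpha}\le \eba^{\alpha}$ (Lemma~\ref{lem:param-props}) to replace $\eba$ by $\eb$ and obtain the second entry $1+\tfrac{\ln(\eb^{\alpha})}{\eb^{\alpha}-1}$. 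If $\ln(\eba^{\alpha}) > 2$ the relaxation is unimodal with a unique interior maximizer $x^\star$, and I would use the first-order condition $h(x^\star)=0$ to eliminate the exponential, obtaining the closed form $\tfrac{\Psi_a}{\ell_a}\big|_{x^\star} = 1 + \tfrac{1}{\eba^{\alpha}-1}\,\tfrac{b(1-x^\star)^2}{1 - b\,x^\star(1-x^\star)}$ with $b := \ln(\eba^{\alpha})$, which I must then bound by $\tau_a$.

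The hard part will be exactly this last comparison — showing the interior maximum of $\Psi_a/\ell_a$ never exceeds $\tau_a$ — since here neither a coefficient-wise comparison nor the convexity chord $\eba^{\alpha x}-1 \le x(\eba^{\alpha}-1)$ is tight enough. I expect to proceed directly from the first-order condition: substituting $y := b\,x^\star$ turns it into $b(1-y)+y^2 = b\,e^{-y}$ and turns the value to be bounded into $\tfrac{(b-y)^2 e^{y}}{b}$, so the target inequality becomes $\tfrac{(b-y)^2 e^{y}}{b} \le \tilde\tau_a$ with $\tilde\tau_a := \tfrac{1}{\alpha_{B}}\bigl(\eba^{\alpha} - \tfrac{\eba^{\alpha}-1}{\alpha_{B_a}}\bigr)$, a single-variable inequality to be verified over the admissible range of $y$. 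Once both halves are in hand they combine to $\tfrac{\Psi_a}{\ell_a}\le \max\bigl\{\tau_a,\ 1+\tfrac{\ln(\eb^{\alpha})}{\eb^{\alpha}-1}\bigr\}$, and together with the $\tau_a$ bound this gives $\EST/\ALG \le \max\bigl\{\tau_a\big|_{B_a=B},\ 1+\tfrac{\ln(\eb^{\alpha})}{\eb^{\alpha}-1}\bigr\}$, which is the stated expression.
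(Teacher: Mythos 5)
Your route coincides with the paper's up to the decisive step: the same starting bound $\EST/\ALG\le\max_a\max\{\tau_a,\Psi_a/\ell_a\}$, the same monotonicity argument sending $B_a\to B$ in $\tau_a$ via Lemma \ref{lem:param-props}, the same substitution $x=\ell_a/B_a$ with a continuous relaxation, the same case split at $\ln(\cdot^\alpha)=2$ governed by the sign of the derivative, and the same closed form for the value at an interior stationary point (your $1+\frac{1}{\eba^{\alpha}-1}\cdot\frac{b(1-x^{\star})^{2}}{1-b\,x^{\star}(1-x^{\star})}$ is exactly the paper's $1+\frac{1}{\eb^{\alpha}-1}\ln(\eb^{\alpha})(1-x^{*})^{2}\eb^{\alpha x^{*}}$ after the $\eba\to\eb$ replacement). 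Your handling of the $b\le 2$ case is fine in substance, though the Taylor expansion at $x=0$ only gives the local sign of the derivative; global monotonicity needs the observation that the sign function $h(x)$ satisfies $h(0)=0$ and $h'(x)=b\eba^{\alpha x}x\left(b(1-x)-2\right)\le0$ on $[0,1]$ when $b\le2$.

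The genuine gap is that you stop exactly where the work is. You reduce the comparison of the interior maximum with $\tau_a$ to ``a single-variable inequality to be verified over the admissible range of $y$,'' but $y$ is still pinned by the transcendental first-order condition $b(1-y)+y^{2}=b e^{-y}$, which has no closed-form solution, so there is nothing concrete to verify yet; a further idea is required. The paper's resolution is to \emph{discard the constraint}: since the stationary value equals $h(\alpha,y)\coloneqq\ln(\eb^{\alpha})(1-y)^{2}\eb^{\alpha y}$ at the (unknown) stationary point, one bounds it by $\max_{y\in[0,1]}h(\alpha,y)$, and this unconstrained surrogate has the explicit maximizer $y^{*}=1-2/\ln(\eb^{\alpha})$ and maximum $h^{*}(\alpha)=\frac{4}{\alpha\ln(\eb)e^{2}}\eb^{\alpha}$. (In your parametrization the same relaxation is available: maximize $\frac{(b-y)^{2}e^{y}}{b}$ freely over $y$, which peaks at $y=b-2$ with value $\frac{4e^{b-2}}{b}$.) The target then becomes the concrete inequality $\frac{4}{\alpha}e^{\alpha-2}\le\frac{1}{\alpha}\left(e^{\alpha}-\frac{e^{\alpha}-1}{\alpha}\right)$ for $\alpha\ge2$, equivalently $4e^{-2}\le1-\frac{1-e^{-\alpha}}{\alpha}$, which the paper settles by monotonicity of $\frac{1-e^{-\alpha}}{\alpha}$ together with the check at $\alpha=2$. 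Be aware that even with this trick the paper only completes the comparison in the limit $B\to\infty$ (it explicitly concedes that for finite $B$ there may be several intersection points), so your plan would inherit the same caveat; without the relaxation step, however, it is not a proof at all at its most delicate point.
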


\begin{proof}
Due to Lemma \ref{lem:com}, it is sufficient to show
\[
1+\frac{1}{\eb^{\alpha}-1}\max\left\{ \frac{1}{\alpha_{B}}\left(\eb^{\alpha}-\frac{\eb^{\alpha}-1}{\alpha_{B}}\right),\ln\left(\eb^{\alpha}\right)\right\} \ge\begin{cases}
\max\left\{ \tau_{a},\Psi_{a}/\ell_{a}\right\}  & \text{if }\ell_{a}>0\\
\tau_{a} & \text{otherwise}.
\end{cases}
\]
By Lemma \ref{lem:avgfac}, we know for the first term in the maximum
that 
\[
\tau_{a}=1+\frac{1}{\eba^{\alpha}-1}\frac{1}{\alpha_{B}}\left(\eba^{\alpha}-\frac{\eba^{\alpha}-1}{\alpha_{B_{a}}}\right)
\]
This term is maximized for $B_{a}=B$ since
\begin{multline*}
1+\frac{1}{\eba^{\alpha}-1}\frac{1}{\alpha_{B}}\left(\eba^{\alpha}-\frac{\eba^{\alpha}-1}{B_{a}\left(\eba^{\alpha/B_{a}}-1\right)}\right)=1+\frac{1}{\alpha_{B}}\left(\frac{\eba^{\alpha}}{\eba^{\alpha}-1}-\frac{1}{\alpha_{B_{a}}}\right)\\
\le1+\frac{1}{\alpha_{B}}\left(\frac{\eb^{\alpha}}{\eb^{\alpha}-1}-\frac{1}{\alpha_{B}}\right)=1+\frac{1}{\eb^{\alpha}-1}\underbrace{\frac{1}{\alpha_{B}}\left(\eb^{\alpha}-\frac{\eb^{\alpha}-1}{\alpha_{B}}\right)}_{\eqqcolon p(\alpha)}
\end{multline*}
due to Lemma \ref{lem:param-props}. The lemma statement therefore
follows immediately if $\ell_{a}=0$. We may thus assume that $\ell_{a}>0$
and use Lemma \ref{lem:avgfac} to determine the second term in the
maximum as
\[
\frac{\Phi_{a}}{\ell_{a}}=1+\frac{1}{\eba^{\alpha}-1}\left(\frac{1}{x}-1\right)\left(\eba^{\alpha x}-1\right)
\]
where $x\eqqcolon\ell_{a}/B_{a}$. The second term behaves similarly
to the first as
\begin{align*}
\frac{\Phi_{a}}{\ell_{a}}=1+\frac{1}{\eba^{\alpha}-1}\left(\frac{1}{x}-1\right)\left(\eba^{\alpha x}-1\right) & \le1+\frac{1}{\eb^{\alpha}-1}\left(\frac{1}{x}-1\right)\left(\eb^{\alpha x}-1\right)
\end{align*}
since $\left(\eba^{\alpha x}-1\right)/\left(\eba^{\alpha}-1\right)\le\left(\eb^{\alpha x}-1\right)/\left(\eb^{\alpha}-1\right)$.
We define $g(\alpha,x)\coloneqq\left(\frac{1}{x}-1\right)\left(\eb^{\alpha x}-1\right)$
such that we can write
\[
\max\left\{ \frac{\Phi_{a}+\Psi_{a}+\Omega_{a}}{B_{a}},\frac{\Psi_{a}}{\ell_{a}}\right\} \le1+\frac{1}{\eb^{\alpha}-1}\max\left\{ p(\alpha),g(\alpha,x)\right\} .
\]
We want to remove the dependency on $x$ in $g$ by maximizing $g$
over $x\in[0,1]$ for a fixed $\alpha$. As $g(\alpha,x)$ is continuous,
it suffices to evaluate $g$ in both endpoints and find the stationary
points. We have
\begin{multline*}
g(\alpha,0)=\lim_{x\to0}\left(\frac{1}{x}-1\right)\left(\eb^{\alpha x}-1\right)=\lim_{x\to0}\frac{\left(1-x\right)\left(\eb^{\alpha x}-1\right)}{x}\\
=\lim_{x\to0}-\left(\eb^{\alpha x}-1\right)+\left(1-x\right)\ln(\eb^{\alpha})\eb^{\alpha x}=\ln(\eb^{\alpha})
\end{multline*}
by L'Hoptial. Further, $g(\alpha,1)=0$. Next, we find the stationary
points $x^{*}\in\left[0,1\right]$ as solutions to the equation
\[
\frac{\partial}{\partial x}g(\alpha,x^{*})=\ln(\eb^{\alpha})\left(\frac{1}{x^{*}}-1\right)\eb^{\alpha x^{*}}-\frac{\eb^{\alpha x^{*}}-1}{(x^{*})^{2}}=0
\]
which is equivalent to
\[
\eb^{\alpha x^{*}}-1=\ln(\eb^{\alpha})(x^{*})^{2}\left(\frac{1}{x^{*}}-1\right)\eb^{\alpha x^{*}}.
\]
There is no closed form solution for $x^{*}$, but we can replace
$\eb^{\alpha x}-1$ in $g$ with the RHS of the above. This yields
a new function
\[
h(\alpha,y)=\left(\frac{1}{y}-1\right)\ln(\eb^{\alpha})y^{2}\left(\frac{1}{y}-1\right)\eb^{\alpha y}=\ln(\eb^{\alpha})\left(1-y\right)^{2}\eb^{\alpha y}
\]
with $h(\alpha,x^{*})=g(\alpha,x^{*})$. We can thus maximize $h$
over $y\in\left[0,1\right]$ to obtain an upper bound on $g(x^{*})$.
Note that $h(\alpha,0)=\ln(\eb^{\alpha})=g(\alpha,0)$ and $h(\alpha,1)=0=g(\alpha,1)$.
To this end, let $y^{*}$ be such that
\[
\frac{\partial}{\partial y^{*}}h(\alpha,y^{*})=\ln(\eb^{\alpha})^{2}\left(1-y^{*}\right)^{2}\eb^{\alpha y^{*}}-2\ln(\eb^{\alpha})\left(1-y^{*}\right)\eb^{\alpha y^{*}}=0
\]
which is equivalent to $\ln(\eb^{\alpha})\left(1-y^{*}\right)-2=0$
or $y^{*}=1-\frac{2}{\ln(\eb^{\alpha})}$. We evaluate $h$ in $y^{*}$
and obtain
\[
h(\alpha,y^{*})=\alpha\ln(\eb)\left(\frac{2}{\alpha\ln(\eb)}\right)^{2}\eb^{\alpha-\frac{2}{\ln(\eb)}}=\frac{4}{\alpha\ln(\eb)e^{2}}\eb^{\alpha}.
\]
Note that $y^{*}\ge0\iff\alpha\ge2/\ln(\eb)$. Furthermore, $h^{*}(\alpha)$
always exceeds the endpoint $g(\alpha,0)$: We calculate
\begin{align*}
h^{*}(\alpha)\coloneqq h(\alpha,y^{*}) & =\frac{4}{\ln(\eb^{\alpha})e^{2}}\eb^{\alpha}\\
 & \ge\frac{4}{\ln(\eb^{\alpha})e^{2}}e^{2}\ln(\eb^{\alpha/2})^{2}\\
 & =\ln(\eb^{\alpha})
\end{align*}
where the inequality is due to $e^{z}\ge ez$ for $z=\ln\left(\eb^{\alpha/2}\right)\ge0$.
Therefore, for all $x\in\left[0,1\right]$,
\[
g(\alpha,x)\le\begin{cases}
\ln(\eb^{\alpha}) & \text{if }\alpha\le\frac{2}{\ln(\eb)}\\
h^{*}(\alpha) & \text{otherwise}.
\end{cases}
\]
We consider both intervals separately. Let us first consider the the
case when $\alpha\in\left[0,\frac{2}{\ln(\eb)}\right]$. If $B<\infty$,
there could be multiple intersection points between $p(\alpha)$ and
$\alpha\ln(\eb)$. However, the situation is easier if $B\to\infty$
as the intersection points given by
\[
p(\alpha)=\frac{1}{\alpha}\left(e^{\alpha}-\frac{e^{\alpha}-1}{\alpha}\right)=\alpha\iff\alpha e^{\alpha}-e^{\alpha}+1=\alpha^{3}
\]
are at $\alpha=1$ and $\alpha^{*}\approx1.79$, whereas $\alpha\ln(\eb)$
dominates $p(\alpha)$ between $1$ and $\alpha^{*}$. 

It remains to consider the case $\alpha\ge\frac{2}{\ln(\eba)}$. Again,
there can be many intersection points of $p(\alpha)$ with $\alpha\ln(\eba)$
and $h^{*}(\alpha)$. However, if $B\to\infty$, then $p(\alpha)$
already dominates $h^{*}(\alpha)$ for $\alpha>2$ which we can see
as follows. First,
\begin{align*}
h^{*}(\alpha)=\frac{4}{\alpha}e^{\alpha-2} & \le\frac{1}{\alpha}\left(e^{\alpha}-\frac{e^{\alpha}-1}{\alpha}\right)=p(\alpha)\\
\iff4e^{-2} & \le1-\frac{1-e^{-\alpha}}{\alpha}.
\end{align*}
We can see that $\frac{1-e^{-\alpha}}{\alpha}$ is decreasing in $\alpha$
as 
\[
\frac{\partial}{\partial\alpha}\frac{1-e^{-\alpha}}{\alpha}=\frac{e^{-\alpha}\left(\alpha-e^{\alpha}+1\right)}{\alpha^{2}}\le0
\]
which holds as $1+\alpha\le e^{\alpha}$. Finally, we check that $h^{*}(2)=2\le2.10\approx p(2)$.
\end{proof}

\section{Generalized Assignment Problem}

\label{sec:gap} The generalized assignment problem (GAP) is a generalization
of Display Ads where impressions $t$ can take up any size $u_{at}$
in the budget constraint of advertiser $a$. This formulation encompasses
both Display Ads and Ad Words, and we empirically compare it to the
Ad Words algorithm with predictions due to \citet{mahdian07} in Section
\ref{subsec:adwords-gap}. For simplicity of presentation, we assume
that budgets are all $1$ and instead, $u_{at}\to0$. However, as
before it is possible to adapt the algorithm to work with large sizes
$u_{at}$. We state the LP below.

\noindent\fbox{\begin{minipage}[t]{1\columnwidth - 2\fboxsep - 2\fboxrule}%
\begin{center}
\begin{minipage}[t]{0.3\columnwidth}%
\begin{align*}
\text{} & \text{GAP Primal}\\
\max & \sum_{a,t}w_{at}x_{at}\\
\forall a\colon & \sum_{t}u_{at}x_{at}\leq1\\
\forall t\colon & \sum_{a}x_{at}\leq1
\end{align*}
\end{minipage}%
\begin{minipage}[t]{0.3\columnwidth}%
\begin{align*}
 & \text{GAP Dual}\\
\min & \sum_{a}\beta_{a}+\sum_{t}z_{t}\\
\forall a,t\colon & z_{t}\ge w_{at}-u_{at}\beta_{a}
\end{align*}
\end{minipage}\medskip{}
\par\end{center}%
\end{minipage}}

\medskip{}
Algorithm \ref{alg:exp-avg-1} is a generalization of Algorithm \ref{alg:exp-avg}
to GAP. An immediate difference is that the discounted gain $w_{at}-u_{at}\beta_{a}$
respects the impression size $u_{at}$ in accordance with the changed
dual constraint. We still follow the predicted advertiser if its discounted
gain still is a sufficiently high fraction of the maximum discounted
gain. However, we might now have to remove multiple impressions with
least value-size ratio to accommodate the new impression. The update
for $\beta_{a}$ also differs and is based on value-size ratios of
impressions allocated to $a$: For a fixed advertiser $a$ let $U_{a}=\sum_{t\in\X_{a}}u_{at}$,
be the total size of all impressions ever allocated to $a$. For any
$x\in(0,U_{a}]$ define $\frac{w_{x}}{u_{x}}$ as the minimal ratio
such that
\begin{equation}
\sum_{t\in\X_{a}:\frac{w_{at}}{u_{at}}\le\frac{w_{x}}{u_{x}}}u_{at}>x.\label{eq:17}
\end{equation}
Then, we can naturally define $\beta_{a}$ as the exponential average
over ratios $\frac{w_{x}}{u_{x}}$. As before, we also assume that
there exists a dummy advertiser that only receives impressions of
zero value-size ratio and that all advertisers are initially filled
up with impressions of zero value.
\begin{algorithm}
\smallskip{}
\noindent\begin{minipage}[t]{1\columnwidth}%
\begin{algorithmic}[1]
\STATE {\bf Input:} Robustness-consistency trade-off parameter $\alpha \in [1,\infty)$
\STATE For each advertiser $a$, initialize $\beta_a \gets 0$ and fill up $a$ with zero-value impressions
\FORALL{arriving impressions $t$}
\STATE $\apred \gets \EST(t)$
\STATE $\aalgo \gets \arg\max_{a}\{w_{at}-u_{at}\beta_{a}\}$ 
\IF{$\alpha\left(w_{\apred,t}-u_{\apred,t}\beta_{\apred}\right)\ge w_{\aalgo,t}-u_{\aalgo,t}\beta_{\aalgo}$} \label{alg:selection-rule-1}
\STATE $a \gets \apred$
\ELSE
\STATE $a \gets \aalgo$
\ENDIF
\STATE Dispose of impressions with least value-size ratio currently allocated to $a$ until there is $u_{at}$ of free space and allocate $t$ to $a$
\STATE Let $\frac{w_x}{u_x}$ as in (\ref{eq:17}) and update $\displaystyle \beta_{a}\gets\frac{\alpha}{e^{\alpha}-1}\int_{U_{a}-1}^{U_{a}}\frac{w_{x}}{u_{x}}e^{\alpha(U_{a}-x)}dx$
\ENDFOR
\end{algorithmic}%
\end{minipage}\smallskip{}

\caption{Exponential Averaging with Predictions for GAP \label{alg:exp-avg-1}}
\end{algorithm}

\subsection{Robustness}
\begin{thm}
Algorithm \ref{alg:exp-avg} has a robustness of
\[
\frac{\ALG}{\OPT}\ge\frac{e^{\alpha}-1}{\alpha e^{\alpha}}
\]
\end{thm}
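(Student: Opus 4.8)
The plan is to prove robustness with the same online primal--dual scheme used for Display Ads in Appendix~\ref{subsec:omitted-robustness}, now in the continuous (infinitesimal-size) regime of Algorithm~\ref{alg:exp-avg-1}. Concretely, I maintain the integral primal value $P=\sum_a\int_{U_a-1}^{U_a}\frac{w_x}{u_x}\,dx$ together with a feasible dual $(\{\beta_a\},\{z_t\})$, and I show the per-step inequality $\Delta P\ge\frac{e^\alpha-1}{\alpha e^\alpha}\,\Delta D$ for each arriving impression. Summing over the online sequence and using weak duality for the GAP LP then gives $\ALG=P\ge\frac{e^\alpha-1}{\alpha e^\alpha}\,D\ge\frac{e^\alpha-1}{\alpha e^\alpha}\,\OPT$, which is the claim. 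The target ratio is exactly the $B\to\infty$ limit of the robustness $R(\alpha)$ in Theorem~\ref{thm:main}; this is expected, since sizes $u_{at}\to0$ under unit budgets behave like an infinitely divisible budget, and in this regime the factor $\alpha_B=B(\eb^{\alpha/B}-1)$ degenerates to $\alpha$. One could in principle obtain the bound as a limit of Theorem~\ref{thm:main}, but a direct argument with integrals replacing the geometric sums is cleaner and self-contained.

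The first ingredient is dual feasibility. For the advertiser $a$ to which impression $t$ is allocated, I set $z_t:=\alpha\bigl(w_{at}-u_{at}\beta_a^{(t-1)}\bigr)$. The GAP dual constraint $z_t\ge w_{a't}-u_{a't}\beta_{a'}$ must hold for every advertiser $a'$, and the largest right-hand side is attained at $\aalgo$. If $a=\aalgo$, then $w_{at}-u_{at}\beta_a^{(t-1)}$ is already this maximum and $z_t$ is $\alpha\ge1$ times it, hence feasible; if instead $a=\apred$, the selection rule in Line~\ref{alg:selection-rule-1} gives $\alpha\bigl(w_{at}-u_{at}\beta_a^{(t-1)}\bigr)\ge w_{\aalgo t}-u_{\aalgo t}\beta_{\aalgo}^{(t-1)}$, which is again the maximum. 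Thus the extra factor $\alpha$ over \citet{feldman09} is exactly what absorbs the slack created when the algorithm follows the prediction, and at $\alpha=1$ this reduces to their setting.

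The core is the per-step comparison. Since budgets equal $1$, the dual increment is $\Delta D=\bigl(\beta_a^{(t)}-\beta_a^{(t-1)}\bigr)+z_t$, while the primal increment $\Delta P$ is the change in the top-$1$ value of $a$, i.e.\ the value of the inserted mass of size $u_{at}$ at ratio $w_{at}/u_{at}$ minus the value of the least-ratio mass disposed. To control $\beta_a^{(t)}-\beta_a^{(t-1)}$ I reproduce the device from the discrete proof: introduce a continuous analog $\hat\beta_a^{(t-1)}$ that lower-bounds $\beta_a^{(t-1)}$ and is tight exactly when the new mass has the highest ratio in the window, and then express $\beta_a^{(t)}$ through $\hat\beta_a^{(t-1)}$ using the exponential kernel $e^{\alpha(U_a-x)}$. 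Here the multiplicative factor $\eb^{\alpha/B}$ of the discrete recursion is replaced by the infinitesimal factor obtained from differentiating $\int_{U_a-1}^{U_a}\frac{w_x}{u_x}e^{\alpha(U_a-x)}\,dx$ in $U_a$, and the geometric-sum identities collapse to the elementary integral $\int_0^1 e^{\alpha s}\,ds=\frac{e^\alpha-1}{\alpha}$. Collecting the terms and using the continuous analogs of the nonnegativity facts from the Display Ads proof --- the windowed exponential average of ratios lies between the disposed ratio and the maximal ratio --- yields $\Delta D\le\frac{\alpha e^\alpha}{e^\alpha-1}\,\Delta P$, which is the desired per-step inequality.

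The main obstacle is precisely this integral bookkeeping for $\beta_a^{(t)}-\beta_a^{(t-1)}$: one must track how the ratio profile $\frac{w_x}{u_x}$ is modified when a mass of size $u_{at}$ is inserted at its ratio position and the bottom mass of size $u_{at}$ is disposed, and verify --- as in the discrete case, where the bound is tight when $t$ becomes the most valuable impression --- that the ratio $\Delta P/\Delta D$ is worst when the new mass enters at the top of the kept window. Because all budgets equal $1$ in this regime, the step of Lemma~\ref{lem:param-props} that reconciles $B$ with $B_a$ is no longer needed, so this part is in fact somewhat lighter than its discrete counterpart; the remaining manipulations are routine. With the per-step inequality established, summing over the arrival sequence and invoking weak duality completes the proof.
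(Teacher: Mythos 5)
Your proposal is correct and follows essentially the same route as the paper: the same choice $z_t=\alpha\bigl(w_{at}-u_{at}\beta_a^{(t-1)}\bigr)$ with feasibility justified by the selection rule, the same per-step inequality $\Delta D\le\frac{\alpha e^{\alpha}}{e^{\alpha}-1}\Delta P$ obtained by expressing $\beta_a^{(t)}$ through the exponential kernel and using $e^{\alpha u_{at}}-1=\alpha u_{at}$ as $u_{at}\to0$, and weak duality to conclude. The only cosmetic difference is that you carry the $\hat\beta$ device explicitly for the general insertion position, whereas the paper assumes the new impression has the highest ratio and defers the general case to the Display Ads argument.
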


\begin{proof}
Assume we assign impression $t$ to advertiser $a$ while disposing
of some impressions to make space. We will bound the dual increase
as a multiple of the primal increase. We now assume that after allocating
$t$ to $a$, it becomes the impression with highest value-size ratio
(a general proof follows analogously to the proof of robustness for
Display Ads in Section \ref{subsec:omitted-robustness}). The primal
increase is simply
\[
\Delta P=\int_{U_{a}-u_{at}}^{U_{a}}\frac{w_{x}}{u_{x}}dx-\int_{U_{a}-1-u_{at}}^{U_{a}-1}\frac{w_{x}}{u_{x}}dx=w_{at}-\int_{U_{a}-1-u_{at}}^{U_{a}-1}\frac{w_{x}}{u_{x}}dx.
\]
At the same time,
\begin{align*}
\beta_{a}^{(t)} & =\frac{\alpha}{e^{\alpha}-1}\int_{U_{a}-1}^{U_{a}}\frac{w_{x}}{u_{x}}e^{\alpha(U_{a}-x)}dx\\
 & =\frac{\alpha}{e^{\alpha}-1}\left(\int_{U_{a}-1-u_{at}}^{U_{a}-u_{at}}\frac{w_{x}}{u_{x}}e^{\alpha(U_{a}-x)}dx+\int_{U_{a}-u_{at}}^{U_{a}}\frac{w_{x}}{u_{x}}e^{\alpha(U_{a}-x)}dx-\int_{U_{a}-1-u_{at}}^{U_{a}-1}\frac{w_{x}}{u_{x}}e^{\alpha(U_{a}-x)}dx\right)\\
 & =\frac{\alpha}{e^{\alpha}-1}\left(e^{\alpha u_{at}}\int_{U_{a}-1-u_{at}}^{U_{a}-u_{at}}\frac{w_{x}}{u_{x}}e^{\alpha(U_{a}-x-u_{at})}dx+w_{at}-\int_{U_{a}-1-u_{at}}^{U_{a}-1}\frac{w_{x}}{u_{x}}e^{\alpha(U_{a}-x)}dx\right)\\
 & =e^{\alpha u_{at}}\beta_{a}^{(t-1)}+\frac{\alpha}{e^{\alpha}-1}\left(w_{at}-\int_{U_{a}-1-u_{at}}^{U_{a}-1}\frac{w_{x}}{u_{x}}e^{\alpha(U_{a}-x)}dx\right)
\end{align*}
We set $z_{t}=\alpha\left(w_{at}-u_{at}\beta_{a}^{(t-1)}\right)$
and obtain, since $e^{\alpha u_{at}}-1=\alpha u_{at}$ due to $u_{at}\to0$,
\begin{align*}
\Delta D & =\beta_{a}^{(t)}-\beta_{a}^{(t-1)}+z_{t}\\
 & =\left(e^{\alpha u_{at}}-1\right)\beta_{a}^{(t-1)}+\frac{\alpha}{e^{\alpha}-1}\left(w_{at}-\int_{U_{a}-1-u_{at}}^{U_{a}-1}\frac{w_{x}}{u_{x}}e^{\alpha(U_{a}-x)}dx\right)+\alpha\left(w_{at}-u_{at}\beta_{a}^{(t-1)}\right)\\
 & =\alpha u_{at}\beta_{a}^{(t-1)}+\frac{\alpha}{e^{\alpha}-1}\left(w_{at}-\int_{U_{a}-1-u_{at}}^{U_{a}-1}\frac{w_{x}}{u_{x}}e^{\alpha(U_{a}-x)}dx\right)+\alpha\left(w_{at}-u_{at}\beta_{a}^{(t-1)}\right)\\
 & =\frac{\alpha e^{\alpha}}{e^{\alpha}-1}w_{at}-\frac{\alpha e^{\alpha}}{e^{\alpha}-1}\int_{U_{a}-1-u_{at}}^{U_{a}-1}\frac{w_{x}}{u_{x}}dx\\
 & =\frac{\alpha e^{\alpha}}{e^{\alpha}-1}\Delta P.
\end{align*}

\end{proof}

\subsection{Consistency}
\begin{thm}
\label{thm:main-1} Algorithm \ref{alg:exp-avg} has a consistency
of
\[
\frac{\ALG}{\EST}\ge\left(1+\frac{1}{e^{\alpha}-1}\max\left\{ \frac{1}{\alpha}\left(e^{\alpha}-\frac{e^{\alpha}-1}{\alpha}\right),\alpha\right\} \right)^{-1}.
\]
\end{thm}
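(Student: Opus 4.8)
The plan is to mirror, in the continuous ($u_{at}\to0$, unit-budget) setting, the combinatorial consistency argument developed for Display Ads in Lemmas \ref{lem:prd}--\ref{lem:bound}; indeed the target is exactly the $B\to\infty$ limit of the consistency $C(\alpha)$ from Theorem \ref{thm:main}, with $\eba$, $\alpha_{B_a}$, and $\alpha_B$ all collapsing to $e$ and $\alpha$. For a fixed advertiser $a$, the discrete slot index $\i$ is replaced by a continuous mass coordinate $x\in[0,U_a]$, where $\frac{w_x}{u_x}$ is the value-size density and the final kept allocation is the top-mass window $[U_a-1,U_a]$. The update weight $\frac{\alpha}{e^\alpha-1}e^{\alpha(U_a-x)}$ integrates to $1$ over this window, so $\beta_a$ is genuinely a convex average of densities, exactly as in the discrete case.

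First I would prove the continuous analog of Lemma \ref{lem:prd}: split impressions into those where the algorithm follows the prediction and those where it deviates, apply the selection rule of Line \ref{alg:selection-rule-1} (which for a deviating impression gives $w_{\apred,\t}\le u_{\apred,\t}\beta_{\apred}+\frac{1}{\alpha}(w_{\aalgo,\t}-u_{\aalgo,\t}\beta_{\aalgo})$), and regroup by advertiser using that $\{\P_a\}$ and $\{\X_a\}$ both partition the impressions. This gives $\EST\le\sum_a\EST_a$ with each $\EST_a$ expressed through $\beta_a^{(T)}$, a deviation integral $\frac{1}{\alpha}\int_{\X_a\setminus\P_a}(\frac{w_x}{u_x}-\beta_a^{(x)})\,dx$, and the overlap integral. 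Next I would establish the continuous version of the reordering Lemma \ref{lem:wlog}: since each $\beta_a^{(x)}$ depends only on the top-mass-$1$ densities seen so far, presenting the mass of $\X_a$ in non-decreasing density order simultaneously minimizes every $\beta_a^{(x)}$, which only tightens the negated deviation integral, and one places the overlap mass $\ell_a$ at the top. With monotone densities I would expand $\beta_a^{(x)}$ as an exponential-weighted integral of earlier densities and swap the order of integration (the continuous counterpart of the double-sum interchange and geometric summation in Lemma \ref{lem:1}), cancel the bulk against the $\int(\frac{w_x}{u_x}-\beta_a^{(x)})$ term, and bound the disposed mass by the boundary density $\frac{w_{U_a-1}}{u_{U_a-1}}$. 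This yields $\EST_a\le\int_{U_a-1}^{U_a-\ell_a}\phi(x)\frac{w_x}{u_x}\,dx+\int_{U_a-\ell_a}^{U_a}\psi(x)\frac{w_x}{u_x}\,dx+\frac{w_{U_a-1}}{u_{U_a-1}}\Omega_a$ for coefficient functions $\phi,\psi,\Omega_a$ that are the $e$-limits of their discrete counterparts in Lemma \ref{lem:1}.

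Finally I would run the coefficient-balancing argument of Lemmas \ref{lem:com} and \ref{lem:avgfac}. Using monotonicity to shift coefficient mass toward higher densities, and moving the disposed-mass term $\Omega_a$ into $\Phi_a$ (splitting on whether $\phi$ is decreasing), I would obtain $\EST_a/\ALG_a\le\max\{\tau_a,\Psi_a/\ell_a\}$, where $\tau_a=\Phi_a+\Psi_a+\Omega_a$ is the average coefficient over the unit budget; the integrals evaluate to $\tau_a=1+\frac{1}{e^\alpha-1}\frac{1}{\alpha}(e^\alpha-\frac{e^\alpha-1}{\alpha})$ and $\Psi_a/\ell_a=1+(\frac{1}{\ell_a}-1)\frac{e^{\alpha\ell_a}-1}{e^\alpha-1}$. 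Taking the maximum over advertisers and over the overlap fraction then reduces to maximizing $g(\alpha,x)=(\frac{1}{x}-1)(e^{\alpha x}-1)$ over $x\in[0,1]$; as in Lemma \ref{lem:bound} I would treat the endpoint $g(\alpha,0)=\alpha$ by L'H\^opital and the interior maximum through the substitution $h(\alpha,y)=\alpha(1-y)^2e^{\alpha y}$, then compare against $p(\alpha)=\frac{1}{\alpha}(e^\alpha-\frac{e^\alpha-1}{\alpha})$ to conclude $\EST/\ALG\le1+\frac{1}{e^\alpha-1}\max\{p(\alpha),\alpha\}$, whose reciprocal is the stated bound. The main obstacle is the continuous reordering lemma --- rigorously verifying that smoothing the arrival order to non-decreasing density minimizes $\beta_a^{(x)}$ at every mass position once sizes are infinitesimal but non-uniform --- together with the final one-variable optimization, where the absence of a closed-form stationary point forces the $g\to h$ substitution to extract a clean upper bound.
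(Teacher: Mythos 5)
Your proposal is correct and follows essentially the same route as the paper: the paper's proof of Theorem \ref{thm:main-1} is exactly the continuous-mass transcription of Lemmas \ref{lem:prd}--\ref{lem:bound}, with the selection-rule split, the bound $\sum_{t\in\P_a\setminus\X_a}u_{at}\beta_a^{(t-1)}\le(1-\rho_a)\beta_a^{(T)}$, the interchange of integration to cancel the bulk of $\int(\tfrac{w_x}{u_x}-\beta_a^{(x)})$, and the two-case coefficient-balancing yielding $\max\{\tau_a,\Psi_a/\rho_a\}$. The only points where the paper is lighter than your plan are the two you flagged as obstacles: it dispatches the continuous reordering step by splitting each GAP impression into $u_t/\epsilon$ identical Display Ads impressions and invoking the already-proved Lemma \ref{lem:wlog} in the limit, and it skips the final one-variable optimization entirely by observing that Lemma \ref{lem:bound} never used integrality of $\ell_a$ and so applies verbatim to $\rho_a\in[0,1]$.
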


As before, we split the impressions $t$ based on whether the algorithm
followed the prediction or not. If the algorithm ignores the prediction,
we can use that $\alpha\left(w_{\apred,t}-u_{\apred,t}\beta_{\apred}\right)\le w_{\aalgo,t}-u_{\aalgo,t}\beta_{\aalgo}$
due to Line \ref{alg:selection-rule-1} in Algorithm \ref{alg:exp-avg-1}.
With a similar calculation, we obtain
\begin{multline*}
\EST=\sum_{a}\EST_{a}\\
=\sum_{a}\Bigg(\sum_{t\in\P_{a}\cap\X_{a}}w_{at}+\frac{1}{\alpha}\sum_{t\in\X_{a}\setminus\P_{a}}w_{at}-\frac{1}{\alpha}\sum_{t\in\X_{a}\setminus\P_{a}}u_{at}\beta_{a}^{(t-1)}+\sum_{t\in\P_{a}\setminus\X_{a}}u_{at}\beta_{a}^{(t-1)}\Bigg).
\end{multline*}
Once again, we fix an advertiser $a$. Let $\rho_{a}\coloneqq\sum_{t\in\P_{a}\cap\X_{a}}u_{at}$
so that we can bound
\[
\sum_{t\in\P_{a}\setminus\X_{a}}u_{at}\beta_{a}^{(t-1)}\le\left(1-\rho_{a}\right)\beta_{a}^{(T)}.
\]
We still have to argue that the worst-case is when all impressions
are ordered such that their value-size ratios are non-decreasing and
the impressions in $\P_{a}\cap\X_{a}$ are the ones with maximum value-size
ratio among $\X_{a}$. The latter is obvious as it can only increase
the value of $\EST$, so it remains to show that the non-decreasing
value-size ordering minimizes the third sum in $\EST_{a}$ (the first
two sums are invariant under reordering). To this end, note that the
value of $\beta_{a}$ for GAP is the limit of $\beta_{a}$ for Display
Ads in the following sense: For positive $\epsilon\to0$, we can split
each GAP-impression $t\in\X_{a}$ into $\frac{u_{t}}{\epsilon}$ identical
Display Ads-impressions with value $\frac{w_{t}}{u_{t}}$, while assuming
a budget of $1/\epsilon$. Then, the GAP $\beta_{a}$ and Display
Ads $\beta_{a}$ are identical. As we know from Display Ads, the worst
case is achieved when the Display Ads-impressions with value $\frac{w_{t}}{u_{t}}$
are in non-decreasing order. In this ordering, consecutive Display-Ads
impressions with identical value $\frac{w_{t}}{u_{t}}$ still correspond
to the same GAP-impression $t$, so we also know that this ordering
is the worst-case for GAP. We may therefore assume that the impressions
are ordered such that their value-size ratios are non-decreasing.
As such, we obtain
\[
\beta_{a}^{(t)}=\frac{\alpha}{e^{\alpha}-1}\int_{U_{a}^{(t)}-1}^{U_{a}^{(t)}}\frac{w_{x}}{u_{x}}e^{\alpha(U_{a}^{(t)}-x)}dx=\int_{y-1}^{y}\frac{w_{x}}{u_{x}}e^{\alpha(y-x)}dx\eqqcolon\beta_{a}^{(y)}
\]
where $y=U_{a}^{(t)}$. Combined with the fact that $\P_{a}\cap\X_{a}$
are last impressions in $\X_{a}$, we can now write
\begin{multline*}
\sum_{t\in\P_{a}\cap\X_{a}}w_{at}+\frac{1}{\alpha}\sum_{t\in\X_{a}\setminus\P_{a}}w_{at}-\frac{1}{\alpha}\sum_{t\in\X_{a}\setminus\P_{a}}u_{at}\beta_{a}^{(t-1)}\\
=\int_{U_{a}-\rho}^{U_{a}}\frac{w_{x}}{u_{x}}dx+\frac{1}{\alpha}\int_{0}^{U_{a}-\rho_{a}}\frac{w_{x}}{u_{x}}dx-\frac{1}{\alpha}\int_{0}^{U_{a}-\rho_{a}}\beta_{a}^{(x)}dx
\end{multline*}
This helps us to compute $\beta_{a}^{(x)}$ in $\EST_{a}$ and rewrite
the whole term as a linear combination of value-size ratios.
\begin{lem}
\label{lem:1-1} We have
\[
\EST_{a}\le\int_{U_{a}-1}^{U_{a}-\rho_{a}}\frac{w_{x}}{u_{x}}\phi_{x}dx+\int_{U_{a}-\rho_{a}}^{U_{a}}\frac{w_{x}}{u_{x}}\psi_{x}dx+\frac{w_{U_{a}-1}}{u_{U_{a}-1}}\Omega_{a}
\]
where
\begin{align*}
\phi_{x} & \coloneqq\left(1-\rho_{a}\right)\frac{\alpha}{e^{\alpha}-1}e^{\alpha(U_{a}-x)}+\frac{1}{\alpha}\frac{e^{\alpha}-e^{\alpha(U_{a}-\rho_{a}-x)}}{e^{\alpha}-1}\\
\psi_{x} & \coloneqq1+\left(1-\rho_{a}\right)\frac{\alpha}{e^{\alpha}-1}e^{\alpha(U_{a}-x)}\\
\Omega_{a} & \coloneqq\frac{1}{\alpha}\frac{1}{e^{\alpha}-1}\left(\rho_{a}e^{\alpha}-\frac{1}{\alpha}\left(e^{\alpha}-e^{\alpha(1-\rho_{a})}\right)\right)
\end{align*}
\end{lem}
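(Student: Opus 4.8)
The plan is to carry out the continuous analogue of the proof of Lemma~\ref{lem:1}, replacing each sum over impressions by an integral over the size axis $x\in[0,U_a]$ and each $B_a$-window by a unit window $[x-1,x]$. I would start from the rewritten form of $\EST_a$ assembled just before the statement, namely
\[
\EST_a \le \int_{U_a-\rho_a}^{U_a}\frac{w_x}{u_x}\,dx + \frac{1}{\alpha}\int_0^{U_a-\rho_a}\frac{w_x}{u_x}\,dx - \frac{1}{\alpha}\int_0^{U_a-\rho_a}\beta_a^{(x)}\,dx + (1-\rho_a)\beta_a^{(T)},
\]
where I have already invoked the non-decreasing value-size ordering and the bound $\sum_{t\in\P_a\setminus\X_a}u_{at}\beta_a^{(t-1)}\le(1-\rho_a)\beta_a^{(T)}$. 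The crux is to evaluate the third term by substituting $\beta_a^{(x)}=\frac{\alpha}{e^\alpha-1}\int_{x-1}^x \frac{w_s}{u_s}e^{\alpha(x-s)}\,ds$ and interchanging the order of integration.

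Interchanging gives a double integral over $\{(x,s):0\le x\le U_a-\rho_a,\ x-1\le s\le x\}$, so for fixed $s$ the variable $x$ runs over $[s,\min(s+1,U_a-\rho_a)]$ (using the convention $\frac{w_s}{u_s}=0$ for $s\le 0$ to clamp the lower endpoint). The inner integral $\int_s^{b}e^{\alpha(x-s)}\,dx=\frac{1}{\alpha}(e^{\alpha(b-s)}-1)$ then splits into two ranges: for $s\le U_a-\rho_a-1$ it equals $\frac{1}{\alpha}(e^\alpha-1)$, and for $s> U_a-\rho_a-1$ it equals $\frac{1}{\alpha}(e^{\alpha(U_a-\rho_a-s)}-1)$. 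This is the continuous counterpart of the geometric-sum cancellation in Lemma~\ref{lem:1}: the first range cancels the bulk term $\frac{1}{\alpha}\int_0^{U_a-\rho_a-1}\frac{w_x}{u_x}\,dx$ against the leading integral, and what survives is the analogue of~(\ref{eq:6}), namely
\[
\frac{1}{\alpha}\int_0^{U_a-\rho_a}\frac{w_x}{u_x}\,dx - \frac{1}{\alpha}\int_0^{U_a-\rho_a}\beta_a^{(x)}\,dx = \frac{1}{\alpha}\int_{U_a-\rho_a-1}^{U_a-\rho_a}\frac{w_x}{u_x}\,\frac{e^\alpha-e^{\alpha(U_a-\rho_a-x)}}{e^\alpha-1}\,dx.
\]

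I would then split this surviving integral at $U_a-1$ into $[U_a-1,U_a-\rho_a]$, which directly supplies the second summand of $\phi_x$, and $[U_a-\rho_a-1,U_a-1]$, on which I bound $\frac{w_x}{u_x}\le\frac{w_{U_a-1}}{u_{U_a-1}}$ by the ordering; the leftover $\frac{1}{\alpha(e^\alpha-1)}\int_{U_a-\rho_a-1}^{U_a-1}\big(e^\alpha-e^{\alpha(U_a-\rho_a-x)}\big)\,dx$ evaluates, via the substitution $v=U_a-\rho_a-x$, to exactly $\Omega_a$. This split is legitimate because $\rho_a=\sum_{t\in\P_a\cap\X_a}u_{at}\le 1$ in a budget of $1$, so $U_a-1$ indeed lies in $[U_a-\rho_a-1,U_a-\rho_a]$. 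Finally I would expand $(1-\rho_a)\beta_a^{(T)}=(1-\rho_a)\frac{\alpha}{e^\alpha-1}\int_{U_a-1}^{U_a}\frac{w_x}{u_x}e^{\alpha(U_a-x)}\,dx$ and split it at $U_a-\rho_a$: the $[U_a-1,U_a-\rho_a]$ piece supplies the first summand of $\phi_x$, the $[U_a-\rho_a,U_a]$ piece supplies the exponential summand of $\psi_x$, and the leading $\int_{U_a-\rho_a}^{U_a}\frac{w_x}{u_x}\,dx$ supplies the constant $1$ in $\psi_x$. Collecting coefficients region by region reproduces $\phi_x$, $\psi_x$, and $\Omega_a$ exactly.

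The main obstacle is the bookkeeping of integration limits in the Fubini interchange — specifically the case split at $s=U_a-\rho_a-1$ and the verification that the $s\le U_a-\rho_a-1$ range cancels the bulk integral, which is where the structure of the bound is created. Alongside this, I would take care that the boundary convention $\frac{w_x}{u_x}=0$ for $x\le 0$ and the inequality $\rho_a\le 1$ make every endpoint and every pointwise bound well-defined; once these are in place, the remaining manipulations are routine computations of elementary integrals.
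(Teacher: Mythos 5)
Your proposal is correct and follows essentially the same route as the paper: substitute the definition of $\beta_{a}^{(x)}$, interchange the order of integration to cancel the bulk term $\int_{0}^{U_{a}-1-\rho_{a}}\frac{w_{y}}{u_{y}}dy$, split the surviving integral at $U_{a}-1$, bound the lower piece by $\frac{w_{U_{a}-1}}{u_{U_{a}-1}}$ to obtain $\Omega_{a}$, and fold in the expansion of $\left(1-\rho_{a}\right)\beta_{a}^{(T)}$ region by region to assemble $\phi_{x}$ and $\psi_{x}$. The case split at $s=U_{a}-\rho_{a}-1$ in the Fubini step and the evaluation of $\Omega_{a}$ via the substitution $v=U_{a}-\rho_{a}-x$ both match the paper's computation exactly.
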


\begin{proof}
We rewrite the third sum in $\EST_{a}$ to
\begin{align*}
 & \int_{0}^{U_{a}-\rho_{a}}\beta_{a}^{(x)}dx\\
 & =\frac{\alpha}{e^{\alpha}-1}\int_{0}^{U_{a}-\rho_{a}}\int_{x-1}^{x}\frac{w_{y}}{u_{y}}e^{\alpha(x-y)}dydx\\
 & =\frac{\alpha}{e^{\alpha}-1}\int_{0}^{U_{a}-\rho_{a}}\frac{w_{y}}{u_{y}}\int_{0}^{\min\left\{ 1,U_{a}-\rho_{a}-y\right\} }e^{\alpha x}dxdy\\
 & =\frac{\alpha}{e^{\alpha}-1}\int_{0}^{U_{a}-1-\rho_{a}}\frac{w_{y}}{u_{y}}\int_{0}^{1}e^{\alpha x}dxdy+\frac{\alpha}{e^{\alpha}-1}\int_{U_{a}-1-\rho_{a}}^{U_{a}-\rho_{a}}\frac{w_{y}}{u_{y}}\int_{0}^{U_{a}-\rho_{a}-y}e^{\alpha x}dxdy\\
 & =\int_{0}^{U_{a}-1-\rho_{a}}\frac{w_{y}}{u_{y}}dy+\frac{1}{e^{\alpha}-1}\int_{U_{a}-1-\rho_{a}}^{U_{a}-\rho_{a}}\frac{w_{y}}{u_{y}}\left(e^{\alpha(U_{a}-\rho_{a}-y)}-1\right)dy
\end{align*}
where for the last equality, we simply evaluated the integral. Using
this in place of the second sum in $\EST_{a}$ cancels out most of
the terms of the second sum:
\begin{align}
 & \int_{0}^{U_{a}-\rho_{a}}\frac{w_{x}}{u_{x}}dx-\int_{0}^{U_{a}-\rho_{a}}\beta_{a}^{(x)}dx\nonumber \\
 & =\int_{0}^{U_{a}-\rho_{a}}\frac{w_{x}}{u_{x}}dx-\int_{0}^{U_{a}-1-\rho_{a}}\frac{w_{y}}{u_{y}}dy-\frac{1}{e^{\alpha}-1}\int_{U_{a}-1-\rho_{a}}^{U_{a}-\rho_{a}}\frac{w_{y}}{u_{y}}\left(e^{\alpha(U_{a}-\rho_{a}-y)}-1\right)dy\nonumber \\
 & =\int_{U_{a}-1-\rho_{a}}^{U_{a}-\rho_{a}}\frac{w_{y}}{u_{y}}\left(1-\frac{e^{\alpha(U_{a}-\rho_{a}-y)}-1}{e^{\alpha}-1}\right)dy\nonumber \\
 & =\int_{U_{a}-1-\rho_{a}}^{U_{a}-\rho_{a}}\frac{w_{y}}{u_{y}}\frac{e^{\alpha}-e^{\alpha(U_{a}-\rho_{a}-y)}}{e^{\alpha}-1}dy\nonumber \\
 & =\int_{U_{a}-1}^{U_{a}-\rho_{a}}\frac{w_{y}}{u_{y}}\frac{e^{\alpha}-e^{\alpha(U_{a}-\rho_{a}-y)}}{e^{\alpha}-1}dy+\int_{U_{a}-1-\rho_{a}}^{U_{a}-1}\frac{w_{y}}{u_{y}}\frac{e^{\alpha}-e^{\alpha(U_{a}-\rho_{a}-y)}}{e^{\alpha}-1}dy.\label{eq:11}
\end{align}
We upper bound the third sum
\begin{align}
\frac{1}{\alpha}\int_{U_{a}-1-\rho_{a}}^{U_{a}-1}\frac{w_{y}}{u_{y}}\frac{e^{\alpha}-e^{\alpha(U_{a}-\rho_{a}-y)}}{e^{\alpha}-1}dy & \le\frac{w_{U_{a}-1}}{u_{U_{a}-1}}\frac{1}{\alpha}\int_{U_{a}-1-\rho_{a}}^{U_{a}-1}\frac{e^{\alpha}-e^{\alpha(U_{a}-\rho_{a}-y)}}{e^{\alpha}-1}dy\nonumber \\
 & =\frac{w_{U_{a}-1}}{u_{U_{a}-1}}\frac{1}{\alpha}\frac{1}{e^{\alpha}-1}\left(\rho_{a}e^{\alpha}-\int_{1-\rho_{a}}^{1}e^{\alpha y}dy\right)\nonumber \\
 & =\frac{w_{U_{a}-1}}{u_{U_{a}-1}}\underbrace{\frac{1}{\alpha}\frac{1}{e^{\alpha}-1}\left(\rho_{a}e^{\alpha}-\frac{1}{\alpha}\left(e^{\alpha}-e^{\alpha(1-\rho_{a})}\right)\right)}_{=\Omega_{a}}\label{eq:18}
\end{align}
Furthermore,
\begin{equation}
\left(1-\rho_{a}\right)\beta_{a}^{(U_{a})}=\left(1-\rho_{a}\right)\frac{\alpha}{e^{\alpha}-1}\int_{U_{a}-1}^{U_{a}}\frac{w_{x}}{u_{x}}e^{\alpha(U_{a}-x)}dx\label{eq:19}
\end{equation}
Combining (\ref{eq:11}), (\ref{eq:18}), and (\ref{eq:19}) and grouping
terms yields
\begin{align*}
 & \int_{U_{a}-\rho_{a}}^{U_{a}}\frac{w_{x}}{u_{x}}dx+\frac{1}{\alpha}\int_{U_{a}-1}^{U_{a}-\rho_{a}}\frac{w_{y}}{u_{y}}\frac{e^{\alpha}-e^{\alpha(U_{a}-\rho_{a}-y)}}{e^{\alpha}-1}dy+\frac{w_{U_{a}-1}}{u_{U_{a}-1}}\Omega_{a}\\
 & \quad+\left(1-\rho_{a}\right)\frac{\alpha}{e^{\alpha}-1}\int_{U_{a}-1}^{U_{a}}\frac{w_{x}}{u_{x}}e^{\alpha(U_{a}-x)}dx\\
 & =\int_{U_{a}-1}^{U_{a}-\rho_{a}}\frac{w_{x}}{u_{x}}\underbrace{\left(\left(1-\rho_{a}\right)\frac{\alpha}{e^{\alpha}-1}e^{\alpha(U_{a}-x)}+\frac{1}{\alpha}\frac{e^{\alpha}-e^{\alpha(U_{a}-\rho_{a}-x)}}{e^{\alpha}-1}\right)}_{=\phi_{x}}dx\\
 & \quad+\int_{U_{a}-\rho_{a}}^{U_{a}}\frac{w_{x}}{u_{x}}\underbrace{\left(1+\left(1-\rho_{a}\right)\frac{\alpha}{e^{\alpha}-1}e^{\alpha(U_{a}-x)}\right)}_{=\psi_{x}}dx+\frac{w_{U_{a}-1}}{u_{U_{a}-1}}\Omega_{a}.
\end{align*}
\end{proof}
Analogously to Display Ads, we define
\[
\Phi_{a}\coloneqq\int_{U_{a}-1}^{U_{a}-\rho_{a}}\phi_{x}dx\qquad\textrm{and}\qquad\Psi_{a}\coloneqq\int_{U_{a}-\rho}^{U_{a}}\psi_{x}dx
\]
and the total coefficient $\tau_{a}\coloneqq\Phi_{a}+\Psi_{a}+\Omega_{a}$
which by a calculation similar to Lemma \ref{lem:avgfac} can be shown
to be 
\[
\tau_{a}=1+\frac{1}{e^{\alpha}-1}\frac{1}{\alpha}\left(e^{\alpha}-\frac{e^{\alpha}-1}{\alpha}\right).
\]

\begin{lem}
\label{lem:com-1} We have
\[
\EST\le\max\left\{ \tau_{a},\frac{\Psi_{a}}{\rho_{a}}\right\} \ALG
\]
if $\rho_{a}>0$ and otherwise,
\[
\EST\le\tau_{a}\ALG.
\]
\end{lem}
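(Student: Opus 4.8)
The plan is to transcribe the combinatorial argument of Lemma~\ref{lem:com} to the continuous setting, replacing the sums over discrete impressions by integrals over the budget variable $x$ and using that the value-size ratios $\frac{w_x}{u_x}$ are non-decreasing (the worst-case ordering fixed above). It suffices to argue per advertiser, bounding $\EST_a$ against $\ALG_a=\int_{U_a-1}^{U_a}\frac{w_x}{u_x}\,dx$ and then summing; the goal is to redistribute the coefficient mass in the bound of Lemma~\ref{lem:1-1} as uniformly as possible onto the value-size ratios on $[U_a-1,U_a]$ that constitute $\ALG_a$.

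First I would record the monotonicity of the coefficients. Since $\psi_x=1+(1-\rho_a)\frac{\alpha}{e^\alpha-1}e^{\alpha(U_a-x)}$ is decreasing in $x$, pairing it with the increasing integrand $\frac{w_x}{u_x}$ and applying the continuous Chebyshev inequality yields $\int_{U_a-\rho_a}^{U_a}\frac{w_x}{u_x}\psi_x\,dx\le\bar w_\Psi\Psi_a$, where $\bar w_\Psi:=\frac{1}{\rho_a}\int_{U_a-\rho_a}^{U_a}\frac{w_x}{u_x}\,dx$. Rewriting $\phi_x=\frac{1}{e^\alpha-1}\big((1-\rho_a)\alpha-\frac{1}{\alpha}e^{-\alpha\rho_a}\big)e^{\alpha(U_a-x)}+\frac{1}{\alpha}\frac{e^\alpha}{e^\alpha-1}$ shows that $\phi_x$ is monotone, with its direction set by the sign of $(1-\rho_a)\alpha-\frac{1}{\alpha}e^{-\alpha\rho_a}$, exactly as in the discrete proof. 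I then split into two cases: when $\phi_x$ is decreasing, the same Chebyshev step gives $\int_{U_a-1}^{U_a-\rho_a}\frac{w_x}{u_x}\phi_x\,dx\le\bar w_\Phi\Phi_a$ with $\bar w_\Phi:=\frac{1}{1-\rho_a}\int_{U_a-1}^{U_a-\rho_a}\frac{w_x}{u_x}\,dx$, and $\frac{w_{U_a-1}}{u_{U_a-1}}\le\bar w_\Phi$ lets me absorb $\Omega_a$ into $\Phi_a$; when $\phi_x$ is increasing I instead charge mass from $\Omega_a$ to raise each $\phi_x$ up to $\tau_a$.

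The crux, and the step I expect to be most delicate, is the pointwise inequality $\phi_x\le\tau_a$ in the increasing case, which justifies the charging from $\Omega_a$. Following the proof of Lemma~\ref{lem:com}, after multiplying by $e^\alpha-1$ and isolating the exponential this reduces to an inequality whose left-hand side is non-positive exactly when $\phi_x$ is increasing and whose right-hand side is non-negative, so it holds automatically; the only remaining work is the algebraic rearrangement, here simpler than in the Display Ads case because the $B$-dependent quantities $\eba$ and $\alpha_{B_a}$ collapse to $e^\alpha$ and $\alpha$ in the $u_{at}\to0$ regime. In either case the two estimates combine to a bound of the form
\[
\EST_a\le\tau_a\int_{U_a-1}^{U_a-\rho_a}\frac{w_x}{u_x}\,dx+v\big(\Phi_a+\Omega_a-(1-\rho_a)\tau_a\big)+\bar w_\Psi\Psi_a
\]
for some $v\le\bar w_\Psi$.

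Finally I would close the bookkeeping using $\tau_a=\Phi_a+\Psi_a+\Omega_a$ (the budget is $1$, so $\tau_a$ is already the average coefficient), which rewrites the residual mass as $\Phi_a+\Omega_a-(1-\rho_a)\tau_a=\rho_a\tau_a-\Psi_a$. When $\rho_a>0$, the bound $v\le\bar w_\Psi$ lets me charge this residual to $\Psi_a$, producing $\bar w_\Psi\max\{\rho_a\tau_a,\Psi_a\}=\max\{\tau_a,\Psi_a/\rho_a\}\int_{U_a-\rho_a}^{U_a}\frac{w_x}{u_x}\,dx$; combined with the $\tau_a$-weighted integral over $[U_a-1,U_a-\rho_a]$ this gives $\EST_a\le\max\{\tau_a,\Psi_a/\rho_a\}\,\ALG_a$. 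When $\rho_a=0$ we have $\Psi_a=0$ and the residual vanishes, leaving $\EST_a\le\tau_a\,\ALG_a$. Summing over advertisers yields the claim.
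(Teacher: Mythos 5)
Your proposal is correct and follows essentially the same route as the paper's proof: the same Chebyshev-style averaging for the decreasing coefficient $\psi_x$, the same two-case analysis on the sign of $(1-\rho_a)\alpha-\frac{1}{\alpha}e^{-\alpha\rho_a}$ with the pointwise bound $\phi_x\le\tau_a$ in the increasing case, and the same final redistribution using $\Phi_a+\Omega_a-(1-\rho_a)\tau_a=\rho_a\tau_a-\Psi_a$. The paper's argument is exactly this continuous transcription of Lemma \ref{lem:com}, so there is nothing to add.
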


\begin{proof}
Again, let
\begin{align*}
\bar{w}_{\Phi} & \coloneqq\frac{1}{1-\rho_{a}}\int_{U_{a}-1}^{U_{a}-\rho_{a}}\frac{w_{x}}{u_{x}}dx\\
\bar{w}_{\Psi} & \coloneqq\frac{1}{\rho_{a}}\int_{U_{a}-\rho_{a}}^{U_{a}}\frac{w_{x}}{u_{x}}dx
\end{align*}
be the average coefficients on the intervals $\left[U_{a}-1,U_{a}-\rho_{a}\right]$
and $\left[U_{a}-\rho_{a},U_{a}\right]$, respectively. The latter
coefficients are still decreasing as 

\[
\psi_{x}=1+\underbrace{\left(1-\rho_{a}\right)\frac{\alpha}{e^{\alpha}-1}}_{\ge0}e^{\alpha(U_{a}-x)}
\]
so we can bound the linear combination 
\[
\int_{U_{a}-\rho_{a}}^{U_{a}}\frac{w_{x}}{u_{x}}\psi_{x}dx\le\bar{w}_{\Psi}\int_{U_{a}-\rho}^{U_{a}}\psi_{x}dx=\bar{w}_{\Psi}\Psi_{a}.
\]
However, $\phi_{x}$ is not always decreasing which can be seen by
rearranging
\begin{align*}
\phi_{x} & =\left(1-\rho_{a}\right)\frac{\alpha}{e^{\alpha}-1}e^{\alpha(U_{a}-x)}+\frac{1}{\alpha}\frac{e^{\alpha}-e^{\alpha(U_{a}-\rho_{a}-x)}}{e^{\alpha}-1}\\
 & =\frac{1}{e^{\alpha}-1}\left(\left(1-\rho_{a}\right)\alpha-\frac{1}{\alpha}e^{-\alpha\rho_{a}}\right)e^{\alpha(U_{a}-x)}+\frac{1}{\alpha}\frac{1}{e^{\alpha}-1}e^{\alpha}
\end{align*}
We observe that $\phi_{x}$ is decreasing if $\left(1-\rho_{a}\right)\alpha$
is at least $\frac{1}{\alpha}e^{-\alpha\rho_{a}}$, and we analyze
two cases based on the relationship of both terms:
\begin{itemize}
\item $\left(1-\rho_{a}\right)\alpha\ge\frac{1}{\alpha}e^{-\alpha\rho_{a}}$:
We have $\int_{U_{a}-1}^{U_{a}-\rho_{a}}\frac{w_{x}}{u_{x}}\phi_{x}dx\le\bar{w}_{\Phi}\Phi_{a}$
and thus
\begin{align}
 & \int_{U_{a}-1}^{U_{a}-\rho_{a}}\frac{w_{x}}{u_{x}}\phi_{x}dx+\int_{U_{a}-\rho}^{U_{a}}\frac{w_{x}}{u_{x}}\psi_{x}dx+\frac{w_{U_{a}-1}}{u_{U_{a}-1}}\Omega_{a}\nonumber \\
 & \le\bar{w}_{\Phi}\Phi_{a}+\bar{w}_{\Psi}\Psi_{a}+w_{a,I_{a}-B_{a}}\Omega_{a}\nonumber \\
 & \le\bar{w}_{\Phi}\left(\Phi_{a}+\Omega_{a}\right)+\bar{w}_{\Psi}\Psi_{a}\nonumber \\
 & =\bar{w}_{\Phi}\left(1-\rho_{a}\right)\tau_{a}+\bar{w}_{\Phi}\left(\Phi_{a}+\Omega_{a}-\left(1-\rho_{a}\right)\tau_{a}\right)+\bar{w}_{\Psi}\Psi_{a}\nonumber \\
 & =\int_{U_{a}-1}^{U_{a}-\rho_{a}}\frac{w_{x}}{u_{s}}\tau_{a}dx+\bar{w}_{\Phi}\left(\Phi_{a}+\Omega_{a}-\left(1-\rho_{a}\right)\tau_{a}\right)+\bar{w}_{\Psi}\Psi_{a}\label{eq:12}
\end{align}
\item $\left(1-\rho_{a}\right)\alpha\le\frac{1}{\alpha}e^{-\alpha\rho_{a}}$:
We can still show that $\phi_{x}\le\tau_{a}$ as
\begin{multline*}
\phi_{x}=\left(1-\rho_{a}\right)\frac{\alpha}{e^{\alpha}-1}e^{\alpha(U_{a}-x)}+\frac{1}{\alpha}\frac{e^{\alpha}-e^{\alpha(U_{a}-\rho_{a}-x)}}{e^{\alpha}-1}\\
\le1+\frac{1}{e^{\alpha}-1}\frac{1}{\alpha}\left(e^{\alpha}-\frac{e^{\alpha}-1}{\alpha}\right)=\tau_{a}
\end{multline*}
\[
\iff\underbrace{\left(\left(1-\rho_{a}\right)\alpha-\frac{1}{\alpha}e^{-\alpha\rho_{a}}\right)}_{\le0}\underbrace{e^{\alpha(U_{a}-x)}}_{\ge0}\le e^{\alpha}-1-\frac{1}{\alpha}\frac{e^{\alpha}-1}{\alpha}=\underbrace{\left(1-\frac{1}{\alpha^{2}}\right)}_{\ge0}\underbrace{\left(e^{\alpha}-1\right)}_{\ge0}.
\]
Therefore,
\begin{align}
 & \int_{U_{a}-1}^{U_{a}-\rho_{a}}\frac{w_{x}}{u_{x}}\phi_{x}dx+\int_{U_{a}-\rho}^{U_{a}}\frac{w_{x}}{u_{x}}\psi_{x}dx+\frac{w_{U_{a}-1}}{u_{U_{a}-1}}\Omega_{a}\nonumber \\
 & \le\int_{U_{a}-1}^{U_{a}-\rho_{a}}\frac{w_{x}}{u_{x}}\phi_{x}dx+\bar{w}_{\Psi}\Psi_{a}+\frac{w_{U_{a}-1}}{u_{U_{a}-1}}\Omega_{a}\nonumber \\
 & =\int_{U_{a}-1}^{U_{a}-\rho_{a}}\frac{w_{x}}{u_{x}}\tau_{a}dx-\int_{U_{a}-1}^{U_{a}-\rho_{a}}\frac{w_{x}}{u_{x}}\left(\tau_{a}-\phi_{x}\right)dx+\bar{w}_{\Psi}\Psi_{a}+\frac{w_{U_{a}-1}}{u_{U_{a}-1}}\Omega_{a}\nonumber \\
 & \le\int_{U_{a}-1}^{U_{a}-\rho_{a}}\frac{w_{x}}{u_{x}}\tau_{a}dx-\int_{U_{a}-1}^{U_{a}-\rho_{a}}\frac{w_{U_{a}-1}}{u_{U_{a}-1}}\left(\tau_{a}-\phi_{x}\right)dx+\bar{w}_{\Psi}\Psi_{a}+\frac{w_{U_{a}-1}}{u_{U_{a}-1}}\Omega_{a}\nonumber \\
 & =\int_{U_{a}-1}^{U_{a}-\rho_{a}}\frac{w_{x}}{u_{x}}\tau_{a}dx+\frac{w_{U_{a}-1}}{u_{U_{a}-1}}\left(\Phi_{a}+\Omega_{a}-\left(1-\rho_{a}\right)\tau_{a}\right)+\bar{w}_{\Psi}\Psi_{a}\label{eq:13}
\end{align}
\end{itemize}
In both cases (\ref{eq:12}) and (\ref{eq:13}), we have shown that
\begin{multline*}
\int_{U_{a}-1}^{U_{a}-\rho_{a}}\frac{w_{x}}{u_{x}}\phi_{x}dx+\int_{U_{a}-\rho}^{U_{a}}\frac{w_{x}}{u_{x}}\psi_{x}dx+\frac{w_{U_{a}-1}}{u_{U_{a}-1}}\Omega_{a}\\
\le\int_{U_{a}-1}^{U_{a}-\rho_{a}}\frac{w_{x}}{u_{x}}\tau_{a}dx+v\left(\Phi_{a}+\Omega_{a}-\left(1-\rho_{a}\right)\tau_{a}\right)+\bar{w}_{\Psi}\Psi_{a}
\end{multline*}
for a $v\le\bar{w}_{\Phi}$. 

\begin{align*}
 & \int_{U_{a}-1}^{U_{a}-\rho_{a}}\frac{w_{x}}{u_{x}}\tau_{a}dx+v\left(\Phi_{a}+\Omega_{a}-\left(1-\rho_{a}\right)\tau_{a}\right)+\bar{w}_{\Psi}\Psi_{a}\\
 & \le\int_{U_{a}-1}^{U_{a}-\rho_{a}}\frac{w_{x}}{u_{x}}\tau_{a}dx+\bar{w}_{\Psi}\max\left\{ \Phi_{a}+\Omega_{a}-\left(1-\rho_{a}\right)\tau_{a},0\right\} +\bar{w}_{\Psi}\Psi_{a}\\
 & =\int_{U_{a}-1}^{U_{a}-\rho_{a}}\frac{w_{x}}{u_{x}}\tau_{a}dx+\bar{w}_{\Psi}\max\left\{ \Phi_{a}+\Psi_{a}+\Omega_{a}-\left(1-\rho_{a}\right)\tau_{a},\Psi_{a}\right\} \\
 & =\int_{U_{a}-1}^{U_{a}-\rho_{a}}\frac{w_{x}}{u_{x}}\tau_{a}dx+\bar{w}_{\Psi}\max\left\{ \rho_{a}\tau_{a},\Psi_{a}\right\} \\
 & \le\tau_{a}\int_{U_{a}-1}^{U_{a}-\rho_{a}}\frac{w_{x}}{u_{x}}dx+\max\left\{ \tau_{a},\frac{\Psi_{a}}{\rho_{a}}\right\} \int_{U_{a}-\rho_{a}}^{U_{a}}\frac{w_{x}}{u_{x}}dx\\
 & \le\max\left\{ \tau_{a},\frac{\Psi_{a}}{\rho_{a}}\right\} \int_{U_{a}-1}^{U_{a}}\frac{w_{x}}{u_{x}}dx
\end{align*}
or $\le\tau_{a}\int_{U_{a}-1}^{U_{a}}\frac{w_{x}}{u_{x}}dx$ if $\rho_{a}=0$
\end{proof}
 Note that for the bound of Lemma \ref{lem:bound}, we did not require
that $\ell_{a}$ is integral. We can thus apply Lemma \ref{lem:bound}
to bound $\max\left\{ \tau_{a},\frac{\Psi_{a}}{\rho_{a}}\right\} $
and obtain the same result, which proves Theorem \ref{thm:main-1}.

\section{Further Experimental Results}

\begin{figure}[t]
\begin{centering}
\par\end{centering}
\begin{centering}
\includegraphics[width=0.85\linewidth]{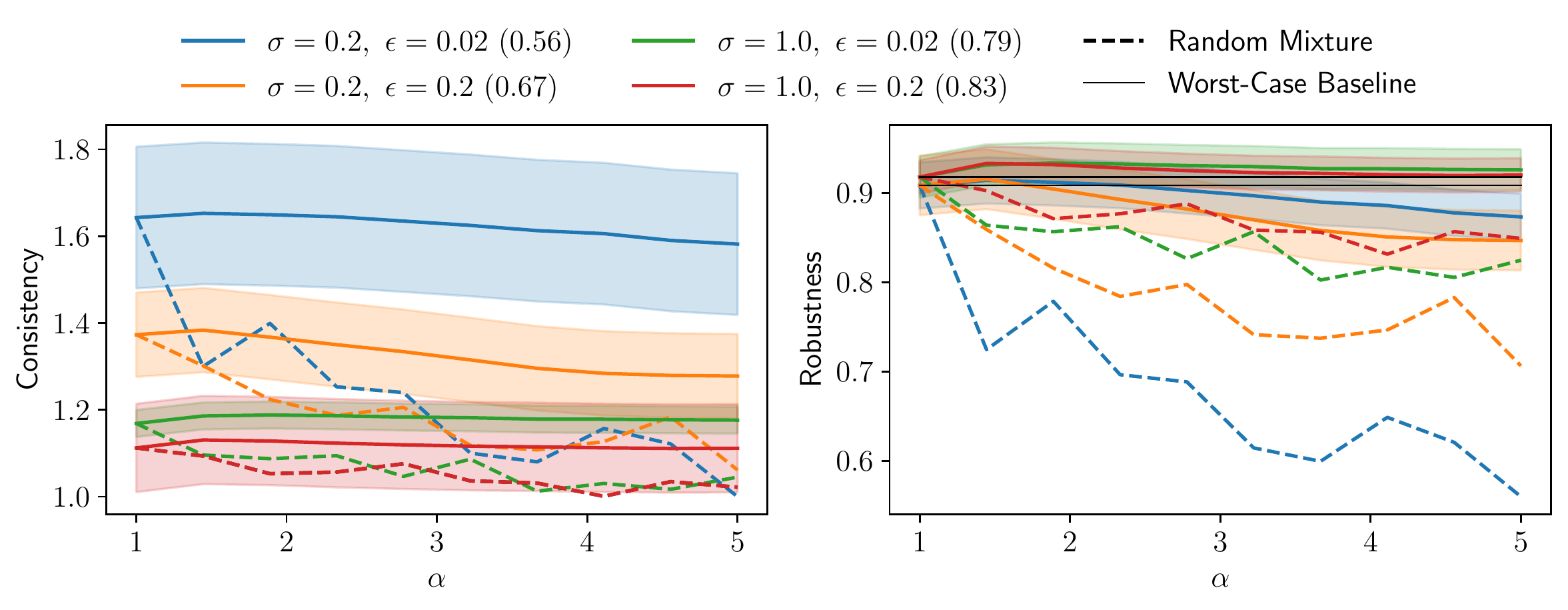}
\par\end{centering}
\caption{\label{fig:synthetic-1-1} Experimental results for varying values
of $\alpha$ on synthetic data with $12$ advertisers and $2000$
impressions of $10$ types, where we report the same quantities as
in Figure \ref{fig:real-world-1}. We use Dual Base predictions for
different $\sigma$ and $\epsilon$. Note that there are two black
lines indicating the performance of the worst-case algorithm without
predictions, corresponding to the datasets with differing $\sigma$. }
\end{figure}

\begin{figure}[t]
\begin{centering}
\includegraphics[width=0.8\linewidth]{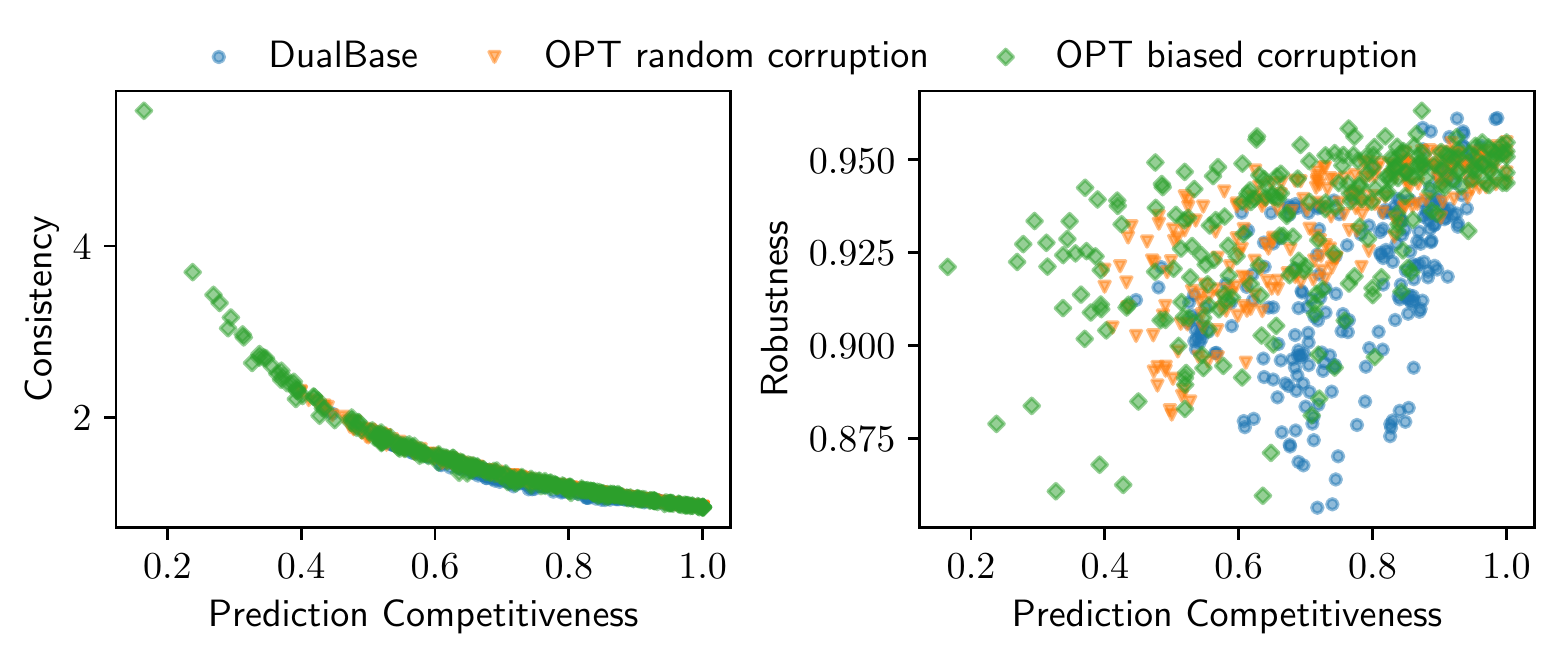}
\par\end{centering}
\begin{centering}
\includegraphics[width=0.8\linewidth]{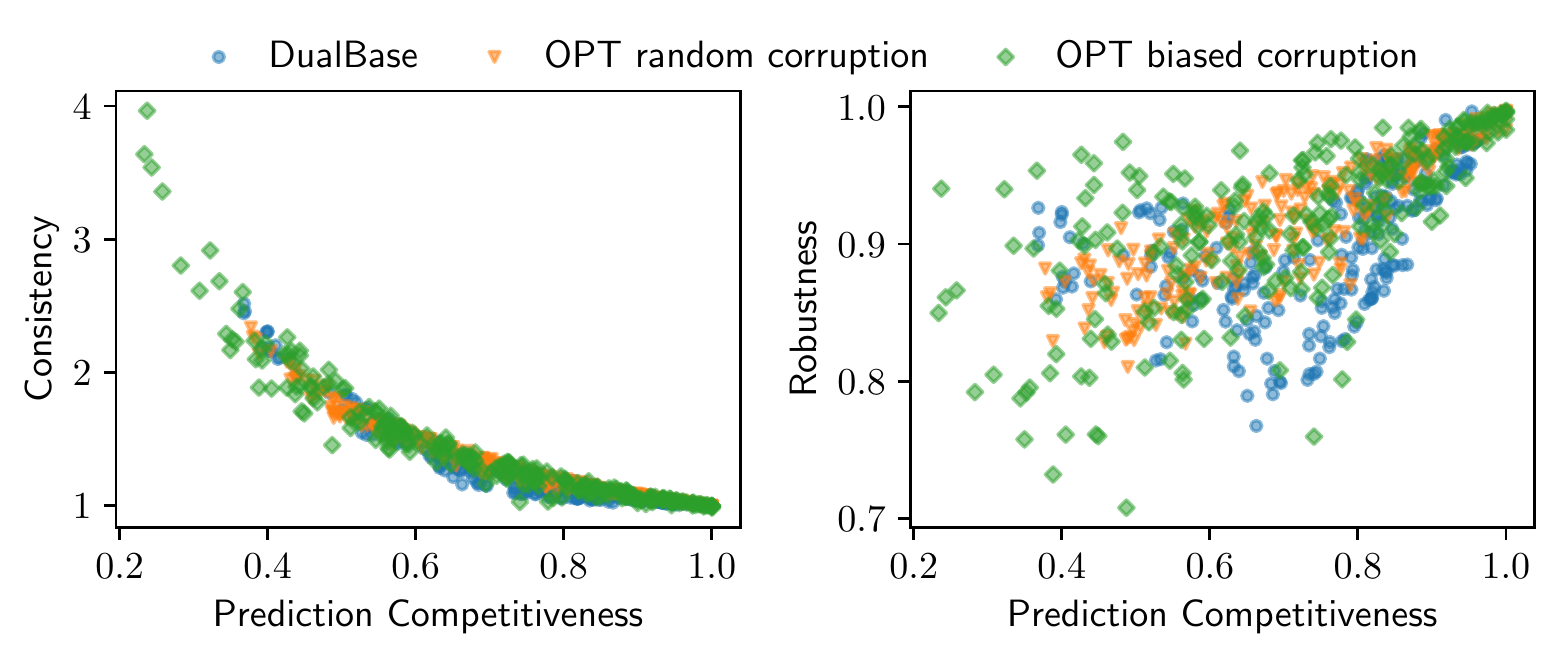}
\par\end{centering}
\caption{\label{fig:synthetic-2-1} Performance for varying prediction quality
with the data from Figure \ref{fig:synthetic-1-1} (top) for $\alpha=2$
(top) and $\alpha=5$ (bottom).}
\end{figure}

\label{sec:apx-exp}

\subsection{Real-World Data}

\label{subsec:yahoo-description}

\paragraph{Description of the Yahoo Dataset}

The original dataset contains impression allocations to 16268 advertisers
throughout 123 days, each tagged with the advertiser that bought the
impression and a set of keyphrases that categorize the user for whom
the impression is displayed. \citet{lavastida21} then consider the
20 most common keyphrases and create an impression type for each non-empty
subset thereof. Whenever an advertiser buys an impression with a certain
set of keyphrases, we assume that all impression types that correspond
to a superset of these keyphrases are relevant for this advertiser,
and that it derives some constant value (say, 1) from this allocation.
At the same time, the number of impressions we create from each impression
type (i.e. the supply) is the number of impression allocations in
the original dataset that show that the impression type is relevant
for an advertiser. As such, we obtain around 2 million impressions.
\citet{lavastida21} try multiple impression orders and budgets for
the advertisers, but due to space constraints we restrict ourselves
to display all impressions of a type at once, in supply-ascending
order. We determine advertisers' budgets by allocating each impression
to one of the advertisers with non-zero valuation uniformly at random
and taking the number of allocated impressions at the end to be the
advertiser's budget.

\subsection{Synthetic Data}

\label{subsec:synthetic}

\paragraph{Results}

Figure \ref{fig:synthetic-1-1-1} shows consistency and robustness
of our algorithm on synthetic data on $\numimps=2000$ impressions
of 10 types and $k=12$ advertisers, for a variation of predictions.
The plot shows the performance for predictions from the optimum solutions
(with varying corruption) and the dual base prediction. Our algorithm
converges to almost perfect consistency and robustness for $\alpha=10$,
given the optimum solution. At the same time, we observe that the
algorithm is robust against both random and biased corruption, as
the robustness does not drop to the prediction's low competitiveness
of around $0.7$. Furthermore, the algorithm performs well in combination
with the dual base prediction for $\epsilon=0.1$ even though the
first $200$ impressions are clearly not representative of all synthetically
generated impressions.

To investigate the our algorithm in conjunction with an easily available
prediction, we also analyze the behavior of the dual base algorithm
for different values of $\sigma$ and $\epsilon$ in Figure \ref{fig:synthetic-1-1}.
The performance of our algorithm under dual base predictions clearly
improves for increasing values of $\sigma$ as impressions become
more evenly distributed across the day. Generally, sampling more impressions
helps but dual base predictions may also lead to a drop in robustness,
and more samples can even lead to a more adversarial prediction, as
we explore further below. Yet, the robustness does still stays above
the prediction's competitiveness in these cases.

Figure \ref{fig:synthetic-2-1} shows consistency and robustness for
different predictions with varying competitiveness on $\alpha\in\left\{ 2,5\right\} $.
We achieve this by varying the fraction $\epsilon\in\left[0,1\right]$
of samples for the dual base algorithm and the corruption rate $p\in\left[0,1\right]$
for random and biased corruptions. For $\alpha=2$, the consistency
exceeds 1 if the prediction is not very good (competitiveness below
$0.9$). The algorithm is not heavily influenced by a bad prediction
since $\alpha=2$ is low, so the total obtained value remains relatively
constant. For $\alpha=5$, the algorithm might however follow the
bad choices of the prediction, so the competitiveness varies more.
As expected, the average robustness decreases for increasing $\alpha$,
but the dual base prediction starts out with a much lower robustness
than the corrupted predictions. The reason for that is that both the
dual base algorithm and exponential averaging make their decisions
based on the discounted gain. Our algorithm might therefore easily
disregard a corrupted prediction as its discounted gain is low (or
even negative), but the dual base prediction looks like a sensible
choice. The dual base algorithm therefore manages to fool the algorithm
for low $\alpha$, while a biased corruption leads to the worst corruption
for larger values of $\alpha$. 

\paragraph{Hard Instances}

\begin{figure}[t]
\begin{centering}
\includegraphics[width=0.9\linewidth]{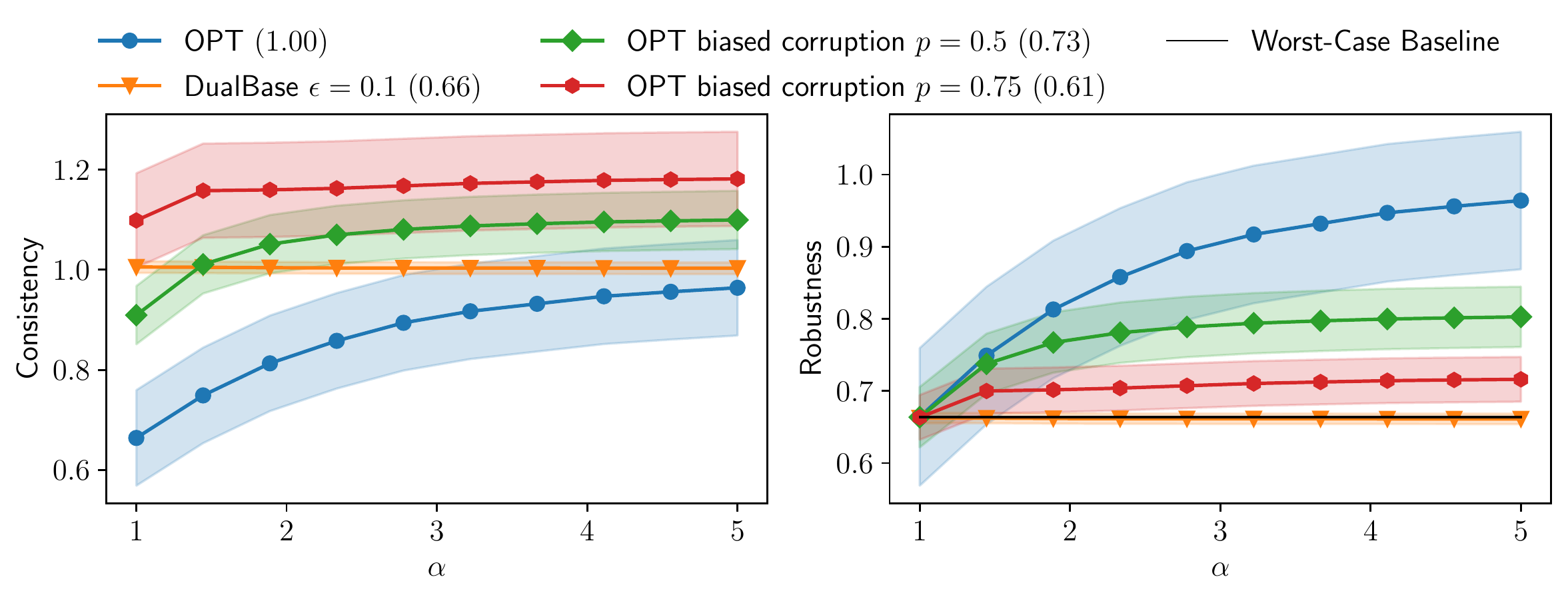}
\par\end{centering}
\caption{\label{fig:worst-case-1} Performance on a worst-case instance with
different predictors.}
\end{figure}

\begin{figure}[tb]
\begin{centering}
\includegraphics[width=0.9\linewidth]{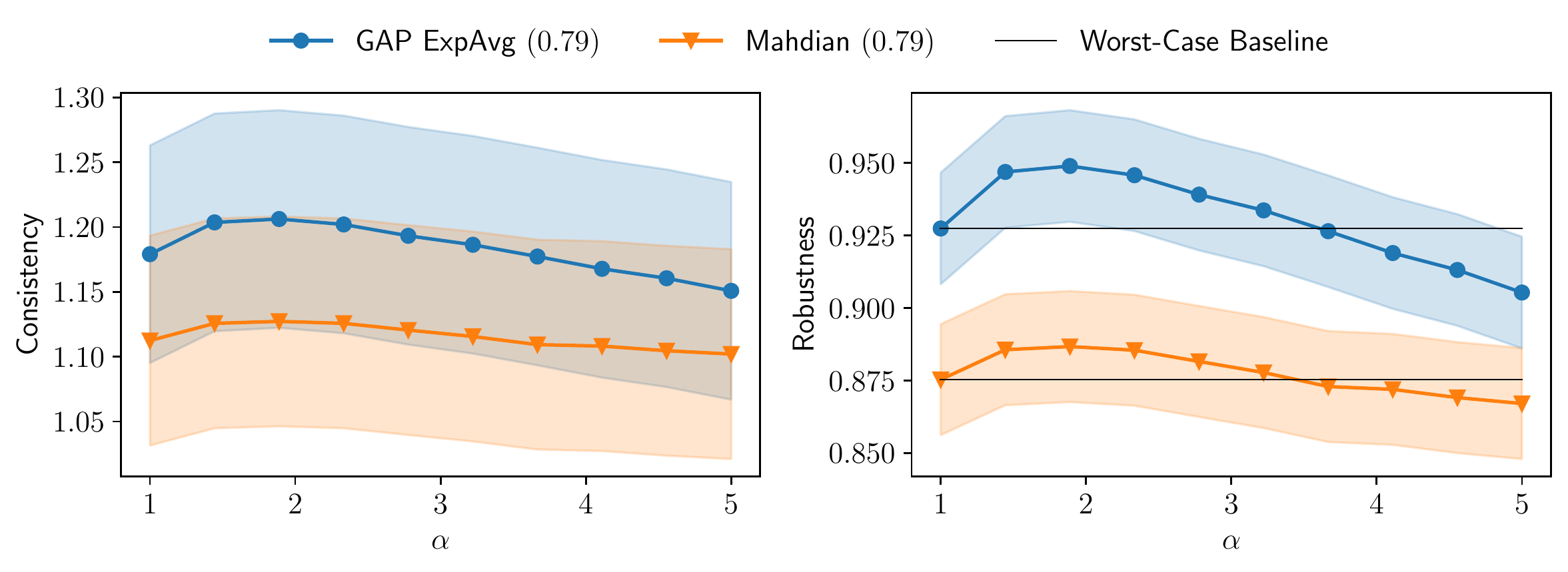}
\par\end{centering}
\caption{\label{fig:adwords} Performance on synthetic Ad Words instances,
compared to the algorithm of \citet{mehta07}. The black lines show
the robustness of two worst-case algorithms without predictions: The
algorithm due to \citet{feldman09} which is the basis for our algorithm,
and the algorithm of \citet{mehta07}, which serves as a basis for
the algorithm of \citet{mahdian07}.}
\end{figure}
We consider the worst-case instance for the Display Ads problem described
in \citet{mehta07}. For $k$ advertisers, we create impressions of
types $r\in\left\{ 1,\dots,k\right\} $. An impression $t$ of type
$r$ has zero value for the first $r-1$ advertisers $w_{1,t}=\cdots=w_{r-1,t}=0$
and value 1 for the following advertisers $w_{r,t}=\cdots=w_{k,t}=1$.
We first show all impressions of type 1, then all impressions of type
2, and so forth. The instance is difficult as the algorithm---not
knowing about future impressions---has to allocate impressions of
a type equally among advertisers that can derive value from this impression
type. As shown by \citet{mehta07}, the competitiveness of the exponential
averaging algorithm reaches $1-\frac{1}{e}$ for $k\to\infty$ on
this instance. 

We evaluate the performance of our algorithm on this worst-case instance
in Figure \ref{fig:worst-case-1}. Providing the optimum solution
as prediction allows the algorithm to quickly ascend to a perfect
robustness of 1. We also consider two (biased) corrupted versions
of this prediction with $p\in\left\{ 50\%,75\%\right\} $. In both
cases, the algorithm still achieves a robustness above the competitiveness
of the prediction. The dual base algorithm cannot deliver meaningful
predictions as it only sees impressions of the first type, which are
clearly not representative of the following impressions by construction.

\subsection{Evaluation of GAP on an Ad Words Instance}

\label{subsec:adwords-gap} With an algorithm for GAP, we can also
solve AdWords instances. This allows us to compare our generalized
algorithm to the algorithm of \citet{mahdian07} under the same predictions.
In Figure \ref{fig:adwords}, we run both algorithms on synthetic
instances from Section \ref{subsec:synthetic} with an optimum prediction
and random corruption ($p=0.5$). Both algorithms seem to have similar
consistency, but our algorithm achieves a better robustness, due to
a different choice of constants in the underlying algorithms.
\end{document}